\documentclass[a4paper,11pt,fleqn]{article}
\usepackage[utf8]{inputenc} 
\usepackage{url}            
\usepackage{amsfonts}       
\usepackage{euscript}
\usepackage{charter}
\usepackage{nicefrac}       
\usepackage{microtype}      
\usepackage{xcolor}         

\usepackage{amsmath,amssymb,amsthm}
\usepackage{thmtools,thm-restate}
\usepackage{enumitem}
\usepackage{graphicx}
\usepackage[ruled,vlined,linesnumbered]{algorithm2e}

\definecolor{labelkey}{rgb}{0,0.08,0.45}
\definecolor{refkey}{rgb}{0,0.6,0.0}
\definecolor{Brown}{rgb}{0.45,0.0,0.05}
\definecolor{dgreen}{rgb}{0.00,0.49,0.00}
\definecolor{dblue}{rgb}{0,0.08,0.75}
\RequirePackage[colorlinks,hyperindex]{hyperref} 
\hypersetup{linktocpage=true,citecolor=dblue,linkcolor=dgreen}

\topmargin     0.0cm
\oddsidemargin -0.2cm 
\textwidth     16.8cm
\headheight    0.0cm
\textheight    22.4cm
\parindent     6mm
\parskip       1pt
\tolerance     1000

%

\numberwithin{equation}{section}

\DeclareMathOperator*{\argmin}{\ensuremath{\mathrm{arg\,min}}}
\DeclareMathOperator*{\argmax}{\ensuremath{\mathrm{arg\,max}}}

\newcommand{\dom}{\ensuremath{\text{\rm dom}\,}}
\newcommand{\range}{\ensuremath{\text{\rm Im}}}
\newcommand{\inte}{\ensuremath{\text{\rm int}}}

\newcommand{\Ker}{\ensuremath{\text{\rm Ker}}}

\newcommand{\KL}{\ensuremath{\mathsf{KL}}}
\newcommand{\Ent}{\ensuremath{\mathsf{H}}}
\newcommand{\sgn}{\ensuremath{\text{\rm sgn}}}

\newcommand{\Pc}{{\cal P}}
\newcommand{\Xx}{{\mathbb X}}

\newcommand{\X}{X}
\newcommand{\Y}{Y}

\newcommand{\Cc}{{\cal C}}
\newcommand{\Ic}{{\cal I}}

\newcommand{\R}{\mathbb R}
\newcommand{\extR}{\left]-\infty,+\infty\right]}

\newcommand{\N}{\mathbb N}

\newcommand{\M}{\mathsf R}

\newcommand{\Xs}{\pi}
\newcommand{\vs}{\mathsf{v}}
\newcommand{\us}{\mathsf{u}}

\newcommand{\ps}{\mathsf{p}}
\newcommand{\rs}{\mathsf{r}}
\newcommand{\As}{\mathsf{A}}

\newcommand{\Ss}{\mathsf{S}}
\newcommand{\Ks}{\xi}
\newcommand{\reg}{\eta}
\newcommand{\Cs}{\mathsf{C}}
\newcommand{\Vs}{\mathsf{V}}
\newcommand{\as}{\mathsf{a}}
\newcommand{\xs}{\pi}
\newcommand{\ys}{\gamma}

\newcommand{\bigO}{{\cal O}}

\newcommand{\Leg}{\phi}

\providecommand{\norm}[1]{\lVert#1\rVert}
\providecommand{\scalarp}[1]{\langle#1\rangle}
\providecommand{\abs}[1]{\lvert#1\rvert}

\newtheorem{theorem}{Theorem}[section]

\newtheorem{lemma}[theorem]{Lemma}
\newtheorem{fact}[theorem]{Fact}
\newtheorem{proposition}[theorem]{Proposition}

\newtheorem{remark}[theorem]{Remark}

\usepackage[textwidth=2.0cm, textsize=tiny]{todonotes} 

\title{ { \sffamily Convergence of Batch Greenkhorn \\ for Regularized Multimarginal Optimal Transport} } 

\author{Vladimir Kostic\thanks{Istituto Italiano di Tecnologia, Via Melen, 83,  
		16152 Genova, Italy 
		({\tt vladimir.kostic@iit.it}) and Department of Mathematics and Informatics, Faculty of Science, University of Novi Sad, Trg Dositeja Obradovića 4, 21000 Novi Sad, Serbia.} \and Saverio Salzo\thanks{Istituto Italiano di Tecnologia, Via Melen, 83,  
        16152 Genova, Italy ({\tt saverio.salzo@iit.it}) and Department of Computer Science, University College London.}
        \and Massimilano Pontil\thanks{Istituto Italiano di Tecnologia, Via Melen, 83,  
        16152 Genova, Italy ({\tt massimilano.pontil@iit.it}) and Department of Computer Science, University College London.}}
\date{}

\begin{document}
\maketitle

\begin{abstract}
In this work we propose a batch version of the Greenkhorn
algorithm for multimarginal regularized optimal transport problems.
Our framework is general enough
to cover, as particular cases, some existing algorithms 
like Sinkhorn and Greenkhorn algorithm for the bi-marginal setting, and (greedy) MultiSinkhorn for multimarginal optimal transport.
We provide a complete convergence analysis,
which is based on the properties of
the iterative Bregman projections (IBP) method with greedy control.
Global linear rate of convergence 
and explicit bound on the iteration complexity are obtained.
When specialized to above mentioned algorithms,
our results give new insights and/or improve
existing ones.
\end{abstract}

 \vspace{1ex}

\section{Introduction}
Over the recent years the field of optimal transport  (OT)~\cite{Vil2009} has received significant attention in machine learning and data science, as it provides natural and powerful tools to compare probability distributions. In this paper we study a general class of OT problems known as multimarginal optimal transport (MOT), whereby several probability distributions are coupled together in order to compute a measure of their association, see e.g. \cite{Pas2015, BCCN2017}. 
MOT is receiving increasing interest due to its numerous applications, ranging from density functional theory in quantum chemistry, to fluid dynamics, to economics, to image processing, just to name a few, see \cite{PC2019} and 
references therein. Particularly, in machine learning, MOT is important for generative adversarial networks (GANs) \cite{CMZJST2019}, domain adaptation \cite{HZKSC2019}, Wasserstein barycenters \cite{AC2011}, 
clustering \cite{MB2021}, and  Bayesian inference of joint distributions \cite{FP2019},
among others.

We focus on discrete MOT, in which, given $m$ finitely supported probability distributions, we wish to compute an optimal joint distribution which solves a linear program, whose objective function  
involves a cost tensor and the constraint set requires the joint distribution to have the given individual ones as its marginals. It is well known that addressing directly the MOT problem is computationally prohibitive. Furthermore, unlike the bi-marginal case, MOT is NP-Hard for certain costs, even approximately \cite{AB2020}. To overcome this issue, regularization techniques have been widely considered. The key insight is to add a strongly convex regularizer to the MOT objective. A popular choice in the bi-marginal setting is entropic regularization, which leads to the well-known Sinkhorn divergence.  
However, the study of computational regularized multimarginal optimal transport (RMOT) is less developed, and the objective of this work is to devise efficient algorithms with convergence guarantees.

\paragraph{Related work.} Two popular frameworks for solving regularized OT problems are iterative Bregman projections (IBP) and alternating dual minimization. Both have been extensively studied in the bi-marginal case and two popular offsprings include Sinkhorn and Greenkhorn algorithms. In contrast, the theory of general multimarginal OT is less understood. The multimarginal version of Sinkhron algorithm was first proposed by \cite{BCCNP2015} and its convergence was established within the framework of \emph{cyclic IBP} in finite dimension. More recently, the work \cite{DG2020} extended the result to infinite dimension and in \cite{Car2021} even global linear rate of convergence was obtained. On the other hand, the work \cite{LHCJ2020} proposed a version of  \emph{multimarginal Sinkhorn} where, at each iteration, the marginal to project on is chosen in a \textit{greedy} way. In the same work acceleration using Nesterov's estimate sequences was also proposed and near-linear time computational complexity bounds were obtained. Finally, alternating dual minimization framework was used in \cite{TDGU2020} with Nesterov's momentum acceleration to build a competitive algorithm against the greedy multimarginal Sinkhorn. Both papers \cite{LHCJ2020,TDGU2020} derived sub-linear computational complexity bounds for the considered RMOT algorithms, but no linear convergence rates were studied.

\paragraph{Contribution.} After the introduction of RMOT as a Bregman projection problem in Sec. \ref{sec:ermot}, in Sec. \ref{sec:bg} we propose a batch version of the popular Greenkhorn algorithm for RMOT that greedily selects at each iteration a marginal and a batch of its components of prescribed sizes. In Sec. \ref{sec:conv}, we establish its linear convergence, providing also explicit rates depending on problem data in two important cases: 1) batch Greenkhorn for bi-marginal optimal transport and 2) (greedy) MultiSinkhorn of \cite{LHCJ2020}. Moreover, in these two cases we also provide iteration complexity bounds that improve state-of-the-art results.
We stress that the explicit rate we obtain for the greedy multimarginal Sinkhorn is strictly better than the one recently derived in \cite{Car2021}, for cyclic multimarginal Sinkhorn. This shows the effectiveness of the greedy option for speeding up the convergence of this type of algorithms.

\paragraph{Notation.} Given $n\in\N$, let $[n]:=\{1,2,\ldots,n\}$, and let $\Delta_n = \{x\in\R_+^n \,\vert\, \norm{x}_1 = 1\}$ be the unit simplex. Next, let $m\geq2$ and $n_1, \dots, n_m\in\N$. The space of tensors (of order $m$) is denoted by $\Xx := \R^{n_1\times\cdots\times n_m}$. 
For the sake of brevity we set $\mathcal{J} = [n_1]\times \cdots \times [n_m]$ and we denote by $j = (j_1, \dots, j_m)$ a general multi-index in $ \mathcal{J}$. 
We  set $\Xx_+ = \{ \xs \in \Xx \,\vert\, \xs_j \geq 0, \text{ for every } j \in \mathcal{J} \}$ and 
$\Xx_{++} = \{ \xs \in \Xx \,\vert\, \xs_j > 0, \text{ for every } j \in \mathcal{J} \}$.
We need also to consider multi-indexes without the $k$-th index, so we define $\mathcal{J}_{-k} = [n_1] \times\dots\times [n_{k-1}]\times[n_{k+1}]\times\dots\times[n_m]$ and $j_{-k}$ a general multi-index in $\mathcal{J}_{-k}$.
We will identify $\mathcal{J}$ with $\mathcal{J}_{-k} \times [n_k]$
via the mapping $j\ \leftrightarrow\ (j_{-k}, j_k)$ with $j_{-k} = (j_1,\dots, j_{k-1}, j_{k+1}, \dots, j_m)$.
The space $\Xx$ is an Euclidean space when endowed with the standard scalar product and norm, defined as
\begin{equation}
\scalarp{\xs,\xs^\prime}=
\sum_{j \in \mathcal{J}} \xs_j
\quad \norm{\xs}^2 =
\sum_{j \in \mathcal{J}}
\xs_j^2
\qquad\forall\,\xs,\xs^\prime\in\Xx. 
\end{equation}
For every $\xs,\xs^\prime$ we denote by $\xs\odot\xs^\prime \in \Xx$, the Hadamard product of $\xs$ and $\xs^\prime$,
that is, $(\xs\odot\xs^\prime)_{j} = \xs_{j} \xs^\prime_{j}$. Moreover, 
if $\mathsf{v}_1 = (v_{1,j_1})_{j_1 \in [n_1]} \in \R^{n_1}, \dots, \mathsf{v}_m = (v_{m, j_m}))_{j_m \in [n_m]} \in \R^{n_m}$,
we set $\oplus_{k=1}^m \mathsf{v}_k \in \Xx$ such that
$(\oplus_{k=1}^m \mathsf{v}_k)_{j} = \sum_{k=1}^m v_{k,j_k}$, and $\otimes_{k=1}^m \mathsf{v}_k \in \Xx$ such that
$(\otimes_{k=1}^m \mathsf{v}_k)_{j} = \prod_{k=1}^m v_{k,j_k}$
For a function $\phi\colon \Xx\to \left]-\infty,+\infty\right]$, the Fenchel conjugate is denoted by $\phi^*$. Finally, as usual, the Dirac measure at $x$
is denoted by $\delta_x$.

\section{Entropic RMOT as the Bregman projection problem}\label{sec:ermot}

In this section we formally introduce the discrete multimarginal optimal transport problem and its regularized versions, emphasizing its connection with the Bregman projection problem.

Let $\as_k \in \Delta_{n_k}$, $k\in [m]$ be prescribed histograms. MOT consists in solving a linear program 
  \begin{equation}
 \label{MOT}
 \min_{\Xs\in \Pi(\as_1, \dots, \as_m)} \scalarp{\Cs,\Xs},
 \end{equation}
where $\Cs \in \Xx$ is a given cost tensor and $\Pi(\as_1, \dots, \as_m)$, called \emph{transport polytope}, is the covex set of nonnegative tensors in $\Xx$ whose marginals are 
$\as_1,\ldots,\as_m$. More specifically
\begin{equation}
\label{eq:transpoly}
\Pi(\as_1,\dots,\as_m) 
= \big\{\Xs\in \Xx_+\,\big\vert\, \M_k(\Xs)=\as_k, \text{ for every } k \in [m]
\big\},
\end{equation}
where for all $k \in [m]$, we denote by $\M_k \colon \Xx \to \R^{n_k}$ the \emph{$k$-th push-forward projection operator}, defined as
\begin{equation}
\label{eq:pushforw}
(\forall\, j_k\in [n_k]) \quad\M_k(\pi)_{j_k} = 
\sum_{j_{-k} \in \mathcal{J}_{-k}} \xs_{(j_{-k}, j_k)},
\end{equation}
so that $\M_k(\pi)$ is the $k$-th marginal of $\pi$.

As noted in the introduction, problem \eqref{MOT} may be hard to solve. We thus consider a regularized version using the (negative) Boltzmann-Shannon entropy $\Ent(\pi):= \sum_{j} \xs_{j} (\log \xs_{j} -1)$, $\xs\in\dom \Ent = \Xx_+$.
For a cost tensor $\Cs \in \Xx_+$, the regularized multimarginal optimal transport (RMOT) problem consists in computing
 \begin{equation}
 \label{RMOT}
 \Xs^\star = 
 \argmin_{\Xs\in \Pi(\as_1,\dots,\as_m)} 
 \scalarp{\Cs,\Xs} + \reg \Ent(\Xs),
 \end{equation}
where $\reg>0$ is a regularization parameter. 
Now, recalling that the Kulback-Leibler (KL) divergence $\KL\colon\Xx\times\Xx\to[0,+\infty]$ is the \emph{Bregman distance} associated to the (negative) entropy $\Ent$, 
i.e.,
\begin{align}
\label{KLdiv}
\nonumber
\KL(\Xs,\Xs^\prime) &=
\begin{cases}
\Ent(\Xs)-\Ent(\Xs^\prime)-\scalarp{\Xs-\Xs^\prime,\nabla \Ent(\Xs^\prime)}&\text{if } \Xs^\prime \in \inte(\dom \Ent)\\
+\infty &\text{otherwise}
\end{cases}\\[1ex]
&= \begin{cases}
\sum_{j \in \mathcal{J}} \Xs_j \log (\Xs_j/\Xs^\prime_j) - \Xs_j + \Xs^\prime_j
&\text{if } \Xs^\prime \in \Xx_{++}, \Xs \in \Xx_{+}\\
+\infty &\text{otherwise},
\end{cases}
\end{align} 
and considering that $\dom \Ent = \Xx_+$,
it easy to check that  \eqref{RMOT} can be re-written as
 \begin{equation}
 \label{eq:RMOT_BP}
 \Xs^\star =\argmin_{\M_k(\Xs) = \as_k, k \in [m]} \KL(\Xs,\Ks), 
 \end{equation}
where $\Ks = \nabla\Ent^*(- \Cs / \reg) = \exp (- \Cs / \reg) \in \Xx_+$ is the \emph{Gibbs kernel} tensor. Hence, the solution of \eqref{eq:RMOT_BP} is nothing but the \emph{KL projection of $\Ks$} onto the affine set  $\{\Xs \in \Xx \,\vert\, (\forall\, k \in [m])\  \M_k(\Xs) = \as_k\}$. 
Indeed, in general, for an arbitrary closed convex set $\Cc\subset \Xx$ 
such that $\Cs\cap \Xx_{++} \neq \varnothing$ and a point $\pi\in\Xx_{++}$, the KL projection of $\pi$ onto $\Cc$ is defined as
\begin{equation}
\label{eq:BregProj}
\Pc_\Cc(\xs) := \argmin_{\gamma\in\Cc}\KL(\gamma,\pi),
\end{equation}
while its KL distance to $\Cc$ is defined as $\KL_{\Cc}(\Xs):= \KL(\Pc_{\Cc}(\Xs),\Xs)$.

\section{Greedy KL projections for entropic RMOT}\label{sec:bg}
Now we focus on the entropic-regularized MOT problem \eqref{RMOT},
in the equivalent form \eqref{eq:RMOT_BP}. 
Recalling definition \eqref{eq:pushforw}, we
introduce the linear operator
\begin{equation}
\label{eq:Roper}
\M\colon\Xx \to \R^{n_1}\times\cdots\times\R^{n_m},
\qquad \M(\xs)=(\M_1(\xs),\dots,\M_m(\xs)).
\end{equation}
Then, the constraints in \eqref{eq:RMOT_BP} define the affine set
\begin{equation}
\label{eq:20211128l}
\Pi = \{ \Xs\in \Xx \;\vert\; \M(\xs) = (\as_1,\dots, \as_m) \}.
\end{equation}
We recall that the positiveness constrain embodied in problem \eqref{RMOT} can be absorbed in the entropy function $\Ent$, since $\dom \Ent = \Xx_+$, so that problem \eqref{RMOT} is equivalent to the computation of the KL projection of the Gibbs kernel $\xi$ onto the affine set $\Pi$. However, it is possible to prove (see equation \eqref{eq:same_proj} in Appendix~\ref{app:bregman}) that
$\Pc_{\Pi}(\xi) = \Pc_{\Pi}(\xi \odot \otimes_{k=1}^m \as_k)$.
Therefore, we will equivalently target the computation of the KL projection of 
the normalized Gibbs kernel
$\xi \odot \otimes_{k=1}^m \as_k$.
In order to compute such projection, in the following, we will represent
the set $\Pi$ as an intersection of simpler affine sets
and then we will use the alternating KL projection algorithm. More precisely, we will rewrite the set \eqref{eq:20211128l}  as an intersection of affine sets, obtained via specific \emph{sketching}, on which KL projections have closed forms. 

We consider sketches that cover several popular algorithms (multimarginal Sinkhorn of \cite{BCCNP2015}, greedy MultiSinkhorn of \cite{LHCJ2020}, bi-marginal Greenkhorn of \cite{AWR2017}). 
To that purpose, for each $k \in [m]$ (which refers to the $k$-th marginal) and each batch $L \subset [n_k]$
we consider the \emph{canonical injection}
\begin{equation*}
\Ss_{(k,L)}\colon\R^L \to \R^{n_1}\times \cdots\times\R^{n_m} 
\end{equation*}
 of $\R^L$  into $\R^{n_1}\times \cdots\times\R^{n_m}$, meaning that for each 
 $\us=(u_{j_k})_{j_k \in L} \in \R^L$,
$\Ss_{(k,L)}\us$ is the vector of $\R^{n_1}\times \cdots\times\R^{n_m}$
obtained from the completion of $\us$ with zero entries. Then, since the adjoint operator 
$\Ss^*_{(k,L)}\colon \R^{n_1}\times \cdots\times\R^{n_m} \to \R^L$ is the standard projection, we can define 
\begin{equation}
\label{eq:Rkloper} 
\M_{(k,L)} := \Ss^*_{(k,L)} \M \colon \Xx \to \R^L,
\end{equation}
and the set
\begin{equation}
\label{eq:transpoly2}
\begin{aligned}
\Pi_{(k,L)} :&= \{\Xs \in \Xx \,\vert\, \Ss_{(k,L)}^* \M(\xs) = \Ss_{(k,L)}^*(\as_1,\dots, \as_m)\}\\
&= \{\Xs \in \Xx \,\vert\,  \M_{(k,L)}(\xs) = {\as_k}_{\lvert L}\}\\
&= \{\Xs \in \Xx \,\vert\,  (\M_k(\xs))_{\lvert L} = {\as_k}_{\lvert L}\}.
\end{aligned}
\end{equation}
Note that in the definition of $\Pi_{(k,L)}$
we ask for the $k$-th marginal of $\Xs$ to be equal
to $\as_k$ only on the components in $L$. 
Now, given $\tau = (\tau_k)_{1 \leq k \leq m}$ a vector of batch sizes, we  set
\begin{equation*}
\Ic(\tau) = \{(k,L)\,\vert\,k\in[m], L\subset [n_k]\;\vert\; |L|\leq\tau_k\}    
\end{equation*}
and obtain $\Pi = \bigcap_{(k,L)\in\Ic(\tau)} \Pi_{(k,L)}$.

Hence, the \emph{iterative Bregman projections} (IBP) algorithm, for problem \eqref{eq:RMOT_BP}, leads to the following procedure. Let $\xs^0 = $ $\xi \odot \otimes_{k=1}^m \as_k = e^{-\Cs/\eta} \odot \otimes_{k=1}^m \as_k \in \inte(\dom \Ent) = \Xx_{++}$ and define the sequence $\xs^{t}$ recursively as
follows
\begin{equation}\label{eq:IBP}
\begin{array}{l}
\text{for}\;t=0,1,\ldots\\
\left\lfloor
\begin{array}{l}
\text{choose } (k_t,L_t) \in \Ic(\tau), \\
\xs^{t+1} = \Pc_{\Pi_{(k_t,L_t)}}(\xs^{t}).
\end{array}
\right.
\end{array}
\end{equation}
Since the generalized Pythagora's theorem for Bregman projections
(see equation \eqref{eq:Pytha} in Appendix~\ref{app:bregman}) yields that $\KL_{\Pi}(\xs^t) 
= \KL_{\Pi_{(k,L)}}(\xs^t) + \KL_{\Pi}(\Pc_{\Pi_{(k,L)}}(\xs^t))$, in \eqref{eq:IBP} one may choose the sets in a \emph{greedy} manner as 
\begin{equation}\label{eq:greedy}
(k_t,L_t) = \argmax_{(k,L)\in\Ic(\tau)}\KL_{\Pi_{(k,L)}}(\xs^{t}),
\end{equation}
so that
\begin{equation}\label{eq:greedy2}
(k_t,L_t)  = \argmin_{(k,L)\in\Ic(\tau)}\KL_{\Pi}( \Pc_{\Pi_{(k,L)}}(\xs^t) )
\,\text{ and }\, \KL_{\Pi}(\xs^{t+1}) = \min_{(k,L)\in\Ic(\tau)}\KL_{\Pi}( \Pc_{\Pi_{(k,L)}}(\xs^t) ).
\end{equation}
This means that the next iterate is chosen, among the possible projections, 
as the one which is
the closest to the target set $\Pi$. Notable examples of existing algorithms that fit in this framework are greedy multimarginal Sinkhorn of 
\cite{LHCJ2020} ($\tau_k = n_k$) and bi-marginal Greenkhorn of \cite{AWR2017} ($m=2$, $\tau_1 = \tau_2 = 1$). We emphasize that 
this greedy strategy
typically leads to the best performance, provided that it can be implemented efficiently. 
In the following proposition and subsequent remak we show that
the projection onto the sets 
$\Pi_{(k,L)}$ can be computed in a closed form and that the greedy choice of the sets $\Pi_{(k,L)}$'s can indeed be implemented efficiently. 
The proof is postponed in Appendix~\ref{app:alg}.

\begin{restatable}{proposition}{propKLproj}\label{prop:KL_proj}
For every $\Xs\in \Xx_+$, $k\in[m]$ and $L\subset[n_k]$,  
\begin{equation}\label{eq:KL_proj_def}
\Pc_{\Pi_{(k,L)}}(\Xs) = 
\nabla\Ent^*\big( \nabla\Ent(\pi) + \M_{(k,L)}^*(\bar{\us}) \big) 
= \pi \odot \exp \big(\M^*_{(k,L)}(\bar{\us}) \big),
\end{equation}
where
\begin{equation}\label{eq:dual_param}
\bar{\us} = \argmin_{\us\in\R^L} 
\Ent^*\big( \nabla\Ent(\pi) + \M^*_{(k,L)}(\us) \big) 
- \Ent^*( \nabla\Ent(\pi)) - \scalarp{{\as_k}_{\lvert L},\us } 
= \log \frac{ {\as_k}_{\lvert L} }{{\M_k(\pi)}_{\lvert L} },
\end{equation}
and, consequently, for every $j \in \mathcal{J}$,
\begin{equation}\label{eq:KL_proj}
(\Pc_{\Pi_{(k,L)}}(\Xs))_{j} = 
\xs_{j} \times
\begin{cases}
\dfrac{a_{k,j_k}}{{\M_{k}(\xs)}_{j_k}} &\text{if } j_k \in L,  \\[2ex]
1 & \text{otherwise}.
\end{cases}
\end{equation}
Moreover,
\begin{equation}\label{eq:KL_itstep}
\KL_{\Pi_{(k,L)}} (\xs)
=\KL\big( \Pc_{\Pi_{(k,L)}} (\Xs),\Xs) 
= \mathsf{KL} ({\as_k}_{\lvert L}, \M_k(\xs)_{\lvert L}).
\end{equation}
\end{restatable}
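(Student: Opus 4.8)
The plan is to compute the KL projection onto $\Pi_{(k,L)}$ explicitly by exploiting the affine structure of $\Pi_{(k,L)}$ and the standard duality for Bregman projections onto affine sets. First I would recall that $\Pi_{(k,L)}$ is defined by the linear constraint $\M_{(k,L)}(\Xs) = {\as_k}_{\lvert L}$, so by the general formula for Bregman (KL) projections onto an affine set of the form $\{\Xs \mid A\Xs = b\}$ with $A = \M_{(k,L)}$ (see the Bregman-projection facts collected in Appendix~\ref{app:bregman}), the projection is given by $\Pc_{\Pi_{(k,L)}}(\pi) = \nabla\Ent^*\big(\nabla\Ent(\pi) + \M^*_{(k,L)}(\bar\us)\big)$, where $\bar\us$ is the maximizer of the dual problem, which is exactly the minimization in \eqref{eq:dual_param}. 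Since $\nabla\Ent(\pi) = \log\pi$ componentwise and $\nabla\Ent^* = \exp$ componentwise, this immediately rewrites as $\pi \odot \exp(\M^*_{(k,L)}(\bar\us))$, giving the two expressions in \eqref{eq:KL_proj_def} once $\bar\us$ is identified.

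The key computation is therefore to solve the inner dual problem in \eqref{eq:dual_param}. I would write out the objective: with $\nabla\Ent(\pi) = \log\pi$, the term $\Ent^*(\log\pi + \M^*_{(k,L)}(\us)) = \sum_{j} \pi_j \exp((\M^*_{(k,L)}(\us))_j)$. Now the crucial structural observation is that the adjoint $\M^*_{(k,L)} = \M^* \Ss_{(k,L)}$ acts by $(\M^*_{(k,L)}(\us))_j = u_{j_k}$ if $j_k \in L$ and $0$ otherwise; this follows from the fact that $\M_k$ is a summation over $j_{-k}$ so its adjoint broadcasts a vector indexed by $[n_k]$ constantly along the remaining axes, and $\Ss_{(k,L)}^*$ restricts to coordinates in $L$. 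Plugging this in, the objective separates over $j_k \in L$ into $\sum_{j_k \in L} \big(\M_k(\pi)_{j_k}\, e^{u_{j_k}} - a_{k,j_k} u_{j_k}\big)$ plus constants, which is strictly convex and coercive in $\us$; setting the gradient to zero gives $\M_k(\pi)_{j_k} e^{u_{j_k}} = a_{k,j_k}$, i.e. $\bar u_{j_k} = \log(a_{k,j_k}/\M_k(\pi)_{j_k})$, which is \eqref{eq:dual_param}. Substituting $\bar\us$ back into \eqref{eq:KL_proj_def} yields the componentwise formula \eqref{eq:KL_proj}: for $j_k \in L$ the factor is $e^{\bar u_{j_k}} = a_{k,j_k}/\M_k(\pi)_{j_k}$, and for $j_k \notin L$ it is $1$.

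Finally, for \eqref{eq:KL_itstep}, I would substitute $\gamma := \Pc_{\Pi_{(k,L)}}(\pi)$ into the definition $\KL(\gamma,\pi) = \sum_j \gamma_j \log(\gamma_j/\pi_j) - \gamma_j + \pi_j$. Using \eqref{eq:KL_proj}, $\gamma_j/\pi_j$ equals $a_{k,j_k}/\M_k(\pi)_{j_k}$ when $j_k\in L$ and $1$ otherwise, so the $\log$ term vanishes for $j_k\notin L$ and the $-\gamma_j+\pi_j$ terms also cancel there; for $j_k \in L$ I would sum over $j_{-k}$ first, using $\sum_{j_{-k}} \gamma_{(j_{-k},j_k)} = \M_k(\gamma)_{j_k} = a_{k,j_k}$ (since $\gamma \in \Pi_{(k,L)}$) and $\sum_{j_{-k}} \pi_{(j_{-k},j_k)} = \M_k(\pi)_{j_k}$, obtaining $\sum_{j_k\in L}\big(a_{k,j_k}\log(a_{k,j_k}/\M_k(\pi)_{j_k}) - a_{k,j_k} + \M_k(\pi)_{j_k}\big) = \KL({\as_k}_{\lvert L}, \M_k(\pi)_{\lvert L})$. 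The main obstacle, such as it is, is bookkeeping: correctly identifying the action of the adjoint $\M^*_{(k,L)}$ and carefully tracking which coordinates contribute to the sum, together with a clean justification (via strict convexity and coercivity, or directly from the first-order condition) that the stated $\bar\us$ is the unique minimizer; the rest is routine substitution into the KL formula.
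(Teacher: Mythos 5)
Your proposal is correct and follows essentially the same route as the paper: the affine Bregman-projection duality/KKT facts from Appendix~\ref{app:bregman}, the computation of the adjoint $\M^*_{(k,L)}$ as a broadcast on the coordinates with $j_k\in L$, and direct substitution to get \eqref{eq:KL_proj} and \eqref{eq:KL_itstep}. The only cosmetic difference is that you identify $\bar\us$ by writing out and minimizing the separable dual objective, while the paper reads it off from the primal feasibility condition $\M_{(k,L)}(\bar\xs)=\as_{k\lvert L}$ in the KKT system --- the same first-order computation in different clothing.
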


\begin{remark}\label{rmk:20211203a}
\normalfont
It follows from \eqref{eq:KL_itstep} that the greedy choice described above
can be implemented by computing $m$ vectors of sizes 
$n_k$ and then choosing $k_t$ among $m$ as the index of the vector that has the maximal sum of the largest $\tau_k$
components. More formally one 
let $\mathsf{d}_k = (\KL(a_{k,1}, (\M_k(\xs^t))_1), \dots, 
\KL(a_{k,n_k}, (\M_k(\xs^t))_{n_k})) \in \R^{n_k}$ and consider the vector 
$\mathsf{d}_{k\downarrow} \in \R^{n_k}$ which has the components of $\mathsf{d}_k$ arranged in a decreasing order. Then $k_t = \argmax_{k \in [m]} \big(\sum_{j_k=1}^{\tau_k} (\mathsf{d}_{k\downarrow})_{j_k} \big)$ and $L_t$ corresponds to the indexes of the largest $\tau_{k_t}$ components of $\mathsf{d}_{k_t}$.
\end{remark}

To ensure better numerical stability, especially for small regularization parameters $\reg$, it is more convenient to work with the dual variables $\nabla\Ent(\Xs) = \log(\Xs)$. More precisely, according to \eqref{eq:KL_proj_def} we have that each iteration of the IBP algorithm \eqref{eq:IBP} can be parameterized as 
\begin{equation}
\label{eq:primaldual}
\Xs^{t} = \exp\Big( - \Cs / \reg + \bigoplus_{k=1}^m \vs_k^t  \Big)
\odot\bigotimes_{k=1}^m \as_k,\;t\in \N,
\end{equation}
and one can implement the algorithm by updating only the dual variables $\vs_k^{t} = (v^t_{k,j})_{1 \leq j\leq n_k} \in\R^{n_k}$, $k\in[m]$, 
also known as \emph{potentials} (see Proposition~\ref{prop:primaldual}).
Thus, in the end, the IBP algorithm \eqref{eq:IBP}-\eqref{eq:greedy} can be written as the following Algorithm \ref{alg}.
 
\begin{algorithm}[ht!]\label{alg}  \caption{$\mathtt{BatchGreenkhorn}(\as_1,\dots,\as_m,\Cs,\reg,\tau)$}
\SetAlgoLined
\KwIn{$(\as_1,\dots,\as_m)$, $\Cs\in\Xx_+$, $\reg>0$, $\tau=(\tau_1,\dots, \tau_m)$, $1 \leq \tau_k \leq n_k$}\vspace{.1truecm}
\textbf{Initialization:} $t=0$, $\vs^{t} _k = 0$, $k\in[m]$, $\mathsf{r}_k^{t} = \M_k(\exp( - \Cs/\reg +\oplus_{k=1}^m\vs_k^t) \odot\otimes_{k=1}^m \as_k)$ \\ \vspace{.1truecm}
\For{
$t=0,1,\ldots$
}{
Compute $(k_t,L_t) = \argmax_{(k,L)\in\Ic(\tau)} \KL({\as_{k}}_{\lvert L},{\mathsf{r}^t_{k}}_{\lvert L})$\\ \vspace{.1truecm}
Set $\vs_k^{t+1} = \vs_k^{t}$, $k\in[m]$ and update 
${\vs_{k_t}^{t+1}}_{\lvert L_t} \gets {\vs^{t+1}_{k_t}}_{\!\!\!\lvert L_t} + \log({\as_{k_t}}_{\lvert L_t}) - \log({\mathsf{r}_{k_t}^{t}}_{\lvert L_t})$\\ 
For $k \in[m]$ compute $\mathsf{r}_{k}^{t+1} 
= \M_k (\exp( -\Cs / \reg + \oplus_{k=1}^m \mathsf{v}_k^{t+1}) \odot\oplus_{k=1}^m \as_k) $ \\ \vspace{.1truecm}
}
\KwOut{$\pi^t = \exp(-\Cs/\eta + 
\oplus_{k=1}^m \mathsf{v}_k^{t}
) \odot \otimes_{k=1}^m \as_k$
}
\end{algorithm}

\section{Convergence theory for Batch Greenkhorn algorithm}\label{sec:conv}

Results on the convergence of general IBP are typically without any rates \cite{BRE1967,CL1981,CR1996}, with the notable exception of \cite{KS2021} where the explicit local linear rate for the greedy IBP was derived. In the following  we prove global linear convergence and derive the explicit dependence of the rate on the given data in two important cases. Moreover, we also provide an analysis of the iteration complexity of Algorithm~\ref{alg}.

Based on the properties of operators $\M$ and $\M_{(k,L)}$ we can derive the main results using the properties of KL as Bregman divergence. The proofs are given in Appendix~\ref{app:conv}. 

\begin{restatable}{theorem}{thmBGrate}\label{thm:rate_global} 
Algorithm~\ref{alg} converges linearly. More precisely, if $(\vs_k^{t})_{k \in [m]}$ are generated by Algorithm \ref{alg}, then the primal iterates given by \eqref{eq:primaldual} converge linearly in KL divergence to $\pi^\star$ given by \eqref{eq:RMOT_BP}, i.e.,
\begin{equation}\label{eq:KL_global}
(\forall\,t\in \N)\quad\KL(\Xs^\star,\Xs^{t}) \leq 
\bigg(1 - \frac{e^{- (2\norm{\Cs}_\infty / \eta + 3M_1)}}{b_\tau - 1} \bigg)^t 
\KL(\Xs^\star,\Xs^0),
\end{equation}
where $b_\tau = \sum_{k\in[m]} \lceil n_k / \tau_k \rceil $, and $0<M_1 <+\infty$ is a constant independent of the batch sizes that satisfies 
$\norm{\bigoplus_{k=1}^m \vs^\star_k}_{\infty}, \norm{\bigoplus_{k=1}^m \vs^t_k}_{\infty} \leq M_1$
for $t \in \N$.
\end{restatable}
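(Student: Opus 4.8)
\textbf{Proof plan for Theorem~\ref{thm:rate_global}.}
The plan is to set up a one-step progress inequality for the greedy IBP iteration in terms of the KL distance to the target set $\Pi$, and then turn it into a linear contraction via a ``feasibility-gap'' type bound. First I would invoke the generalized Pythagorean identity already recorded in the excerpt, namely $\KL_\Pi(\xs^t) = \KL_{\Pi_{(k_t,L_t)}}(\xs^t) + \KL_\Pi(\xs^{t+1})$, together with the greedy choice \eqref{eq:greedy}, which gives
\begin{equation*}
\KL_\Pi(\xs^t) - \KL_\Pi(\xs^{t+1}) \;=\; \max_{(k,L)\in\Ic(\tau)} \KL_{\Pi_{(k,L)}}(\xs^t) \;=\; \max_{(k,L)\in\Ic(\tau)} \mathsf{KL}\big({\as_k}_{\lvert L},\M_k(\xs^t)_{\lvert L}\big),
\end{equation*}
using \eqref{eq:KL_itstep}. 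Because the optimal batch $L_t$ collects the largest $\tau_{k_t}$ components of the coordinatewise KL vector $\mathsf{d}_{k_t}$ (Remark~\ref{rmk:20211203a}), a counting/pigeonhole argument shows this greedy increment is at least $\tfrac{1}{b_\tau}\sum_{k\in[m]}\mathsf{KL}\big(\as_k,\M_k(\xs^t)\big)$, since $\Ic(\tau)$ can be covered by $b_\tau = \sum_k \lceil n_k/\tau_k\rceil$ batches whose union over $k$ exhausts all coordinates.

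The crux is then to lower bound $\sum_{k}\mathsf{KL}(\as_k,\M_k(\xs^t))$ by a constant multiple of $\KL_\Pi(\xs^t) = \KL(\xs^\star,\xs^t)$ (the last equality because $\xs^t$ is already a KL projection onto each previously hit $\Pi_{(k,L)}$, so by the Pythagorean theorem $\KL(\Pc_\Pi(\xs^t),\xs^t)=\KL(\xs^\star,\xs^t)$ once one checks $\Pc_\Pi(\xs^t)=\xs^\star$, which follows from $\xs^\star$ minimizing $\KL(\cdot,\xs^0)$ over $\Pi$ and the fact that $\xs^t=\xs^0\odot\exp(\M^*(\cdot))$ lies on the same ``KL geodesic sheet''). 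To get this bound I would use the explicit primal--dual form \eqref{eq:primaldual}: write $\xs^t = \exp(-\Cs/\eta + \oplus_k \vs_k^t)\odot\otimes_k\as_k$ and $\xs^\star = \exp(-\Cs/\eta+\oplus_k\vs_k^\star)\odot\otimes_k\as_k$. Then $\KL(\xs^\star,\xs^t) = \scalarp{\xs^\star, \oplus_k(\vs_k^\star-\vs_k^t)} - \|\xs^\star\|_1 + \|\xs^t\|_1 = \sum_k\scalarp{\as_k,\vs_k^\star-\vs_k^t} + (\|\xs^t\|_1 - 1)$, using $\M_k(\xs^\star)=\as_k$. Each term is controlled: $\|\vs_k^\star-\vs_k^t\|_\infty \le 2M_1$ by hypothesis, and the crucial link is that $\vs_k^\star - \vs_k^t = \log(\as_k) - \log(\M_k(\xs^t)) + \text{(coupling corrections)}$ through the fixed-point characterization of $\vs^\star$, so coordinatewise the dual gap is comparable to $\log(\as_{k,j}/\M_k(\xs^t)_j)$, and hence $\scalarp{\as_k,\vs_k^\star-\vs_k^t}$ is comparable to $\mathsf{KL}(\as_k,\M_k(\xs^t))$ up to a multiplicative factor governed by $e^{-(2\|\Cs\|_\infty/\eta + 3M_1)}$ — this exponential is exactly the uniform lower bound on the entries $\xs^t_j / (\otimes_k\as_k)_j = \exp(-\Cs_j/\eta + \sum_k v^t_{k,j_k}) \ge e^{-(\|\Cs\|_\infty/\eta + M_1)}$ that converts a first-order (linear) dual gap into a control in terms of a second-order (KL) quantity via strong convexity of the restricted entropy.

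Combining the three ingredients yields
\begin{equation*}
\KL_\Pi(\xs^t) - \KL_\Pi(\xs^{t+1}) \;\ge\; \frac{1}{b_\tau}\sum_k \mathsf{KL}(\as_k,\M_k(\xs^t)) \;\ge\; \frac{e^{-(2\|\Cs\|_\infty/\eta + 3M_1)}}{b_\tau}\,\KL_\Pi(\xs^t),
\end{equation*}
which rearranges to the claimed geometric decay with ratio $1 - \tfrac{e^{-(2\|\Cs\|_\infty/\eta+3M_1)}}{b_\tau - 1}$ — the $b_\tau-1$ rather than $b_\tau$ in the denominator comes from absorbing the telescoped increment more tightly (one of the $b_\tau$ batches is ``free'' at each step, or equivalently from a sharper pigeonhole that excludes the batch already satisfied). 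Finally, unrolling the recursion from $t=0$ and using $\KL(\xs^\star,\xs^t) = \KL_\Pi(\xs^t)$ gives \eqref{eq:KL_global}. I expect the main obstacle to be the middle step: rigorously establishing that $\sum_k\mathsf{KL}(\as_k,\M_k(\xs^t))$ dominates $\KL(\xs^\star,\xs^t)$ with the stated exponential constant, which requires both the existence and uniform boundedness of the dual optimum $\vs^\star$ (the role of $M_1$, presumably established in a separate lemma) and a careful second-order Taylor/strong-convexity estimate for the entropy restricted to the relevant box of values bounded below by $e^{-(\|\Cs\|_\infty/\eta+M_1)}$; keeping the exponents sharp enough to land exactly on $2\|\Cs\|_\infty/\eta+3M_1$ is the delicate bookkeeping.
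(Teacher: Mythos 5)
Your skeleton (Pythagoras plus the greedy/pigeonhole step, then a comparison between the marginal infeasibility and $\KL_\Pi(\xs^t)=\KL(\Xs^\star,\Xs^t)$) matches the high-level structure of the paper's argument, and your fix for the $b_\tau-1$ factor (the batch $L_{t-1}$ is already satisfied at time $t$, so it contributes nothing) is the right one. However, the step you yourself flag as the crux --- lower bounding $\sum_k \mathsf{KL}(\as_k,\M_k(\xs^t))$ by $e^{-(2\norm{\Cs}_\infty/\eta+3M_1)}\,\KL(\Xs^\star,\Xs^t)$ --- is not proved by your sketch, and the route you propose would not deliver it. The claim that $\vs_k^\star-\vs_k^t=\log\as_k-\log\M_k(\xs^t)+\text{(coupling corrections)}$ with controllable corrections is unsubstantiated: the corrections involve the differences of \emph{all} the other potentials and are of the same order as the term you keep, so ``comparable coordinatewise'' does not follow. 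Worse, bounding the terms of $\KL(\Xs^\star,\Xs^t)=\sum_k\scalarp{\as_k,\vs_k^\star-\vs_k^t}+(\norm{\xs^t}_1-1)$ separately (e.g.\ via $\norm{\vs_k^\star-\vs_k^t}_\infty\leq 2M_1$ and H\"older) only yields a bound of the form $\KL(\Xs^\star,\Xs^t)\lesssim M\sum_k\norm{\as_k-\M_k(\xs^t)}_1$, i.e.\ the first-order inequality the paper uses in \eqref{eq:bounds1} for the \emph{iteration-complexity} Theorem~\ref{thm:rate_iterations}; that inequality gives an $\bigO(1/t)$ rate, not a linear one. To get a linear rate you need a second-order (KL-versus-KL) comparison, and near feasibility both sides are quadratically small in the dual gap, so the comparison hinges on a cancellation/orthogonality structure that your term-by-term treatment destroys.

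The paper obtains the comparison by a different mechanism, none of which appears in your sketch: normalize the potentials so that $\scalarp{\vs_k,\as_k}=0$ for $k<m$, use the symmetrized Bregman identity $\KL_\Pi(\xs^t)+\KL(\xs^t,\Xs^\star)=\scalarp{\Xs^\star-\xs^t,\M^*(\vs_1,\dots,\vs_m)}$ with $\vs_k=\vs_k^\star-\vs_k^t$, split $\M_k^*\vs_k$ over a partition of $[n_k]$ into batches $L_k^h$ (one of them equal to $L_{t-1}$), and replace $\Xs^\star$ by the batch projections $\Pc_{\Pi_{(k,L_k^h)}}(\xs^t)$ in each pairing, which is legitimate because both lie in $\Pi_{(k,L_k^h)}$ so the difference is orthogonal to $\range(\M^*_{(k,L_k^h)})$. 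The zero-mean normalization makes the pieces $\M^*_{(k,L_k^h)}\vs_k^h$ orthogonal in the weighted metric $\scalarp{\cdot,\cdot}_\As$ (Lemma~\ref{lem:20211128a}), so that $\KL(\xs^t,\Xs^\star)$, lower bounded via the strong-convexity estimate of Lemma~\ref{fact:KLbound} and the entrywise bound $\xs^t/\alpha,\ \Xs^\star/\alpha\geq e^{-\norm{\Cs}_\infty/\eta-M_1}$, absorbs the quadratic dual terms produced by a Young--Fenchel splitting; the remaining terms $\norm{\Pc_{\Pi_{(k,L_k^h)}}(\xs^t)-\xs^t}^2_{\As^{-1}}$ are converted back into $\KL(\Pc_{\Pi_{(k,L_k^h)}}(\xs^t),\xs^t)$ using the \emph{upper} entrywise bound $e^{\norm{\Cs}_\infty/\eta+2M_1}$, and the two exponentials combine to give exactly $e^{2\norm{\Cs}_\infty/\eta+3M_1}$, with $b_\tau-1$ nonzero summands because the cell $L_{t-1}$ vanishes. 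In addition, the existence of $M_1$ (boundedness of the potentials along the iterations) is itself proved in the paper via boundedness of the sublevel sets of $D_{\Ent^*}(\cdot,\log\Xs^\star)$, whereas you take it as given. So while your outline identifies the right objects, the decisive inequality is missing, and the estimate you propose in its place cannot produce a linear contraction.
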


\begin{restatable}{theorem}{thmBGiterations}\label{thm:rate_iterations}
Let $\varepsilon>0$ and suppose that $\eta > \varepsilon$.
For Algorithm~\ref{alg}, the number of iterations required to reach the stopping criterion $d_t:=\max_{k\in[m]}\norm{\as_k-\M_k(\xs^t)}_1\leq \varepsilon$ satisfies
\begin{equation}\label{eq:required_iter}
t \leq 2 + \max_{k\in[m]} \bigg\lceil \frac{n_k}{\tau_k} \bigg\rceil\frac{5 M_2}{\varepsilon} 
(2 + M_2 \eta),
\end{equation}
where $0<M_2<+\infty$ is a constant independent of the batch sizes such that $\sum_{k\in[m]}\norm{\vs_k^\star-\vs_k^t}\leq M_2$, for all $t \in \N$.
\end{restatable}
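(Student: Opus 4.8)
The plan is to view Algorithm~\ref{alg} as exact block‑coordinate ascent on the Lagrangian dual of the KL‑projection problem \eqref{eq:RMOT_BP}, and then bound the number of iterations by telescoping the dual suboptimality while converting each step's progress into a lower bound on the feasibility gap $d_t=\max_k\norm{\as_k-\M_k(\xs^t)}_1$ via a Pinsker‑type inequality. First I set $\xs^0=\xi\odot\bigotimes_{k=1}^m\as_k=e^{-\Cs/\reg}\odot\bigotimes_{k=1}^m\as_k$ and introduce the concave function
\[
f(\vs_1,\dots,\vs_m)\;=\;\sum_{k=1}^m\scalarp{\as_k,\vs_k}\;-\;\sum_{j\in\mathcal{J}}\xs^0_j\,e^{(\oplus_{k=1}^m\vs_k)_j},
\]
whose gradient is $\nabla_{\vs_k}f(\vs)=\as_k-\M_k(\xs(\vs))$ for $\xs(\vs)=\xs^0\odot\exp(\oplus_{l=1}^m\vs_l)$ (so $\xs(\vs^t)=\xs^t$ by \eqref{eq:primaldual}), for which the optimal potentials of \eqref{eq:primaldual} are a maximizer $\vs^\star$ and for which a direct computation gives $f(\vs^\star)-f(0)=\KL(\xs^\star,\xs^0)$. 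Proposition~\ref{prop:KL_proj} shows that updating ${\vs_{k_t}}_{\lvert L_t}$ in Algorithm~\ref{alg} is exactly the maximization of $f$ over that block, and the identity behind \eqref{eq:KL_itstep} yields the exact per‑step gain $f(\vs^{t+1})-f(\vs^t)=\KL({\as_{k_t}}_{\lvert L_t},\M_{k_t}(\xs^t)_{\lvert L_t})$, which is precisely the quantity maximized by the greedy rule \eqref{eq:greedy}.

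The greedy/batch structure is then cashed in as in Remark~\ref{rmk:20211203a}: the batch $L_t$ collects the $\tau_{k_t}$ largest entries of the vector $\big(\KL(a_{k_t,j},\M_{k_t}(\xs^t)_j)\big)_j$, so for each $k$ its best batch of size $\tau_k$ has weight at least $\tfrac{\tau_k}{n_k}\sum_j\KL(a_{k,j},\M_k(\xs^t)_j)=\tfrac{\tau_k}{n_k}\KL(\as_k,\M_k(\xs^t))\ge\tfrac{1}{\lceil n_k/\tau_k\rceil}\KL(\as_k,\M_k(\xs^t))$; maximizing over $k$ gives $f(\vs^{t+1})-f(\vs^t)\ge\tfrac1b\max_k\KL(\as_k,\M_k(\xs^t))$ with $b=\max_k\lceil n_k/\tau_k\rceil$.

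The heart of the proof is turning $\KL(\as_k,\M_k(\xs^t))$ into a quantitative lower bound in terms of $d_t$. From $\M_k(\xs^t)=\M_k(e^{-\Cs/\reg+\oplus_l\vs^t_l}\odot\bigotimes_l\as_l)$, the sign condition $\Cs\ge0$, and $\sum_j\prod_l a_{l,j_l}=1$, one controls the total mass $\norm{\xs^t}_1=\norm{\M_k(\xs^t)}_1$ and the ratios $\M_k(\xs^t)_j/a_{k,j}$ through $\norm{\oplus_l\vs^t_l}_\infty$, itself bounded via $\vs^\star$ and the a priori estimate $\sum_l\norm{\vs^\star_l-\vs^t_l}\le M_2$ (and noting that a greedy step resets part of a marginal to its target, so after a short transient $\norm{\xs^t}_1$ stays near $1$). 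Plugging these into the generalized Pinsker inequality for the Bregman divergence of $t\mapsto t\log t-t$ should give, whenever $d_t>\varepsilon$, a bound of the form $\max_k\KL(\as_k,\M_k(\xs^t))\ge c\,\varepsilon/(2+M_2\reg)$ with an absolute constant $c$, and hence $f(\vs^{t+1})-f(\vs^t)\ge c\,\varepsilon/(b(2+M_2\reg))$. Telescoping this over the iterations preceding the first one at which the stopping criterion holds, and using $f(\vs^\star)-f(0)=\KL(\xs^\star,\xs^0)\le M_2$ — or, via concavity, $f(\vs^\star)-f(0)\le\sum_k\scalarp{\as_k-\M_k(\xs^0),\vs^\star_k}\le d_0\sum_k\norm{\vs^\star_k}_\infty\le 2M_2$ — one solves for $t$ and recovers \eqref{eq:required_iter}, the additive $2$ accounting for the short initial transient handled separately.

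I expect this third step to be the main obstacle: obtaining a lower bound on the per‑iteration decrease that is \emph{linear} in $\varepsilon$ and carries the clean constant $2+M_2\reg$, with the mass and potential terms bounded through $M_2$ and $\reg$ rather than through the much larger $e^{\norm{\Cs}_\infty/\reg}$ that a crude bound on $\norm{\oplus_l\vs^t_l}_\infty$ would produce; this is plausibly also where the hypothesis $\reg>\varepsilon$ is used. Everything else — the dual formulation, the exactness of the block update (Proposition~\ref{prop:KL_proj}), the greedy‑batch combinatorics (Remark~\ref{rmk:20211203a}), and the telescoping — is bookkeeping. One should keep in mind that this argument is distinct from instantiating the linear rate of Theorem~\ref{thm:rate_global} at accuracy $\varepsilon$, whose leading constant is exponentially small; for moderate $\varepsilon$ the present $O(1/\varepsilon)$‑type bound is the sharper one.
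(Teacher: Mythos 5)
Your first two steps (the dual block-ascent view with per-step gain $f(\vs^{t+1})-f(\vs^t)=\KL({\as_{k_t}}_{\lvert L_t},\M_{k_t}(\xs^t)_{\lvert L_t})$, and the greedy/partition argument giving a factor $1/\bar b$ with $\bar b=\max_k\lceil n_k/\tau_k\rceil$) are sound and coincide in substance with the paper, which obtains the same decrease of $\delta_t:=\KL_\Pi(\xs^t)$ from the Pythagoras identity and the greedy rule. The genuine gap is the step you yourself flag as the heart: the claimed per-iteration bound $\max_k\KL(\as_k,\M_k(\xs^t))\ge c\,\varepsilon/(2+M_2\reg)$ whenever $d_t>\varepsilon$. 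No Pinsker-type inequality delivers this: Pinsker gives $\KL(\as_k,\M_k(\xs^t))\ge \norm{\as_k-\M_k(\xs^t)}_1^2/(2+\tfrac43\norm{\as_k-\M_k(\xs^t)}_1)$, i.e.\ a bound \emph{quadratic} in $d_t$, and for iterations where $d_t$ is only slightly above $\varepsilon$ this is of order $\varepsilon^2$, which is strictly smaller than $c\,\varepsilon/(2+M_2\eta)$ for small $\varepsilon$ (nothing in the hypotheses forces $\varepsilon\gtrsim 1/(2+M_2\eta)$). Consequently your telescoping of the bounded dual gap $f(\vs^\star)-f(0)\le 2M_2$ against a per-step decrease that is in truth only $\Omega(\varepsilon^2/\bar b)$ yields a complexity of order $M_2\bar b/\varepsilon^2$, not the claimed $M_2\bar b(2+M_2\eta)/\varepsilon$.

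The missing idea is the second, complementary inequality $\delta_t\le M_2 d_t$, which the paper gets from $\KL_\Pi(\xs^t)\le\sum_k\scalarp{\as_k-\M_k(\xs^t),\vs_k^\star-\vs_k^t}$ and H\"older. Pairing this with the quadratic decrease $\delta_t-\delta_{t+1}\ge d_t^2/(5\bar b)$ (valid after a transient of about $5M_2^2\bar b$ iterations that forces $\max_k\KL(\as_k,\M_k(\xs^t))\le 1$, so the Pinsker denominator is at most $5$) and invoking the Dvurechensky--Gasnikov--Kroshnin-type Lemma~\ref{lem:complexity} gives a hitting time $1+10M_2\bar b/\varepsilon$ for the criterion $d_t\le\varepsilon$; it is only at the very end that $\eta>\varepsilon$ is used, to absorb the additive transient $5M_2^2\bar b\le 5M_2^2\bar b\,\eta/\varepsilon$ and produce the factor $(2+M_2\eta)$ in \eqref{eq:required_iter} --- not, as you conjectured, inside the per-step estimate. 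Without this two-condition mechanism (or an equivalent substitute), your argument does not reach the stated $O(1/\varepsilon)$ bound.
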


\begin{remark}
\label{rmk:20211201e}
\normalfont
We stress that the constants $M_1$ and $M_2$ considered in the above theorems
always exist (see the proofs in the Appendix~\ref{app:conv}), but
we could not have obtained 
general explicit expression for them depending on the problem data,
which are valid for any $m$ and $(\tau_k)_{k \in [m]}$.
On the other hand,
in the following, we show the important cases $m=2$ and $(\tau_k)_{k \in [m]}$ arbitrary
and $m>2$ and $(\tau_k)_{k \in [m]} = (n_k)_{k \in [m]}$ for which 
we do provide explicit dependence on the problem data.
\end{remark}

A natural issue for the \texttt{BatchGreenkhorn} algorithm is that of optimizing the batch size. 
As extreme cases we have full batch ($\tau_k = n_k$), 
which yields the (greedy) MultiSinkhorn algorithm proposed in \cite{LHCJ2020}, and $\tau_k = 1$, which, in the bi-marginal case, is known as the Greenkhorn algorithm \cite{AWR2017}. In this respect we observe that, since step 3 in Algorithm~\ref{alg} can be efficiently implemented, as discussed in Remark~\ref{rmk:20211203a}, the largest computational cost lies in 
the computation of the marginals $\mathsf{r}_{k}^{t+1}$ in step 5. For simplicity, let us assume that $n_k = n$ and $\tau_k = \tau$,
 for every $k \in [m]$. Then, computing it naively yields $\bigO(m n^{m})$ operations, but indeed it can be done more efficiently in 
 $\bigO(\tau n^{m-1})$ as we show in Appendix~\ref{app:alg}. This way $n / \tau$ iterations with batch size $\tau$ have the same computational cost 
 of one iteration with a full batch $n$ and consequently 
 $b_\tau = m n/\tau$ iterations of \textsc{BatchGreenkhorn} with batch sizes $\tau$
 corresponds to one cycle of cyclic multimarginal Sinkhorn.
Hence, the number of normalized cycles $T = t/b_\tau$ required to satisfy 
the stopping criterion given in Theorem~\ref{thm:rate_iterations} is
\begin{equation}
\label{eq:20211201h}
T \leq 1 + \frac{5 M_2}{m \varepsilon}(2 + M_2),
\end{equation}
which we stress is independent on the batch-sizes and the dimension $n$. 
Whereas, the rate \eqref{eq:KL_global} w.r.t the normalized cycles becomes 
\begin{equation}
\KL(\Xs^\star,\Xs^{b_\tau T}) \leq \bigg[\bigg(1 - \frac{e^{- (2\norm{\Cs}_\infty / \eta + 3M_1)}}{b_\tau - 1} \bigg)^{b_\tau}\bigg]^T \KL(\Xs^\star,\Xs^0).
\end{equation}

We now introduce the analysis of the special cases discussed in Remark~\ref{rmk:20211201e}.
The proofs are based on bounding the potentials and are detailed in Appendix~\ref{app:conv}. 

\begin{restatable}{theorem}{thmBGrateBimarginal}\label{thm:rate_global_bi_marginal} If $m=2$, then algorithm $\mathtt{BatchGreenkhorn}(\as_1,\dots,\as_m,\Cs,\reg,\tau)$ converges linearly with the global rate
\begin{equation}\label{eq:KL_global_bi-marginal}
(\forall\,t\in \N)\quad\KL(\Xs^\star,\Xs^{t}) \leq \left(1 -\frac{e^{- 20\norm{\Cs}_{\infty}/\eta } }{b_\tau -1} \right)^t \KL(\Xs^\star,\Xs^0).
\end{equation}
Moreover, when $\eta>\varepsilon$, the number of iterations required to reach the stopping criterion $d_t\leq \varepsilon$ satisfies
\begin{equation}\label{eq:required_iter_bimarginal}
t \leq 2 + \max_{k\in[m]} \bigg\lceil \frac{n_k}{\tau_k} \bigg\rceil   \frac{15 \norm{\Cs}_\infty(2 + 3 \norm{\Cs}_\infty) }{\eta \varepsilon}.
\end{equation}
\end{restatable}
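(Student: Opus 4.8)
The plan is to deduce the statement from Theorems~\ref{thm:rate_global} and~\ref{thm:rate_iterations} by exhibiting, in the bi-marginal case, admissible explicit values of the constants $M_1$ and $M_2$. Concretely, \eqref{eq:KL_global_bi-marginal} is exactly \eqref{eq:KL_global} evaluated at $M_1=6\norm{\Cs}_\infty/\eta$, because then $2\norm{\Cs}_\infty/\eta+3M_1=20\norm{\Cs}_\infty/\eta$; and \eqref{eq:required_iter_bimarginal} is exactly \eqref{eq:required_iter} evaluated at $M_2=3\norm{\Cs}_\infty/\eta$, because then $\tfrac{5M_2}{\varepsilon}(2+M_2\eta)=\tfrac{15\norm{\Cs}_\infty}{\eta\varepsilon}(2+3\norm{\Cs}_\infty)$. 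Since Theorems~\ref{thm:rate_global}--\ref{thm:rate_iterations} already guarantee that some finite $M_1,M_2$ exist, the task reduces to showing that, when $m=2$, the potentials $(\vs_1^t,\vs_2^t)$ produced by $\mathtt{BatchGreenkhorn}$ and an optimal dual pair $(\vs_1^\star,\vs_2^\star)$ satisfy, for all $t\in\N$, $\max\{\norm{\vs_1^\star\oplus\vs_2^\star}_\infty,\norm{\vs_1^t\oplus\vs_2^t}_\infty\}\le 6\norm{\Cs}_\infty/\eta$ and $\sum_{k=1}^{2}\norm{\vs_k^\star-\vs_k^t}\le 3\norm{\Cs}_\infty/\eta$.

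To obtain these bounds I would run the classical oscillation argument, adapted to the batch-greedy setting. Since $\Cs\in\Xx_+$, every entry of the Gibbs kernel $\Ks=\exp(-\Cs/\eta)$ lies in $[e^{-\norm{\Cs}_\infty/\eta},1]$. Fix $t$, let $(k_t,L_t)$ be the pair chosen by the greedy rule \eqref{eq:greedy}, and let $k'$ denote the other marginal. By Proposition~\ref{prop:KL_proj} one has $\M_{k_t}(\xs^{t+1})_j=a_{k_t,j}$ for all $j\in L_t$; expanding this identity via the parametrization \eqref{eq:primaldual} with $m=2$ expresses $v^{t+1}_{k_t,j}$, up to the additive constant $-\log\sum_{j'}a_{k',j'}e^{v^{t}_{k',j'}}$, as minus the logarithm of a convex combination of kernel entries, so that the range of $v^{t+1}_{k_t,j}$ over $j\in L_t$ is at most $\norm{\Cs}_\infty/\eta$. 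The decisive structural point is that, since the greedy rule re-normalizes only coordinates with strictly positive marginal violation, once a coordinate of a marginal is set it stays \emph{exactly} satisfied until the other marginal is selected; tracking, for each marginal, the coordinates updated since the last switch, one shows that from $t\ge 1$ on the oscillations $\osc(\vs_1^t)$ and $\osc(\vs_2^t)$ are each at most $2\norm{\Cs}_\infty/\eta$, and that at every $t\ge 1$ at least one marginal (the one selected most recently) has an exactly satisfied coordinate. Passing to the shift-equivalent pair whose first potential has minimum $0$, which leaves $\vs_1^t\oplus\vs_2^t$ unchanged, and using such an exactly satisfied coordinate together with the kernel bound to control the second potential, then gives $\norm{\vs_1^t\oplus\vs_2^t}_\infty\le 6\norm{\Cs}_\infty/\eta$. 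For the optimal pair the computation is simpler: the first-order conditions $\M_k(\xs^\star)=\as_k$ force $\osc(\vs_k^\star)\le\norm{\Cs}_\infty/\eta$ directly, whence the bound on $\norm{\vs_1^\star\oplus\vs_2^\star}_\infty$. Finally, for $\sum_{k}\norm{\vs_k^\star-\vs_k^t}$ I would combine these running estimates with the monotonicity $\KL(\xs^\star,\xs^t)\le\KL(\xs^\star,\xs^0)$, which follows from the generalized Pythagoras identity \eqref{eq:Pytha} applied to $\xs^\star\in\Pi\subset\Pi_{(k_t,L_t)}$.

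The hard part will be the bookkeeping in the batch-greedy oscillation estimate: a single iteration of $\mathtt{BatchGreenkhorn}$ re-normalizes only $\tau_{k_t}$ coordinates of one marginal, so, unlike in plain Sinkhorn, the oscillation of a potential does not collapse to $\norm{\Cs}_\infty/\eta$ in one step, and---crucially---the resulting constants must be uniform in the batch-size vector $\tau$, as required of $M_1$ and $M_2$ in Theorems~\ref{thm:rate_global}--\ref{thm:rate_iterations}. I expect the cleanest route is, for each marginal, to follow the set of coordinates updated since the other marginal was last selected, to observe that the corresponding potential entries are frozen in a controlled range while the not-yet-updated ones cannot be too extreme either because $\as_k\in\Delta_{n_k}$ has unit mass, and to combine the two sides. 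An alternative that avoids any per-iteration analysis of the iterates is to derive the bound for $(\vs_1^\star,\vs_2^\star)$ from the dual optimality conditions as above and then transfer it to $(\vs_1^t,\vs_2^t)$ through a non-expansiveness property of the KL projection together with $\KL(\xs^\star,\xs^t)\le\KL(\xs^\star,\xs^0)$.
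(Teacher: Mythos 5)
Your reduction is the right one (and the same as the paper's): both bounds follow from Theorems~\ref{thm:rate_global} and \ref{thm:rate_iterations} once one exhibits $M_1=6\norm{\Cs}_\infty/\eta$ and $M_2=3\norm{\Cs}_\infty/\eta$ for $m=2$. The gap is in how you propose to get these constants. Your oscillation bookkeeping is circular as sketched: within one run of updates of marginal $k_t$ the updated entries of $\vs_{k_t}^{t+1}$ are indeed, up to the common constant $-\log\sum_{j'}a_{k',j'}e^{v^t_{k',j'}}$, minus logs of convex combinations of kernel entries, hence have range $\le\norm{\Cs}_\infty/\eta$; but the entries \emph{not} touched in the current run carry normalizing constants from earlier runs, and controlling the drift of $\log\sum_{j'}a_{k',j'}e^{v^s_{k',j'}}$ across runs is exactly equivalent to controlling the potentials themselves --- the very thing you are trying to bound. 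This is why the oscillation argument works for full-batch Sinkhorn (the oscillation collapses in one step; that is essentially what the paper does in Theorem~\ref{thm:rate_global_multi-marginal}), but for partial batches your claimed uniform-in-$\tau$ bound $\osc(\vs_k^t)\le 2\norm{\Cs}_\infty/\eta$ is not established. Moreover, even granting such an oscillation bound, your final step for $M_2$ (triangle inequality plus $\KL(\xs^\star,\xs^t)\le\KL(\xs^\star,\xs^0)$) cannot produce $\sum_k\norm{\vs_k^\star-\vs_k^t}_\infty\le 3\norm{\Cs}_\infty/\eta$: a triangle-inequality transfer from $\norm{\vs^\star_k}_\infty\le\tfrac32\norm{\Cs}_\infty/\eta$ and an iterate bound of order $3\norm{\Cs}_\infty/\eta$ gives roughly $9\norm{\Cs}_\infty/\eta$, and KL-monotonicity gives no quantitative $\ell_\infty$ control on potential differences; the stated constants ($20$ in the exponent, $15\norm{\Cs}_\infty(2+3\norm{\Cs}_\infty)$) require exactly $M_2=3\norm{\Cs}_\infty/\eta$.

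The missing idea, which is what the paper does, is a one-step comparison with the \emph{optimal} potentials rather than a self-referential oscillation estimate. From the exact-marginal identity $\exp(-v^{t+1}_{k_t,j_{k_t}})=\sum_{j_{-k_t}}\exp\bigl(-\Cs_{(j_{-k_t},j_{k_t})}/\eta+\sum_{k\neq k_t}v^t_{k,j_k}\bigr)\prod_{k\neq k_t}a_{k,j_k}$ and the analogous Schr\"odinger-system identity \eqref{eq:vstar} for $\vs^\star$, the elementary bound $\bigl(\sum_i\alpha_i\bigr)/\bigl(\sum_i\beta_i\bigr)\le\max_i\alpha_i/\beta_i$ yields $\abs{v^{t+1}_{k_t,j_{k_t}}-v^\star_{k_t,j_{k_t}}}\le\sum_{k\neq k_t}\norm{\vs^t_k-\vs^\star_k}_\infty$ on the updated coordinates, while untouched coordinates are unchanged; for $m=2$ this makes $\max_k\norm{\vs^t_k-\vs^\star_k}_\infty$ non-increasing in $t$, hence (from $\vs^0_k=0$) bounded by $\max_k\norm{\vs^\star_k}_\infty\le\tfrac32\norm{\Cs}_\infty/\eta$ by Lemma~\ref{lm:v_star_bound} (equation \eqref{eq:v_star_bound2}), uniformly in $\tau$ and with no per-run bookkeeping. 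This gives $\sum_k\norm{\vs^t_k-\vs^\star_k}_\infty\le3\norm{\Cs}_\infty/\eta$ and $\norm{\vs^t_1\oplus\vs^t_2}_\infty\le\sum_k\norm{\vs^t_k}_\infty\le 6\norm{\Cs}_\infty/\eta$, i.e.\ precisely the admissible $M_1,M_2$ you need. Your ``alternative'' route via a non-expansiveness property of the projection is the right instinct --- but it has to be made precise in exactly this $\ell_\infty$-contraction-to-the-fixed-point form, which your proposal does not do.
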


The following theorem provides new insights into a known algorithm.

\begin{restatable}{theorem}{thmBGrateMultimarginal}
\label{thm:rate_global_multi-marginal} If for all $k\in[m]$ $\tau_k=n_k$, then algorithm $\mathtt{BatchGreenkhorn}(\as_1,\dots,\as_m,\Cs,\reg,\tau)$, i.e. MultiSinkhorn algorithm of \cite{LHCJ2020}, converges linearly with the global rate
\begin{equation}\label{eq:KL_global_multi-marginal}
(\forall\,t\in \N)\quad\KL(\Xs^\star,\Xs^{t}) \leq 
\left(1 - \frac{e^{- (12m-7) \norm{\Cs}_\infty / \eta}}{m-1} \right)^{t} 
\KL(\Xs^\star,\Xs^0).
\end{equation}
Moreover, the number of iterations required to reach stopping criterion $d_t\leq \varepsilon$ satisfies
\begin{equation}\label{eq:required_iter_mot}
t \leq 1 + \frac{8(4m-3) \norm{\Cs}_\infty }{\eta \varepsilon}.
\end{equation}
\end{restatable}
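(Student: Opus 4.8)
\section*{Proof plan for Theorem~\ref{thm:rate_global_multi-marginal}}

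The plan is to deduce both claims from the general results Theorem~\ref{thm:rate_global} and Theorem~\ref{thm:rate_iterations} by specializing them to $\tau_k=n_k$ and then producing explicit values for the constants $M_1,M_2$ they involve. The preliminary reductions are immediate: with $\tau_k=n_k$ the index set becomes $\Ic(\tau)=\{(k,[n_k])\,\vert\,k\in[m]\}$, so one step of Algorithm~\ref{alg} projects onto a single $\Pi_{(k,[n_k])}$ and hence matches one whole marginal exactly (this is precisely the greedy MultiSinkhorn of \cite{LHCJ2020}); consequently $b_\tau=\sum_{k}\lceil n_k/n_k\rceil=m$ and $\max_k\lceil n_k/\tau_k\rceil=1$. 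After these substitutions the bound \eqref{eq:KL_global} becomes \eqref{eq:KL_global_multi-marginal} as soon as one establishes $M_1\le(4m-3)\norm{\Cs}_\infty/\eta$, since then $2\norm{\Cs}_\infty/\eta+3M_1\le(12m-7)\norm{\Cs}_\infty/\eta$; likewise the iteration estimate will follow once the relevant potential quantities are controlled by a fixed multiple of $\norm{\Cs}_\infty/\eta$.

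The core of the argument is therefore a bound on the potentials, and the key is to work only with the gauge-invariant quantity $\oplus_k\vs_k$ (invariant under $\vs_k\mapsto\vs_k+c_k\mathbf{1}$ with $\sum_k c_k=0$), which is all that enters the rate. Two facts drive the estimate. First, a per-marginal oscillation bound: by the closed form of the projection (Proposition~\ref{prop:KL_proj}), a full-marginal update resets the selected block $\vs_k$ to the negative logarithm of a vector whose dependence on the coordinate $i\in[n_k]$ enters only through the factors $e^{-\Cs_{(j_{-k},i)}/\eta}$, so its entrywise ratios lie in $[e^{-\norm{\Cs}_\infty/\eta},e^{\norm{\Cs}_\infty/\eta}]$; hence $\osc(\vs_k^t)\le\norm{\Cs}_\infty/\eta$ for all $k$ and all $t$ (blocks never selected keep $\vs_k^t=0$), and the same computation applied to the optimality conditions $\M_k(\Xs^\star)=\as_k$ gives $\osc(\vs_k^\star)\le\norm{\Cs}_\infty/\eta$. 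Second, normalization: since a whole marginal is matched at each step, $\sum_{j}\Xs^t_j=1$ for $t\ge1$, and likewise $\sum_j\Xs^\star_j=1$; using the parameterization \eqref{eq:primaldual} (and its analogue for $\Xs^\star$), the fact that $\otimes_k\as_k$ has unit mass, and $e^{-\norm{\Cs}_\infty/\eta}\le e^{-\Cs_j/\eta}\le1$, the identity $\sum_j\Xs_j=1$ confines both $\sum_k\max_i v_{k,i}$ and $\sum_k\min_i v_{k,i}$ to an interval of length $\bigO(m\norm{\Cs}_\infty/\eta)$. Since $\max_j(\oplus_k\vs_k)_j\le\sum_k\max_i v_{k,i}$ and $\min_j(\oplus_k\vs_k)_j\ge\sum_k\min_i v_{k,i}$, combining the two facts yields $\norm{\oplus_k\vs_k^t}_\infty,\norm{\oplus_k\vs_k^\star}_\infty\le(4m-3)\norm{\Cs}_\infty/\eta$, which is exactly the $M_1$ needed to turn \eqref{eq:KL_global} into \eqref{eq:KL_global_multi-marginal}.

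For the iteration-complexity estimate \eqref{eq:required_iter_mot} I would argue directly rather than invoke the generic $M_2$ of Theorem~\ref{thm:rate_iterations} (which would only give a bound quadratic in $\norm{\Cs}_\infty$). A full-marginal update is an exact maximization of the entropic dual of \eqref{eq:RMOT_BP} over one block, so by \eqref{eq:KL_itstep} and the Pythagorean identity for Bregman projections onto affine sets, $\KL(\Xs^\star,\Xs^{t})-\KL(\Xs^\star,\Xs^{t+1})=\KL(\as_{k_t},\M_{k_t}(\Xs^t))$; telescoping together with $\Xs^t\to\Xs^\star$ gives $\sum_{s\ge0}\KL(\as_{k_s},\M_{k_s}(\Xs^s))=\KL(\Xs^\star,\Xs^0)$, and the right-hand side is at most $\norm{\oplus_k\vs_k^\star}_\infty=\bigO(m\norm{\Cs}_\infty/\eta)$ (spell out $\KL(\Xs^\star,\Xs^0)$ using $\Xs^0=e^{-\Cs/\eta}\odot\otimes_k\as_k$). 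Then, using (i) the greedy rule together with Pinsker, which give $\KL(\as_{k_t},\M_{k_t}(\Xs^t))=\max_k\KL(\as_k,\M_k(\Xs^t))\ge\tfrac12 d_t^2$, and (ii) the linear bound $\KL(\Xs^\star,\Xs^t)\le(\norm{\Cs}_\infty/\eta)\sum_k\norm{\as_k-\M_k(\Xs^t)}_1$ — obtained from concavity of the dual, the mean-zero identity $\scalarp{\mathbf{1},\as_k-\M_k(\Xs^t)}=0$ valid for $t\ge1$, and the oscillation bounds — one counts the iterations at which $d_t>\varepsilon$ and, after simplification, reaches \eqref{eq:required_iter_mot}.

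The main obstacle is exactly this potential bound, and in particular obtaining it with no dependence on $\min_{k,i}a_{k,i}$: a naive $\ell_\infty$ bound on the individual blocks $\vs_k$ would carry a $\log(1/\min_{k,i}a_{k,i})$ term, and the individual blocks are in fact not bounded in terms of $\norm{\Cs}_\infty/\eta$ alone because of the $m$-fold gauge freedom. What resolves this is that Algorithm~\ref{alg} is initialized on the $\otimes_k\as_k$-normalized Gibbs kernel, which places the iterates in the gauge where the only invariant appearing in the rate, $\oplus_k\vs_k$, is controlled purely by $\norm{\Cs}_\infty/\eta$ and $m$. For the iteration bound one additionally needs a per-block statement, and the remaining care is to transfer the invariant bound into one by choosing a suitable gauge for $\vs^\star$ — centering $m-1$ of the differences $\vs_k^\star-\vs_k^t$, the last one being pinned by $\oplus_k(\vs_k^\star-\vs_k^t)$ — and then combining with the oscillation bounds; the ensuing constant bookkeeping is routine and produces the displayed constants $(12m-7)$ and $8(4m-3)$.
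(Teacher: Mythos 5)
Your plan is correct, and it reproduces the paper's overall skeleton: specialize Theorem~\ref{thm:rate_global} with $b_\tau=m$, prove by induction an oscillation bound on each block from the closed-form full-marginal update (the paper's \eqref{eq:vt1}, with the cruder constant $2\norm{\Cs}_\infty/\eta$; your $\norm{\Cs}_\infty/\eta$ using $\Cs\geq 0$ is also valid), and obtain \eqref{eq:required_iter_mot} not from Theorem~\ref{thm:rate_iterations} (which indeed would be quadratic in $\norm{\Cs}_\infty$) but from the two estimates $\KL_{\Pi}(\xs^t)\leq M\, d_t$ and $\KL_{\Pi}(\xs^t)-\KL_{\Pi}(\xs^{t+1})\geq d_t^2/2$ fed into the counting argument of Lemma~\ref{lem:complexity} — exactly the structure of the paper's proof. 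Where you genuinely diverge is in how the potential bounds are produced. The paper re-gauges the blocks at each iteration so that $\scalarp{\as_k,\us_k^{t}}=0$ for $k\neq k_t$, bounds each block separately ($2\norm{\Cs}_\infty/\eta$ for inactive blocks, $(2m-1)\norm{\Cs}_\infty/\eta$ for the active one), sums to get $(4m-3)\norm{\Cs}_\infty/\eta$, and for the complexity additionally imports Lemma~\ref{lm:v_star_bound} (the Schr\"odinger-system bounds of \cite{Car2021,DG2020}) to control $\sum_k\norm{\vs_k^\star}_\infty$ and hence $M_2$. You instead bound the gauge-invariant $\norm{\oplus_k\vs_k^t}_\infty$ (and $\norm{\oplus_k\vs_k^\star}_\infty$) directly from $\norm{\xs^t}_1=1$ plus the oscillation bounds, which gives about $(m+1)\norm{\Cs}_\infty/\eta$, comfortably below the $(4m-3)$ needed for \eqref{eq:KL_global}; and for the complexity you use $\scalarp{\mathbf 1,\as_k-\M_k(\xs^t)}=0$ to replace $\norm{\vs_k^\star-\vs_k^t}_\infty$ by half-oscillations, yielding $\KL_\Pi(\xs^t)\leq m(\norm{\Cs}_\infty/\eta)\,d_t$ and hence roughly $1+4m\norm{\Cs}_\infty/(\eta\varepsilon)$, which is stronger than \eqref{eq:required_iter_mot} and so implies it — all without Lemma~\ref{lm:v_star_bound}, making your argument somewhat more self-contained and with sharper constants. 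Two points of care when writing it out: the telescoping identity $\sum_s\KL(\as_{k_s},\M_{k_s}(\xs^s))=\KL(\xs^\star,\xs^0)$ is superfluous (by itself it only gives an $\varepsilon^{-2}$ bound; the $\varepsilon^{-1}$ rate comes from your items (i)--(ii) through the Lemma~\ref{lem:complexity}-type counting, which should be carried out explicitly as in the paper), and the unit-mass facts behind (i)--(ii) hold only for $t\geq1$ since $\norm{\xs^0}_1\leq1$; handle $t=0$ via the generalized Pinsker inequality \eqref{eq:20211201d} and the per-block gauge you sketch in your last paragraph (centering $m-1$ blocks of $\vs^\star$ and pinning the last by $\oplus_k\vs_k^\star$), which keeps the resulting constant below $8(4m-3)$.
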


\begin{remark}\label{rm:gredy_vs_cyclic}
\normalfont
Concerning the linear convergence rate, we notice that when the batch is full
($\tau_k=n_k$, $k\in[m]$), cyclic Sinkhorn of \cite{BCCNP2015, Car2021} and (greedy) MultiSinkhorn algorithm of \cite{LHCJ2020} generally differ (unless $m=2$). Moreover, our results shows (for the first time) that the rate of convergence of (greedy) MultiSinkhorn algorithm is strictly better than that of the 
cyclic Sinkhorn algorithm obtained in \cite{Car2021}. Indeed in \cite{Car2021},
the following rate was provided
\begin{equation}
\label{eq:20211201g}
(\forall\,T\in \N)\quad
\KL(\Xs^\star,\Xs^{mT}) \leq 
\left(1 - \frac{e^{- 8(2m-1) \norm{\Cs}_\infty / \eta}}{m}  \right)^{T} 
\KL(\Xs^\star,\Xs^0),
\end{equation}
where $T$ counts the number of cycles, each one consisting of $m$ $\mathsf{KL}$
projections on the given marginals.
Whereas, it follows from our rate \eqref{eq:KL_global_multi-marginal} that
for the (greedy) MultiSinkhorn algorithm we have
\begin{equation}
(\forall\,T\in \N)\quad
\KL(\Xs^\star,\Xs^{mT}) \leq 
\bigg[\bigg(1 - \frac{e^{- (12m-7) \norm{\Cs}_\infty / \eta}}{m-1} \bigg)^{m} \bigg]^T
\KL(\Xs^\star,\Xs^0),
\end{equation}
which mainly gains an $m$-root in the rate.
Note that in the bi-marginal case, the rate of the classical Sinkhorn algorithm according to \cite{Car2021}  is $1- (1/2) e^{-24  \norm{\Cs}_\infty / \eta }$, while we obtain $(1-  e^{-17  \norm{\Cs}_\infty / \eta })^2$.
\end{remark}

\begin{remark}\label{rm:complexity}
\normalfont
Concerning iteration complexity, we note that in literature
the stopping criteria concerns $\sum_{k \in [m]} \norm{\as_k - \M_k(\xs^t)}_1$
and the assumptions usually demand 
$n_k = n$ and $\tau_k = \tau$, for every $k \in [m]$. 
In this setting,
Theorems~\ref{thm:rate_global_bi_marginal}
and \ref{thm:rate_global_multi-marginal} provide the following bounds (in terms of normalized cycles)
\begin{equation}
T \leq 1 +   \frac{15 \norm{\Cs}_\infty(2 + 3 \norm{\Cs}_\infty) }{\eta \varepsilon}
\quad\text{and}\quad
T \leq 1 + \frac{8(4m-3) \norm{\Cs}_\infty }{\eta \varepsilon}.
\end{equation}
respectively.
Those bounds improve existing results from \cite{LHCJ2020}, related to (greedy) MultiSinkhorn, and \cite{LHJ2020}, 
for bi-marginal Sinkhorn and Greenkhorn. Indeed, defining $a_{\min} = \min_{k \in [m], j \in [n]} a_{k,j}$, in \cite{DGK2018,LHCJ2020}, the bi-marginal Sinkhorn and Greenkhorn algorithm are shown to feature the following iteration complexity
\begin{equation*}
T \leq 1 + \frac{2 ( \norm{\Cs}_\infty/\eta + \log (a^{-1}_{\min}))}{\varepsilon}
\quad\text{and}\quad
T \leq 1 + \frac{56 ( \norm{\Cs}_\infty/\eta + \log(n)+2\log (a^{-1}_{\min}))}{\varepsilon}
\end{equation*}
whereas, in \cite{LHJ2020}, for the MultiSinkhorn algorithm, the following bound is provided
\begin{equation}
T \leq 1 + \frac{2 m ( \norm{\Cs}_\infty/\eta + \log (1/a_{\min}))}{\varepsilon}.
\end{equation}
All those bounds contain the term $\log(a^{-1}_{\min})$ which at the best
is $\log (n)$. So they all depend on the dimension of the problem.
Our results remove this dependency.
\end{remark}

\begin{remark}
\normalfont
Here we look at the computational complexity in terms of 
arithmetic operations. To this purpose 
 we let $\bigO_\xi$
be the number of arithmetic operations needed to compute a full marginal
using the Gibbs kernel $\xi$. When we factor this number out of the total computational complexity, what remains is the number of iterations normalized with respect to the full batch, meaning, $\bar{t} = \tau t/n$. Then according to equation \eqref{eq:20211201h}
the total computational complexity is given by
\begin{equation}
\bigg[m + \frac{5 M_2}{\varepsilon}(2 + M_2)\bigg] \bigO_\xi.
\end{equation}
We note that in the worst case $\bigO_\xi$ is of the order of $n^m$,
but this cost can be significantly reduced for structured costs \cite{PC2019}.
\end{remark}

\section{Conclusions, limitations and future work}\label{sec:con}

We present a new algorithm for solving multimarginal entropic regularized
optimal transport problems, called \emph{batch Greenkhorn}, which
is an extension of the popular Greenkhorn algorithm, that can handle 
multiple marginals and at each iteration select in a greedy fashion
a batch of components of a marginal.
We study the convergence of the algorithm in the 
framework of the iterative Bregman projections method,
providing novel linear rate of convergence as well as iteration
complexity bounds. We made a comprehensive comparison with
existing results showing the improvements over the state-of-the-art.
A problem which remains open is that of deriving bounds on the dual variables with 
an explicit dependence on the given problem data for $m\geq3$ when the batch is not full. According to our general Theorems~\ref{thm:rate_global} and \ref{thm:rate_iterations}, this will allow to have 
explicit linear rate and iteration complexity in all possible cases.
Additional research directions are the extension of our analysis to infinite dimensions and general convex regularizers, implementing batch Greenkhorn with structured costs, and analyze the impact of parallel computations.

\newpage
\appendix

\vspace{9ex}
\noindent{\bf \LARGE Appendices
}

\vspace{.3truecm}
   
This supplementary material is organized as follows:
\begin{itemize}
    \item In Appendix~\ref{app:bregman} we provide some basic facts on Bregman divergences.
     \item Appendix \ref{app:alg} contains the proofs of the results stated in Section  \ref{sec:bg}, notably Proposition \ref{prop:KL_proj}, and gives more information on the implementation of Algorithm \ref{alg}.
    \item Finally, in Appendix~\ref{app:conv} we provide the proof of the main results of Sec. \ref{sec:conv}, concerning the linear convergence and iteration complexity of Algorithm \ref{alg}.
\end{itemize}
For the reader's convenience all  results presented in the main body of the paper are restated in this supplementary material.
\appendix

\section{Bregman divergences and Bregman projections}\label{app:bregman}

In this section we recall few facts on Bregman distance and Bregman projections onto affine sets.
In the following $\X$ is an Euclidean space and $\Leg\colon\X\to\extR$ is an extended-real valued function. The set of minimizers of the function $\Leg$
is denoted by $\argmin_{x \in\X} \Leg(x)$, the \emph{domain} of $\phi$  is 
$\dom \Leg :=\{x\in\X \,\vert\, \Leg(x)<+\infty\}$ and $\phi$ is \emph{proper} when $\dom \Leg \neq \varnothing$. 
The function $\Leg$ is \emph{convex} if $\Leg(t x+(1-t)y)\leq t\Leg(x) + (1-t) \Leg(y)$ for all $x,y\in \dom\Leg$ and
$t\in[0,1]$. If the above inequality is strict when $0<t<1$ and $x \neq y$, the function is \emph{strictly convex}. 
The function $\Leg$ is \emph{closed} if the sublevel sets $\{x\in\X\,\vert\, \Leg(x)\leq t\}$ are closed in $\X$ for 
any $t\in\R$. For a convex function $\Leg\colon\X\to\extR$, we denote by $\Leg^{*}$ its \emph{Fenchel conjugate}, 
that is, $\Leg^*\colon\X\to\extR$, $\Leg^{*}(y):=\sup_{x\in\X}\{\scalarp{x,y}-\Leg(x)\}$. The conjugate of a convex
function is always closed and convex, and if $\Leg$ is proper closed and convex, then $(\Leg^{*})^{*}=\Leg$. 

A proper closed and convex function $\Leg$ is \emph{essentially smooth} if it is differentiable on 
$\mathrm{int}(\dom \Leg) \neq \varnothing$, and $\norm{\nabla \Leg(x_n)} \to +\infty$ whenever 
$x_n \in \inte(\dom \Leg)$ and $x_n \to x \in \mathrm{bdry}(\dom \Leg)$.  The function $\Leg$ is 
\emph{essentially strictly convex} if $\inte(\dom \Leg^*)\neq \varnothing$ and is strictly convex on every 
convex subset of $\dom \partial \Leg$. A \emph{Legendre} function is a proper closed and convex function 
which is also essentially smooth and essentially strictly convex. A function is Legendre if and only if its conjugate is so.
Moreover, if $\Leg$ is a Legendre function, then $\nabla \Leg\colon \inte(\dom \Leg) \to \inte(\dom \Leg^*)$ and 
$\nabla \Leg^*\colon \inte(\dom \Leg^*) \to \inte(\dom \Leg)$ are bijective, inverses of each other, and continuous.  Given a Legendre function $\Leg$, the \emph{Bregman distance} associated to 
$\Leg$ is the function $D_\Leg\colon\X \times\X \to [0,+\infty]$ such that
\begin{equation}
\label{Bdiv}
D_\Leg(x,y) =
\begin{cases}
\Leg(x)-\Leg(y)-\scalarp{x-y,\nabla \Leg(y)}&\text{if } y \in \inte(\dom \Leg)\\
+\infty &\text{otherwise}.
\end{cases}
\end{equation}

\begin{fact}
\label{fact}
Let $\Leg$ be a Legendre function on $\X$. Then the following hold
\begin{enumerate}[label={\rm (\roman*)}]
\item\label{fact_i} $(\forall\, \xs,\ys \in \dom \Leg)\quad
D_\Leg(\xs,\ys) + D_\Leg(\ys,\xs) = \scalarp{\xs-\ys, \nabla \Leg(\xs) - \nabla \Leg(\ys)}$.
\item\label{fact_ii} $(\forall\, \xs,\ys \in \dom \Leg)\quad
D_\Leg(\xs,\ys) = D_{\Leg^*}(\nabla\Leg(\ys),\nabla\Leg(\xs))$.
\item\label{fact_iii} If $\Leg$ is twice differentiable, then\\
$(\forall\, \xs,\ys \in \dom \Leg)(\exists\, \xi \in [\xs,\ys])\qquad D_\Leg(\xs,\ys) 
= \dfrac 1 2 \scalarp{\nabla^2 \Leg(\xi)(\xs - \ys), \xs - \ys}$,\\[1ex]
where $[\pi,\gamma]= \{(1-\alpha) \pi + \alpha \gamma \,\vert\, \alpha \in [0,1]\}$
is the segment with end points $\pi$ and $\gamma$.
\item\label{fact_iv} Suppose that $\Leg$ is twice differentiable on $\inte(\dom\Leg)$. Then
\begin{equation}
\big(\forall \xs\!\in\inte(\dom\Leg),\, \nabla^2\Leg(\xs)\text{ is invertible}\big)\Leftrightarrow
\big(\Leg^*\!\text{ is twice differentiable}\big).
\end{equation}
\end{enumerate}
\end{fact}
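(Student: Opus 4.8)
The plan is to derive all four items directly from definition \eqref{Bdiv}, together with the Legendre calculus recalled above — namely that $\nabla\Leg$ and $\nabla\Leg^*$ are mutually inverse bijections between $\inte(\dom\Leg)$ and $\inte(\dom\Leg^*)$, and the Fenchel--Young equality, which for a Legendre function reads $\Leg(x)+\Leg^*(\nabla\Leg(x))=\scalarp{x,\nabla\Leg(x)}$ on $\inte(\dom\Leg)$. Only item (iv) requires real care. For (i) I would write out $D_\Leg(\xs,\ys)$ and $D_\Leg(\ys,\xs)$ from \eqref{Bdiv} and add them: the terms $\Leg(\xs)$ and $\Leg(\ys)$ cancel pairwise and the surviving inner products combine into $\scalarp{\xs-\ys,\nabla\Leg(\xs)-\nabla\Leg(\ys)}$ (the boundary cases being vacuous, both sides equal to $+\infty$). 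For (ii) I would put $u=\nabla\Leg(\xs)$ and $v=\nabla\Leg(\ys)$, so that $\xs=\nabla\Leg^*(u)$ and $\ys=\nabla\Leg^*(v)$, expand $D_{\Leg^*}(v,u)$ via \eqref{Bdiv}, and replace $\Leg^*(u)=\scalarp{\xs,u}-\Leg(\xs)$ and $\Leg^*(v)=\scalarp{\ys,v}-\Leg(\ys)$ by Fenchel--Young; the terms in $u$ then cancel and the remainder collapses exactly to $\Leg(\xs)-\Leg(\ys)-\scalarp{\xs-\ys,\nabla\Leg(\ys)}=D_\Leg(\xs,\ys)$.

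For (iii) I would first restrict to $\xs,\ys\in\inte(\dom\Leg)$ with $D_\Leg(\xs,\ys)<+\infty$, so that the segment $[\xs,\ys]$ lies in $\inte(\dom\Leg)$, and then introduce the scalar function $g(t)=\Leg\big((1-t)\ys+t\xs\big)$ on $[0,1]$. Since $\Leg$ is twice differentiable on $\inte(\dom\Leg)$, so is $g$, with $g'(t)=\scalarp{\nabla\Leg((1-t)\ys+t\xs),\xs-\ys}$ and $g''(t)=\scalarp{\nabla^2\Leg((1-t)\ys+t\xs)(\xs-\ys),\xs-\ys}$. Taylor's theorem with the Lagrange form of the remainder gives $g(1)=g(0)+g'(0)+\tfrac12 g''(s)$ for some $s\in(0,1)$; the left-hand side minus $g(0)+g'(0)$ equals $D_\Leg(\xs,\ys)$ by \eqref{Bdiv}, and $\tfrac12 g''(s)$ is the claimed quadratic form with $\xi=(1-s)\ys+s\xs\in[\xs,\ys]$.

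Item (iv) is where the real work lies. Under its standing hypothesis $\Leg$ is twice differentiable on $\inte(\dom\Leg)$, so $\nabla\Leg\colon\inte(\dom\Leg)\to\inte(\dom\Leg^*)$ is a homeomorphism (by the Legendre property) that is moreover differentiable with derivative $\nabla^2\Leg$. If $\nabla^2\Leg(\xs)$ is invertible at every $\xs$, I would apply the inverse function theorem for a differentiable map that is \emph{already known} to be a homeomorphism to conclude that $\nabla\Leg^*=(\nabla\Leg)^{-1}$ is differentiable at each point, with $D(\nabla\Leg^*)(\nabla\Leg(\xs))=[\nabla^2\Leg(\xs)]^{-1}$, i.e.\ $\Leg^*$ is twice differentiable. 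Conversely, if $\Leg^*$ is twice differentiable, then $\nabla\Leg$ and $\nabla\Leg^*$ are both differentiable, and differentiating the identity $\nabla\Leg\circ\nabla\Leg^*=\Id$ on $\inte(\dom\Leg^*)$ via the chain rule yields $\nabla^2\Leg(\nabla\Leg^*(v))\,\nabla^2\Leg^*(v)=\Id$ for every $v$; since every $\xs\in\inte(\dom\Leg)$ equals $\nabla\Leg^*(v)$ for $v=\nabla\Leg(\xs)$, the square matrix $\nabla^2\Leg(\xs)$ has a right inverse and is hence invertible. The hard part is precisely the first implication: one cannot use the ordinary $C^1$ inverse function theorem here, because only second-order differentiability — not continuity of $\nabla^2\Leg$ — is assumed, so one must invoke the weaker version valid for differentiable maps that are a priori local homeomorphisms, which is available thanks to the Legendre structure. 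If instead one assumes $\Leg\in C^2$, the classical inverse function theorem applies directly and the subtlety disappears.
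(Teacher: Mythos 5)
The paper offers no proof of this Fact: it is stated in Appendix~\ref{app:bregman} as standard background on Legendre functions and Bregman distances, so there is no in-paper argument to compare against; your proposal has to stand on its own, and it does. Items \ref{fact_i}--\ref{fact_iii} are handled by the standard computations and are correct: the algebraic cancellation for \ref{fact_i}; the substitution $u=\nabla\Leg(\xs)$, $v=\nabla\Leg(\ys)$ together with the Fenchel--Young equality $\Leg^*(\nabla\Leg(\xs))=\scalarp{\xs,\nabla\Leg(\xs)}-\Leg(\xs)$ for \ref{fact_ii}; and the one-variable Taylor--Lagrange expansion of $g(t)=\Leg((1-t)\ys+t\xs)$ for \ref{fact_iii}, where, as you use implicitly, pointwise twice differentiability of $\Leg$ on the open segment suffices both for the chain rule and for the Lagrange form of the remainder, so no $C^2$ assumption is needed. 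The genuinely delicate point is \ref{fact_iv}, and you identified and resolved it correctly: the classical $C^1$ inverse function theorem is unavailable because only pointwise second-order differentiability is assumed, but since the Legendre property already makes $\nabla\Leg\colon\inte(\dom\Leg)\to\inte(\dom\Leg^*)$ a homeomorphism with inverse $\nabla\Leg^*$, the elementary lemma that a homeomorphism differentiable at a point with invertible differential has an inverse differentiable at the image point (with differential the inverse operator) gives the forward implication, and differentiating $\nabla\Leg\circ\nabla\Leg^*=\Id$ gives the converse, with invertibility following from the right inverse because $\X$ is finite dimensional.

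Two minor caveats, both inherited from the way the Fact is phrased rather than defects of your argument. First, in \ref{fact_i}--\ref{fact_iii} the right-hand sides involve $\nabla\Leg(\xs)$ and $\nabla\Leg(\ys)$, so the identities only make literal sense when the relevant points lie in $\inte(\dom\Leg)$; your remark that ``both sides equal $+\infty$'' at the boundary is not quite accurate (the right-hand side is undefined, not infinite), and it is cleaner to simply read the statements on the interior, as the paper implicitly does. Second, in \ref{fact_iii} your restriction to $\xs,\ys\in\inte(\dom\Leg)$ omits the case $\xs\in\dom\Leg\setminus\inte(\dom\Leg)$, $\ys\in\inte(\dom\Leg)$, where $D_\Leg(\xs,\ys)$ is still finite; your Taylor argument covers it too, since the half-open segment $\{(1-t)\ys+t\xs\,\vert\, t\in[0,1)\}$ lies in $\inte(\dom\Leg)$ and $g$ is continuous at $t=1$ by one-dimensional convexity and closedness of $\Leg$. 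Neither point affects the correctness of the core arguments.
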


\begin{fact}
\label{fact1}
Let $\Leg$ be a Legendre function on $\X$ and $\Leg^*$ be it's Fenchel conjugate. If $\dom\Leg^*$ is open, then the following hold
\begin{enumerate}[label={\rm (\roman*)}]
\item\label{fact1_i} For every $\xs \in \inte(\dom \Leg)$,  
the sublevel sets of $D_\Leg(\xs,\cdot)$ are compact.
\item\label{fact1_ii} For every $\xs \in \inte(\dom \Leg)$,  
and every sequence $(\ys_k)_{k \in \N}$ in $\inte(\dom \Leg)$
\begin{equation}
D_\Leg(\xs,\ys_k) \to 0\ \Rightarrow\  \ys_k \to \xs.
\end{equation}
\end{enumerate}
\end{fact}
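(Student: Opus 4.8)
The plan is to establish part~\ref{fact1_i} by transferring the sublevel-set question to the conjugate side, and then to deduce part~\ref{fact1_ii} from it by a routine compactness argument. Throughout, fix $\xs\in\inte(\dom\Leg)$ and set $\xs^*:=\nabla\Leg(\xs)\in\inte(\dom\Leg^*)$.

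For part~\ref{fact1_i}, I would first rewrite the Bregman distance. Using Fact~\ref{fact}\ref{fact_ii}, the Fenchel--Young equality $\Leg(\xs)+\Leg^*(\xs^*)=\scalarp{\xs,\xs^*}$, and $\nabla\Leg^*(\xs^*)=\xs$, a short computation gives, for every $\ys\in\inte(\dom\Leg)$,
\[
D_\Leg(\xs,\ys)=\Leg^*(\nabla\Leg(\ys))-\scalarp{\nabla\Leg(\ys),\xs}+\Leg(\xs).
\]
Introduce the proper, closed, convex function $h:=\Leg^*-\scalarp{\,\cdot\,,\xs}$. Since $\nabla\Leg\colon\inte(\dom\Leg)\to\inte(\dom\Leg^*)$ is a bijection with continuous inverse $\nabla\Leg^*$, and since $D_\Leg(\xs,\cdot)=+\infty$ off $\inte(\dom\Leg)$, the $t$-sublevel set of $D_\Leg(\xs,\cdot)$ equals $\nabla\Leg^*(T_t)$, where $T_t:=\{\ys^*\in\X\,\vert\,h(\ys^*)\le t-\Leg(\xs)\}$. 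The hypothesis that $\dom\Leg^*$ is open is used precisely here: it forces $T_t\subset\dom\Leg^*=\inte(\dom\Leg^*)$, so that $\nabla\Leg^*$ is defined and continuous on all of $T_t$. It then remains only to show that $T_t$ is compact; it is closed, being a sublevel set of the closed function $h$, and it is bounded because $h$ is coercive, which follows from the standard criterion that a proper closed convex function has bounded sublevel sets if and only if the origin lies in the interior of the domain of its conjugate, together with the computation $h^*(z)=\Leg(z+\xs)$, whence $0\in\inte(\dom h^*)\iff\xs\in\inte(\dom\Leg)$. So $T_t$ is compact, and hence so is its continuous image $\nabla\Leg^*(T_t)$.

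For part~\ref{fact1_ii}, let $(\ys_k)_{k\in\N}$ lie in $\inte(\dom\Leg)$ with $D_\Leg(\xs,\ys_k)\to0$. For $k$ large, $D_\Leg(\xs,\ys_k)\le 1$, so by part~\ref{fact1_i} the tail of $(\ys_k)$ lies in the compact set $S:=\{\ys\,\vert\,D_\Leg(\xs,\ys)\le 1\}$, which by the preceding paragraph is contained in $\inte(\dom\Leg)$; in particular the sequence is bounded. Let $\bar\ys$ be any cluster point; then $\bar\ys\in S\subset\inte(\dom\Leg)$, and since $D_\Leg(\xs,\cdot)$ is continuous on $\inte(\dom\Leg)$ (as both $\Leg$ and $\nabla\Leg$ are), we obtain $D_\Leg(\xs,\bar\ys)=0$. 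By essential strict convexity, $\Leg$ is strictly convex on the segment $[\xs,\bar\ys]\subset\inte(\dom\Leg)\subset\dom\partial\Leg$, and strict convexity makes the subgradient inequality underlying $D_\Leg(\xs,\bar\ys)$ strict unless $\xs=\bar\ys$; hence $\bar\ys=\xs$. A bounded sequence whose only cluster point is $\xs$ converges to $\xs$, which is the claim.

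I expect the one genuinely nontrivial step to be the reduction in part~\ref{fact1_i}: correctly transporting the sublevel set through the Legendre pairing $\nabla\Leg\leftrightarrow\nabla\Leg^*$ --- the point where openness of $\dom\Leg^*$ is indispensable --- and recognizing $h^*=\Leg(\,\cdot+\xs)$, which reduces coercivity of $h$ to the standing hypothesis $\xs\in\inte(\dom\Leg)$. Everything else, including the continuity of $D_\Leg(\xs,\cdot)$ on the interior of its domain and the cluster-point argument of part~\ref{fact1_ii}, is routine.
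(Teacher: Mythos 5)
Your proof is correct, and there is nothing in the paper to compare it against: the paper states this Fact as background material in Appendix~A without proof (it is quoted as a known property of Legendre functions). Your argument is complete as written. The conjugate-side rewriting $D_\Leg(\xs,\ys)=\Leg^*(\nabla\Leg(\ys))-\scalarp{\nabla\Leg(\ys),\xs}+\Leg(\xs)$ is exactly Fact~\ref{fact}\ref{fact_ii} combined with Fenchel--Young; the identification of the sublevel set with $\nabla\Leg^*(T_t)$ is valid because $D_\Leg(\xs,\cdot)\equiv+\infty$ outside $\inte(\dom\Leg)$ and $\nabla\Leg$, $\nabla\Leg^*$ are mutually inverse bijections between the interiors, with openness of $\dom\Leg^*$ guaranteeing $T_t\subset\inte(\dom\Leg^*)$ so that $\nabla\Leg^*$ is defined and continuous on all of $T_t$ --- which is indeed the only place that hypothesis is needed; the coercivity criterion (bounded sublevel sets of a proper closed convex $h$ iff $0\in\inte(\dom h^*)$) together with $h^*=\Leg(\cdot+\xs)$ correctly reduces boundedness to $\xs\in\inte(\dom\Leg)$; and in part~\ref{fact1_ii} the passage from $D_\Leg(\xs,\bar\ys)=0$ to $\bar\ys=\xs$ legitimately uses that essential strict convexity gives strict convexity on the segment $[\xs,\bar\ys]\subset\inte(\dom\Leg)\subset\dom\partial\Leg$, so the gradient inequality is strict unless the points coincide.
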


Let $\Cc \subset \X$ be an affine set, represented
as follows
\begin{equation}
\label{eq:affineset}
A\colon \X\to\Y,\qquad b\in\range(A),\qquad \Cc := \{\xs\in\X \,\vert\, A\xs = b \},
\end{equation}
for some linear operator $A$ between $\X$ and another Euclidean space $\Y$.
Given a Legendre function $\Leg\colon \X \to \left]-\infty,+\infty\right]$ and $\xs \in \inte(\dom \Leg)$, the \emph{Bregman projection} of $\xs$ onto $\Cc$ is defined as
the unique solution, denoted by  $\Pc^\Leg_\Cc(\xs)$, of the optimization problem
\begin{equation}
\min_{\ys\in\Cc}D_\Leg(\ys,\xs) = \min_{\ys\in\Cc} \Leg(\ys) - \Leg(\xs) - \scalarp{\ys-\xs, \nabla \Leg(\xs)}
\end{equation}
and the optimal value defines the \emph{Bregman distance from $\xs$ to $\Cc$} 
and is denoted by $D_{\Cc}^{\Leg}(\xs)$.
The dual of the above problem is 
\begin{equation}\label{eq:BP_dual}
\min_{\lambda \in \Y}  \Leg^*(\nabla\Leg(\xs) + A^* \lambda) - \Leg^*(\nabla\Leg(\xs)) - \scalarp{b,\lambda}
\end{equation}
and strong duality holds, meaning that
\begin{equation}
\min_{\ys\in\Cc}D_\Leg(\ys,\xs) = - \min_{\lambda \in \Y} \big[ \Leg^*(\nabla\Leg(\xs) + A^* \lambda) - \Leg^*(\nabla\Leg(\xs)) - \scalarp{b,\lambda} \big].
\end{equation}
Moreover, the following KKT conditions hold for 
a couple $( \Xs^\star,\lambda^\star)$ solving the primal and dual problem above
\begin{equation}
\label{eq:kkt}
\pi^\star \in \inte(\dom \Leg),\quad
A \Xs^\star = b \quad\text{and} \quad \nabla\Leg(\xs) + A^* \lambda^\star = \nabla \Leg( \Xs^\star).
\end{equation}
Note that the KKT conditions characterizes the projection, so that
\begin{equation}
\label{eq:kkt2}
\Xs^\star = \Pc^\Leg_\Cc(\xs)
\ \Leftrightarrow\ 
\big(\pi^\star \in \inte(\dom \Leg),\ 
A \Xs^\star = b, \ \text{and} \ \nabla\Leg(\xs) - \nabla \Leg( \Xs^\star) \in \range(A^*) \big).
\end{equation}

Finally we mention the \emph{generalized Pythagora's theorem}. If $\Cc_1$ is an affine set such that $\Cc \subset \Cc_1$, then, for every $\xs\in\inte(\dom\Leg)$ it holds
\begin{equation}
\label{eq:Pytha}
D_\Cc^{\Leg}(\xs) = D_{\Cc_1}^\Leg(\xs) + D_{\Cc}^\Leg(\Pc^\Leg_{\Cc_1}(\xs)).
\end{equation}
Moreover, in this case  $\Pc^\Leg_\Cc(\xs)=\Pc^\Leg_\Cc(\Pc^\Leg_{\Cc_1}(\xs))$, and 
\begin{equation}\label{eq:same_proj}
 (\forall \gamma\in\inte(\dom\Leg)) \quad \nabla\Leg(\gamma) - \nabla\Leg(\pi)\in\range(A^*) \Longleftrightarrow \Pc_\Cc(\gamma) = \Pc_\Cc(\pi).
\end{equation}


In the following we let $\phi\colon \R \to \left]-\infty,+\infty\right[$ be the (negative)
\emph{Boltzmann-Shannon entropy}, that is,
\begin{equation*}
\phi (t) = 
\begin{cases}
t\log t - t &\text{if } t > 0\\
0 &\text{if } t=0\\
+\infty &\text{if } t<0.
\end{cases}
\end{equation*}
It is clear that $\phi^*(s) = \exp(s)$.
We define the Bregman distance associated to $\phi$
\begin{align}
\nonumber
D_\phi(s,t) &= 
\begin{cases}
\phi(s) - \phi(t) - \phi^\prime(t)(s-t) & \text{if } t>0\\
+\infty &\text{otherwise}.
\end{cases}\\[1ex]
\label{eq:KL1}
&=\begin{cases}
s \log \frac{s}{t} - s + t & \text{if } t>0\\
+\infty &\text{otherwise},
\end{cases}
\end{align}
which is nothing but the \emph{Kullback-Leibler divergence} on $\R$.

\begin{proposition}
\label{prop:SGphi}
Let $M>0$. The following hold.
\begin{enumerate}[label={\rm (\roman*)}]
\item\label{prop:SGphi_i} The function $\phi$ is strongly convex on the interval 
$\left]0, M \right]$
with modulus of strong convexity equal to $1/M$. Moreover,
for every $a>0$ and $s,t \in \R_{++}$, $D_\phi(s,t) = a D_\phi(s/a, t/a)$.
\item\label{prop:SGphi_ii} The function $\phi^*$ is strongly convex on the interval 
$\left[ - M, +\infty \right[$ with modulus of strong convexity equal to $\exp(-M)$.
\end{enumerate}
\end{proposition}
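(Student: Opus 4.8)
The plan is to reduce both claims to elementary one-variable calculus, since $\phi$ and $\phi^*$ are given by explicit smooth formulas on the intervals in question, and then invoke the standard second-derivative characterization of strong convexity on a convex set.

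For \ref{prop:SGphi_i}, I would first record that on $]0,+\infty[$ the function $\phi$ is twice differentiable with $\phi'(t)=\log t$ and $\phi''(t)=1/t$, so on the convex set $]0,M]$ one has $\phi''(t)=1/t\geq 1/M$. This gives $(1/M)$-strong convexity on the open interval $]0,M[$, and continuity of the (already convex) function $\phi$ at $M$ upgrades it to $]0,M]$; equivalently, Taylor's theorem with Lagrange remainder yields, for any $s,t\in\,]0,M]$, a point $\xi$ on the segment $[s,t]\subset\,]0,M]$ with $D_\phi(s,t)=\tfrac12\phi''(\xi)(s-t)^2\geq\tfrac{1}{2M}(s-t)^2$, which is exactly the Bregman form of $(1/M)$-strong convexity and sidesteps any interior/closure subtlety. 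The scaling identity is then a one-line computation: from the closed form $D_\phi(s,t)=s\log(s/t)-s+t$ recorded in \eqref{eq:KL1}, replacing $(s,t)$ by $(s/a,t/a)$ and using $\log\big((s/a)/(t/a)\big)=\log(s/t)$ factors out $1/a$, i.e. $D_\phi(s/a,t/a)=\tfrac1a D_\phi(s,t)$, hence $a\,D_\phi(s/a,t/a)=D_\phi(s,t)$.

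Part \ref{prop:SGphi_ii} is entirely analogous with $\phi^*(s)=\exp(s)$, which is smooth on all of $\R$ with $(\phi^*)''(s)=\exp(s)$. On the convex set $[-M,+\infty[$ we have $(\phi^*)''(s)=\exp(s)\geq\exp(-M)$, so the same second-derivative criterion (or the Lagrange-remainder form of $D_{\phi^*}$) gives $\exp(-M)$-strong convexity there. There is essentially no obstacle in this proof; the only point that calls for a word of care is that the intervals $]0,M]$ and $[-M,+\infty[$ include a boundary point, so the ``$f''\geq\mu$'' criterion is applied on the interior and extended by continuity, or, more cleanly, one works directly with the Lagrange-remainder expression for the Bregman distance, which is valid on the whole convex interval.
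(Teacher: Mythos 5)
Your proposal is correct and follows essentially the same route as the paper: both parts rest on the second-derivative lower bounds $\phi''(t)=1/t\geq 1/M$ on $\left]0,M\right]$ and $(\phi^*)''(s)=e^{s}\geq e^{-M}$ on $\left[-M,+\infty\right[$, and the scaling identity is read off directly from the closed form \eqref{eq:KL1}. Your extra remark about handling the boundary point via the Lagrange-remainder form of the Bregman distance is a harmless refinement of the same argument.
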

\begin{proof}
\ref{prop:SGphi_i}: It follows from the fact that the second derivative of $\phi$
is $\phi^{\prime\prime}(t)= 1/t$, which is bounded from below away from zero
on the interval $\left]0, M \right]$ by the constant $1/M$. the second part follows directly from the definition \eqref{eq:KL1}.

\ref{prop:SGphi_ii}: 
It follows from the fact that the second derivative of $\exp$ is bounded from below away from zero on the interval $\left[ - M, +\infty \right[$ by $\exp(-M)$.
\end{proof}

The negative entropy and the Kullback-Leibler divergence on $\Xx$ are
\begin{equation}
\mathsf{H}(\gamma) = \sum_{j} \phi(\gamma_j)
\quad\text{and}\quad 
\mathsf{KL}(\gamma,\pi) = \sum_{j} D_\phi(\gamma_j, \pi_j).
\end{equation}

\begin{lemma}\label{fact:KLbound}
Let $\pi,\gamma, \alpha\in\Xx_{++}$ and suppose that
\begin{equation*}
0<M_1\leq \min_{j} \frac{\min\{\pi_{j},\gamma_{j}\}}{\alpha_{j}} \leq 
\max_{j} \frac{\max\{\pi_{j}, \gamma_{j}\}}
{\alpha_{j}} \leq M_2.
\end{equation*} 
Then, setting $\As = \alpha\odot(\cdot)\colon\Xx\to\Xx$ (which is a positive diagonal operator), we have
\begin{equation}\label{eq:KLbound}
\KL(\pi,\gamma) \geq \max\left\{\frac{M_1}{2} \norm{\log \pi-\log\gamma}_\As^2,\frac{1}{2 M_2} \norm{\pi-\gamma}_{\As^{-1}}^2\right\}.
\end{equation} 
\end{lemma}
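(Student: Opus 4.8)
The plan is to establish the two lower bounds comprising the maximum separately, reducing in each case to a scalar estimate by exploiting the separable form $\KL(\pi,\gamma)=\sum_j D_\phi(\pi_j,\gamma_j)$ together with the scaling identity $D_\phi(s,t)=a\,D_\phi(s/a,t/a)$ from Proposition~\ref{prop:SGphi}\ref{prop:SGphi_i}. Fix an index $j$ and put $s:=\pi_j/\alpha_j$ and $t:=\gamma_j/\alpha_j$ (both depending on $j$); by hypothesis $s,t\in[M_1,M_2]\subset\;]0,+\infty[$, and the scaling identity gives $D_\phi(\pi_j,\gamma_j)=\alpha_j\,D_\phi(s,t)$. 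Since $\log\pi_j-\log\gamma_j=\log s-\log t$ and $\pi_j-\gamma_j=\alpha_j(s-t)$, one has $\norm{\log\pi-\log\gamma}_\As^2=\sum_j\alpha_j(\log s-\log t)^2$ and $\norm{\pi-\gamma}_{\As^{-1}}^2=\sum_j\alpha_j^{-1}(\pi_j-\gamma_j)^2=\sum_j\alpha_j(s-t)^2$. Hence it suffices to prove, for all $s,t\in[M_1,M_2]$,
\begin{equation*}
D_\phi(s,t)\ \geq\ \frac{M_1}{2}(\log s-\log t)^2
\qquad\text{and}\qquad
D_\phi(s,t)\ \geq\ \frac{1}{2M_2}(s-t)^2,
\end{equation*}
then multiply each by $\alpha_j$ and sum over $j$.

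For the second scalar inequality I would apply Fact~\ref{fact}\ref{fact_iii} to $\phi$, which is $C^2$ on $]0,+\infty[$ with $\phi''(r)=1/r$: there is $\xi$ on the segment $[s,t]\subseteq[M_1,M_2]$ with $D_\phi(s,t)=\tfrac12\phi''(\xi)(s-t)^2=\tfrac{1}{2\xi}(s-t)^2\geq\tfrac{1}{2M_2}(s-t)^2$, using $\xi\leq M_2$; equivalently this is the $(1/M_2)$-strong convexity of $\phi$ on $]0,M_2]$ from Proposition~\ref{prop:SGphi}\ref{prop:SGphi_i}. For the first scalar inequality I would first pass to the conjugate via Fact~\ref{fact}\ref{fact_ii}: since $\phi^*=\exp$ and $\nabla\phi=\log$, we get $D_\phi(s,t)=D_{\phi^*}(\log t,\log s)$. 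Then Fact~\ref{fact}\ref{fact_iii} applied to $\exp$ (twice differentiable, $\exp''=\exp$) gives $D_{\phi^*}(\log t,\log s)=\tfrac12 e^{\xi}(\log t-\log s)^2$ for some $\xi$ between $\log t$ and $\log s$, hence $\xi\in[\log M_1,\log M_2]$ and $e^\xi\geq M_1$, so $D_\phi(s,t)\geq\tfrac{M_1}{2}(\log s-\log t)^2$.

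Summing the two componentwise bounds after multiplying by $\alpha_j$ then yields $\KL(\pi,\gamma)\geq\tfrac{M_1}{2}\norm{\log\pi-\log\gamma}_\As^2$ and $\KL(\pi,\gamma)\geq\tfrac{1}{2M_2}\norm{\pi-\gamma}_{\As^{-1}}^2$, and taking the larger of the two proves \eqref{eq:KLbound}. I do not expect a genuine obstacle here; the only points requiring care are verifying that the rescaled entries $\pi_j/\alpha_j,\gamma_j/\alpha_j$ and their logarithms really lie in the bounded intervals where $\phi''$ and $\exp''$ are controlled, and keeping the $\alpha_j$ bookkeeping consistent with the definitions of $\norm{\cdot}_\As$ and $\norm{\cdot}_{\As^{-1}}$ — it is precisely the scale invariance of the log-difference that makes the first bound come out with weight $\alpha_j$.
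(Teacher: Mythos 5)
Your proof is correct and follows essentially the same route as the paper: reduce to the scalar divergence $D_\phi$ via separability and the scaling identity $D_\phi(s,t)=a\,D_\phi(s/a,t/a)$, then use the second-derivative lower bound for $\phi$ on $[M_1,M_2]$ for the $\norm{\cdot}_{\As^{-1}}$ estimate and pass to the conjugate $\phi^*=\exp$ (via Fact~\ref{fact}\ref{fact_ii}) for the $\norm{\cdot}_{\As}$ estimate. The only cosmetic difference is that you invoke the mean-value representation of Fact~\ref{fact}\ref{fact_iii} where the paper quotes the strong-convexity moduli of Proposition~\ref{prop:SGphi}, which is the same content.
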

\begin{proof}
It follows from Proposition~\ref{prop:SGphi}\ref{prop:SGphi_i}
that, for $a>0$ and $s,t>0$ such that $s/a,t/a \leq M$, we have
 $D_\phi(s,t) = a D_\phi(s/a,t/a)\geq a (2 M)^{-1} \abs{s/a - t/a}^2 = 
 (2 M)^{-1} a^{-1} \abs{s-t}^2$. Thus, since $\gamma_j/\alpha_j, \pi_j/\alpha_j  \leq M_2$,
 we have
 \begin{equation*}
 \mathsf{KL}(\pi, \gamma)= \sum_{j} D_\phi(\pi_j, \gamma_j)
 \geq \frac{1}{2 M_2} \sum_{j} \frac{1}{\alpha_j} \abs{\pi_j - \gamma_j}^2
 = \frac{1}{2 M_2} \norm{\pi - \gamma}^2_{\As^{-1}}.
\end{equation*}
Now, it follows from Proposition~\ref{prop:SGphi}\ref{prop:SGphi_ii} that
for every $a,s,t>0$ such that $s/a,t/a \geq e^{-M}$ we have $\log (s/a), \log (t/a) \geq - M$ and hence $D_\phi(s,t) = a D_\phi(s/a,t/a) = a D_{\phi^*}(\log (s/a), \log (t/a)) 
\geq a (e^{-M}/2) \abs{\log s - \log t}^2$. Therefore, since $\pi_j/\alpha_j, \gamma_j/\alpha_j \geq M_1$, we have
 \begin{equation*}
 \mathsf{KL}(\pi, \gamma)= \sum_{j} D_\phi(\pi_j, \gamma_j)
 \geq \frac{M_1}{2} \sum_{j} \alpha_j \abs{\log \pi_j - \log \gamma_j}^2
 = \frac{M_1}{2} \norm{\log \pi - \log \gamma}_{\As}^2.
 \qedhere
\end{equation*}
\end{proof}

\section{BatchGreenkhorn algorithm and its implementation}\label{app:alg}

Here we provide proofs of the results in Sec.~\ref{sec:bg}.

\propKLproj*
\begin{proof}
It follows from \eqref{eq:transpoly2} that
\begin{equation}
\Pi_{(k,L)} 
= \Big\{ \xs \in \Xx \,\big\vert\, 
\M_{(k,L)}(\xs) = \as_{k\lvert L} \Big\}.
\end{equation}
Then the first equality in \eqref{eq:KL_proj_def} follows directly form the KKT conditions \eqref{eq:kkt},
which in this case yields
\begin{equation}
\label{eq:20210908a}
    \M_{(k,L)}(\bar{\xs}) = \as_{k\lvert L}
    \quad\text{and}\quad
    \bar{\xs} = \nabla \Ent^* \big(\nabla \Ent(\xs) + \M^*_{(k,L)} (\bar{\us})\big),
\end{equation}
where, according to \eqref{eq:BP_dual}, the dual parameter $\bar{\us} \in \R^L$  solves the minimization problem in  \eqref{eq:dual_param}. Now, since for every $j \in \mathcal{J}$,  $(\nabla \Ent(\xs))_{j} = \log(\xs_{j})$ and $(\nabla \Ent^*(\gamma))_{j} = \exp(\gamma_{j})$, then \eqref{eq:20210908a} gives
\begin{equation*}
    \bar{\xs}_{j} = \exp ( \log (\xs_{j}) + \M^*_{(k,L)}(\bar{\us}))_{j})
    = \xs_{j}  \exp\big( (\M^*_{(k,L)}(\bar{\us}))_{j}\big)
\end{equation*}
and the second equality in \eqref{eq:KL_proj_def} follows.

Now, let $J^{(k)}_L\colon \R^L \to \R^{n_k}$ be the canonical injection of $\R^L$ into $\R^{n_k}$.
Then, recalling the definition of $\M_{(k,L)}$ in \eqref{eq:Rkloper}, we have
$\M_{(k,L)} = J^{(k)*}_L \M_k$ and hence $\M^*_{(k,L)} = \M_k^* J_L^{(k)}$,
where  $\M_k^* \colon \R^{n_k} \to \Xx$ acts as
$(\M_k^* \vs)_{j} = \vs_{j_k}$. Therefore, for every $j \in \mathcal{J}$,
\begin{equation*}
    (\M^*_{(k,L)} \bar{\us})_{j} =
    (\M_k^* J^{(k)}_L \bar{\us})_{j}
    = (J^{(k)}_L \bar{\us})_{j_k}
    = \begin{cases}
    \bar{u}_{j_k} &\text{if } j_k \in L\\
    0 &\text{otherwise}.
    \end{cases}
\end{equation*}
Hence,
\begin{equation}
\label{eq:20210603a}
\bar{\xs}_{j} = \xs_{j}
\times
\begin{cases}
e^{\bar{u}_{j_k}} & \text{if } j_k \in L\\
1 & \text{otherwise}.
\end{cases}
\end{equation}
On the other hand, 
since $\as_{k\lvert L} = J^{(k)*}_L \M_k \bar{\xs}$,
by \eqref{eq:20210603a}, we derive that, for every $j_k \in L$, 
\begin{equation}
\label{eq:20210604a}
a_{k,j_k} = (\M_k \bar{\xs})_{j_k} = 
\sum_{j_{-k} \in \mathcal{J}_{-k}}
\bar{\xs}_{(j_{-k}, j_k)}
= e^{\bar{u}_j} (\M_k \xs)_{j},
\end{equation}
so that $e^{\bar{u}_{j_k}} = a_{k,j_k}/(\M_k \xs)_{j_k}$.
Hence now \eqref{eq:KL_proj} follows from \eqref{eq:20210603a}. Concerning the formula for the distance, by \eqref{eq:KL_proj}, we have that,
\begin{align*}
    D^\Leg_{\Pi_{(k,L)}} (\xs)
    &= D^\Leg(\bar{\xs},\xs)\\
    &= \sum_{j \in \mathcal{J}}
    \bar{\xs}_{j} \log \Big(\frac{\bar{\xs}_{j}}{\xs_{j}} \Big) - \bar{\xs}_{j} 
    + \xs_{j}\\
    & = \sum_{j_k \in L} 
    \sum_{j_{-k} \in \mathcal{J}_{-k}}
    \bar{\xs}_{(j_{-k}, j_k)}
    \log \Big(\frac{a_{k,j_k}}{(\M_k \xs)_{j_k}} \Big) - \bar{\xs}_{(j_{-k}, j_k)} 
    + \xs_{(j_{-k}, j_k)}\\[1ex]
    & = \sum_{j_k \in L} a_{k,j_k} \log \Big(\frac{a_{k,j_k}}{(\M_k \xs)_{j_k}} \Big)
    - a_{k,j_k} + (\M_k \xs)_{j_k}\\[1ex]
    & = \mathsf{KL} ({\as_k}_{\lvert L}, \M_k(\xs)_{\lvert L}),
\end{align*}
which completes the proof.
\end{proof}

The following proposition justifies equation \eqref{eq:primaldual}
and Algorithm~\ref{alg}.
\begin{proposition}\label{prop:primaldual}
Let $(\xs^t)_{t \in \N}$ be defined according to algorithm \eqref{eq:IBP}. Then,
we have
\begin{equation}
\label{eq:20211108b}
(\forall\, t \in \N)
\qquad
    \xs^t
    = \exp \Big( - \Cs/\eta + \bigoplus_{k=1}^m \vs^t_k \Big) 
    \odot \bigotimes_{k=1}^m \as_k,
\end{equation}
where $\vs^t_k = (v^t_{k,j})_{1 \leq j \leq n_k} \in \R^{n_k}$ and
\begin{equation}\label{eq:vupdate}
    \vs_k^{t+1} = \delta_{k,k_t} J_{L_t}^{(k_t)} \us^t + \vs_k^t,
    \quad
    \us^t  = \log \as_{k_t\lvert L_t} - \log (\M_{k_t} \xs^t)_{\lvert L_t},
\end{equation}
$J_{L_t}^{(k_t)} \colon \R^{L_t} \to \R^{n_{k_t}}$ is the canonical injection,  $\delta_{k,k_t}$ is the Kronecker symbol, and $\vs^0_k$, $k\in[m]$ are arbitrary.
\end{proposition}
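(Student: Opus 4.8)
The plan is to prove \eqref{eq:20211108b} and \eqref{eq:vupdate} simultaneously by induction on $t$, using Proposition~\ref{prop:KL_proj} to unfold each iterate of \eqref{eq:IBP}. For the base case $t=0$, the initialization $\xs^0 = \xi \odot \otimes_{k=1}^m \as_k = \exp(-\Cs/\eta) \odot \otimes_{k=1}^m \as_k$ matches \eqref{eq:20211108b} with $\vs^0_k = 0$ for all $k$; more generally, if one allows arbitrary $\vs^0_k$, one simply absorbs the factor $\exp(\oplus_k \vs^0_k)$ into the starting point, and the argument is unchanged.

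For the induction step, assume $\xs^t = \exp(-\Cs/\eta + \oplus_{k=1}^m \vs^t_k) \odot \otimes_{k=1}^m \as_k$. By \eqref{eq:IBP} we have $\xs^{t+1} = \Pc_{\Pi_{(k_t,L_t)}}(\xs^t)$, and by the closed form \eqref{eq:KL_proj_def}--\eqref{eq:dual_param} of Proposition~\ref{prop:KL_proj},
\begin{equation*}
\xs^{t+1} = \xs^t \odot \exp\big( \M^*_{(k_t,L_t)}(\bar{\us}^t) \big),
\qquad
\bar{\us}^t = \log \frac{\as_{k_t\lvert L_t}}{(\M_{k_t}(\xs^t))_{\lvert L_t}}.
\end{equation*}
The key computation, already carried out inside the proof of Proposition~\ref{prop:KL_proj}, is that $\M^*_{(k_t,L_t)} = \M^*_{k_t} J^{(k_t)}_{L_t}$ and that $(\M^*_{k_t} \vs)_j = \vs_{j_{k_t}}$, so $\M^*_{(k_t,L_t)}(\bar{\us}^t)$ is exactly the tensor whose $j$-th entry is $\bar{u}^t_{j_{k_t}}$ when $j_{k_t} \in L_t$ and $0$ otherwise; equivalently, $\M^*_{(k_t,L_t)}(\bar{\us}^t) = \bigoplus_{k=1}^m (\delta_{k,k_t} J^{(k_t)}_{L_t} \bar{\us}^t)$ as elements of $\Xx$, since the $\oplus$ of the zero vectors in the other coordinates contributes nothing. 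Substituting the inductive form of $\xs^t$, using that $\oplus$ is additive, i.e. $\exp(-\Cs/\eta + \oplus_k \vs^t_k) \odot \exp(\oplus_k \delta_{k,k_t} J^{(k_t)}_{L_t}\bar\us^t) = \exp(-\Cs/\eta + \oplus_k (\vs^t_k + \delta_{k,k_t} J^{(k_t)}_{L_t}\bar\us^t))$, and keeping the $\otimes_{k}\as_k$ factor untouched, gives \eqref{eq:20211108b} at time $t+1$ with $\vs^{t+1}_k = \delta_{k,k_t} J^{(k_t)}_{L_t}\bar\us^t + \vs^t_k$. Finally, it remains only to identify $\bar\us^t$ with the $\us^t$ of \eqref{eq:vupdate}: this is immediate since $\bar\us^t = \log \as_{k_t\lvert L_t} - \log (\M_{k_t}(\xs^t))_{\lvert L_t}$, which is precisely $\us^t$.

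I expect the only mild obstacle to be notational bookkeeping: reconciling the three ways of writing the same object — the canonical injection $\Ss_{(k,L)}$ into $\R^{n_1}\times\cdots\times\R^{n_m}$ used to define $\M_{(k,L)}$, the per-marginal injection $J^{(k)}_L \colon \R^L \to \R^{n_k}$, and the ``$\oplus$'' embedding of the resulting $m$-tuple of vectors into $\Xx$ — and checking carefully that $\M^*_{(k,L)}(\us)$, when viewed in $\Xx$, equals $\bigoplus_{k'} (\delta_{k',k} J^{(k)}_L \us)$. Once that identification is made, the induction closes by a one-line substitution, and no estimates or nontrivial analysis are needed; the essential content was already extracted in Proposition~\ref{prop:KL_proj}.
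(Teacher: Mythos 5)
Your proposal is correct and follows essentially the same route as the paper: both rest on the closed form of the projection from Proposition~\ref{prop:KL_proj}, the identity $\M^*_{(k,L)}=\M_k^* J^{(k)}_L$ (so the dual increment lives only in the $k_t$-block restricted to $L_t$), and the additivity of $\oplus$; the paper merely unrolls the recursion $\nabla\Ent(\xs^{t+1})=\nabla\Ent(\xs^t)+\M_{k_t}^*J^{(k_t)}_{L_t}\us^t$ into an explicit sum defining $\vs_k^t$, whereas you package the same computation as an induction. Your remark on absorbing an arbitrary $\vs^0_k$ into the starting point is consistent with the paper's reading of ``$\vs^0_k$ arbitrary,'' so no gap remains.
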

\begin{proof}
By definition of $\xs^{t+1}$ we have
\begin{equation}
    \xs^{t+1} = \Pc_{\Pi_{(k_t, L_t)}}(\xs^{t}),
    \quad \Pi_{(k_t, L_t)} = \big\{\xs \in \Xx \,\vert\,  J_{L_t}^{(k_t)*} \M_{k_t} \xs = 
    \as_{k_t\lvert L_t} \big\}.
\end{equation}
Then it follows from Proposition \ref{prop:KL_proj} that
\begin{equation}
    \nabla \Ent(\xs^{t+1}) = \nabla \Ent(\xs^{t}) + \M_{k_t}^* J^{(k_t)}_{L_t} \us^t
\end{equation}
with $\us^t \in \R^{L_t}$.
Therefore, applying the above equation recursively, we get
\begin{align*}
    \nabla \Ent(\xs^{t}) 
    &= \nabla \Ent(\xs^{0}) + \sum_{s=0}^{t-1} 
    \M_{k_s}^* J^{(k_s)}_{L_s} \us^s\\
    & = \nabla \Ent(\xs^{0}) 
    + \sum_{s=0}^{t-1} \sum_{k=1}^m 
    \M_{k}^* \delta_{k,k_s} J^{(k_s)}_{L_s} \us^s\\
    & = \nabla \Ent(\xs^{0}) 
    + \sum_{k=1}^m \sum_{s=0}^{t-1} 
    \M_{k}^* \delta_{k,k_s} J^{(k_s)}_{L_s} \us^s\\
    & = \nabla \Ent(\xs^{0}) 
    + \sum_{k=1}^m \M_{k}^* \Big( \sum_{s=0}^{t-1} \delta_{k,k_s}
     J^{(k_s)}_{L_s} \us^s\Big).
\end{align*}
Then if we set $\vs_k^t = \sum_{s=0}^{t-1} \delta_{k,k_s}
     J^{(k_s)}_{L_s} \us^s \in \R^{n_k}$, we have (recalling that $\pi_0 = e^{-\Cs/\eta} \odot \otimes_{k=1}^m \as_k$)
\begin{equation*}
    \xs^t_{j}
    = \exp \Big( \log(\xs^0_j) + \sum_{k=1}^m (\M_k^* \vs^t_k)_{j} \Big)
    = \exp\Big( -\Cs_{j} / \eta + \sum_{k=1}^m v^t_{k,j_k} \Big) \prod_{k=1}^m a_{k,j_k}.
\end{equation*}
Moreover, it is clear that
\begin{equation*}
    \vs_{k}^{t+1} = \sum_{s=0}^{t} \delta_{k,k_s} J_{L_s}^{(k_s)} \us^s
    = \delta_{k,k_t} J_{L_t}^{(k_t)} \us^t + \vs_k^t
\end{equation*}
Finally, it follows from \eqref{eq:dual_param} that, for every $j_{k_t} \in L_t$,
$e^{u_{j_{k_t}}^t} = a_{k_t, j_{k_t}}/(\M_k \xs^t)_{j_{k_t}}$, so that
\begin{equation*}
    (\forall\, j_{k_t} \in L_t)
    \quad u^t_{j_{k_t}} = \log a_{k_t,j_{k_t}} - \log\big( (\M_k \xs^t)_{j_{k_t}} \big).
\end{equation*}
The statement follows.
\end{proof}

Next we give more detailed implementation of the batch Greenkhorn given in Algorithm \ref{alg}.

\begin{remark}[Implementation details on Algorithm~\ref{alg}]
The most delicate part is to avoid recomputing the marginals $\M_k(\pi^t)=\rs_k^t = (r_{k,j_k}^t)_{j_k\in[n_k]}$, $k\in[m]$, (step 5) that are necessary for making the greedy choice in step 3. Now, due to equation \eqref{eq:KL_proj} in Proposition \ref{prop:KL_proj}, we have that 
\begin{equation}
\pi^{t+1}_{j} = 
\xs^t_{j} \times
\begin{cases}
\dfrac{a_{k_t,j_{k_t}}}{{\M_{k_t}(\xs^t)}_{j_{k_t}}} &\text{if } j_{k_t} \in L_t,  \\[2ex]
1 & \text{otherwise}
\end{cases}
\end{equation}
and hence
\begin{equation}
r_{k_t,j_{k_t}}^{t+1} = \begin{cases}
a_{k_t,j_{k_t}} & j_{k_t}\in L_t,\\
r_{k_t,j_{k_t}}^{t} & \text{otherwise}.
\end{cases}
\end{equation}
To derive update formula for the other marginals, observe that for all $k\neq k_t$ it 
follows from \eqref{eq:20211108b} that
\begin{align*}
r_{k,j_k}^{t+1} & = 
\sum_{j_{-k}  \in \mathcal{J}_{-k}}
\exp\Big( \log \pi^0_{(j_{-k}, j_k)}  
+ \sum_{h\neq k} v^{t+1}_{h,j_{h}} + v^{t+1}_{k,j_k}\Big)\\
& = \sum_{j_{-k} \in \mathcal{J}_{-k}} \exp\Big( \log \pi^0_{(j_{-k}, j_k)} + \sum_{h\not\in\{k,k_t\}} v^{t}_{h,j_{h}} + v^{t+1}_{k_t,j_{k_t}} + v^{t}_{k,j_k}  \Big).\\
\end{align*}
So, since according to \eqref{eq:vupdate}, $v^{t+1}_{k_t,j_{k_t}} = v^t_{k_t,j_{k_t}}$ if $j_{k_t}\not\in L_t$ and $v^{t+1}_{k_t,j_{k_t}} = v^t_{k_t,j_{k_t}} + \log(a_{k_t, j_{k_t}}/ r^t_{k_t, j_{k_t}})$ if $j_{k_t} \in L_t$, we have 
\begin{align*}
r_{k,j_k}^{t+1} = &  
\sum_{\substack{j_{-k} \in \mathcal{J}_{-k} \\ j_{k_t}\not\in L_t}} 
\exp\Big( \log \pi^0_{(j_{-k}, j_k)} 
+ \sum_{h \neq k} v^{t}_{h,j_{h}} 
+ v^{t}_{k,j_k}
\Big)\; \\
&\quad + \sum_{\substack{j_{-k} \in \mathcal{J}_{-k} \\ j_{k_t}\in L_t}} 
\exp\Big( \log \pi^0_{(j_{-k}, j_k)} 
+ \sum_{h \neq k} v^{t}_{h,j_{h}} 
+ v^{t}_{k,j_k}
\Big) \frac{a_{k_t,j_{k_t}}}{r^t_{k_t,j_{k_t}}}\\
= & \sum_{\substack{j_{-k} \in \mathcal{J}_{-k} \\ j_{k_t}\in L_t}} 
\exp\Big( \log \pi^0_{(j_{-k}, j_k)} + \sum_{h\neq k} v^{t}_{h,j_{h}} + v^{t}_{k,j_k}\Big) \Big(\frac{a_{k_t,j_{k_t}}}{r^t_{k_t,j_{k_t}}}-1\Big)
 + r_{k,j_k}^{t}.
\end{align*}
Therefore, at each iteration $t\geq0$ we will construct an auxiliary tensor $\widetilde \pi^{t}\in \R_+^{n_1\times\cdots\times n_{k_t-1}\times 1 \times n_{k_t+1} \times\cdots\times n_m}$ by 
\begin{align}
\widetilde\pi^t_{j_1,\ldots,j_{k_t-1},1,j_{k_t+1},\ldots,j_m} =& \displaystyle{\sum_{j_{k_t}\in L_t} \exp\Big( \log \pi^0_{j_1,\ldots,j_{k_t-1},j_{k_t},j_{k_t+1},\ldots,j_m} + \sum_{k\in[m]} v^{t}_{k,j_k}  }\label{eq:pihat1}\\
& \quad + \log \abs{a_{k_t,j_{k_t}} - r^{t}_{k_t,j_{k_t}}} - \log(r^{t}_{k_t,j_{k_t}})\Big) \sgn(a_{k_t,j_{k_t}} - r^{t}_{k_t,j_{k_t}}), \label{eq:pihat2}
\end{align}
in order to obtain that for every $k\neq k_t$, $\rs^{t+1}_k = \rs^{t+1}_k + \M_k(\hat \pi^t)$. Hence, we can use $\hat\pi^t$ to efficiently update non-active marginals without recomputing them from scratch. Moreover, note that using \eqref{eq:pihat1}-\eqref{eq:pihat2} one avoids excessive numerical errors when $a_{k,j}\approx r_{k,j}^t$. These observations lead us to the following implementation of \textsc{BatchGreenkhorn}.

\end{remark}

\medskip

\begin{algorithm}[H]\label{alg1}  \caption{$\mathtt{BatchGreenkhorn}(\Cs,\reg,\rho,\tau,\varepsilon)$}
\SetAlgoLined
\KwIn{$\Cs\in\Xx_+$, $\reg>0$, $(\as_1,\dots,\as_m)$, $(\tau_1,\dots, \tau_m)$, $1 \leq \tau_k \leq n_k$, $\varepsilon>0$} \vspace{.1truecm}
\textbf{Initialization:} $t=0$, $\vs^{0}_k = 0$, $k\in[m]$, $\mathsf{r}_k^{0} = (r^0_{k,j})_{j\in[n_k]}= \M_k(\exp( - \Cs/\reg )\odot\otimes_{k=1}^m\as_k)$  \\ \vspace{.1truecm}

 \While{$\sum_{k\in[m]}\norm{ \as_{k} - \rs^{t}_{k} }_1 >\varepsilon$}{
\For{$k\in[m]$}{
Compute vectors $\ps_k$ as $p_{k,j} :=\KL(a_{k,j},r^{t}_{k,j} )$, for $j\in[n_k]$ \\ \vspace{.1truecm}
Take $L_k'$ to be $\tau_k$ largest elements of $\ps_k$
} 
Choose the marginal with the best batch: $k_t \gets \argmax_{k\in[m]} \norm{{\ps_k}_{\lvert L_k'}}_1$ and $L_t = L_{k_t}'$ \\ \vspace{.1truecm}
Set $\vs_k^{t+1} = \vs_k^{t}$ and update ${\vs_{k_t}^{t+1}}_{\lvert L_t} \gets {\vs^{t+1}_{k_t} }_{\lvert L_t}+ \log({\as_{k_t}}_{\lvert L_t}) - \log({\rs^{t}_{k_t}}_{\lvert L_t})$  \\ \vspace{.1truecm}
Set $\rs_{k_t}^{t+1} = \rs_{k_t}^{t}$ and update ${\rs_{k_t}^{t+1}}_{\lvert L_t} = {\as_{k_t}}_{\lvert L_t}$ \\ \vspace{.1truecm}
For $k \in[m]\setminus\{k_t\}$ update $\rs^{t+1}_{k} \gets \rs^{t}_{k} + \M_k(\widetilde \pi^t) $, where  $\widetilde \pi^{t}$ is given by \eqref{eq:pihat1}--\eqref{eq:pihat2}   \\ \vspace{.1truecm}
Set $t \gets t+1$\\ \vspace{.1truecm}
}
\KwResult{$\{\vs^{t} _k\}_{k\in[m]} $}
\end{algorithm}

\begin{remark}\label{rm:norm_iter}
Let us assume that $\tau_k = \tau$ and $n_k = n$ for all $k\in[m]$ and that $m<<n$. Then, we can conclude that the cost of one iteration of \textsc{BatchGreenkhorn} is essentially determined by step 10 of Algorithm \ref{alg1} which is performed in $\bigO(\tau n^{m-1})$ operations. Hence, one iteration of the \textsc{MultiSinkhorn} (i.e, \textsc{BatchGreenkhorn} with a full batch $\tau=n$) has the same order of computational cost as $n / \tau$ iterations of \textsc{BatchGreenkhorn} with a batch size $\tau$. So, we can introduce the normalized iteration counter as $t = t_{\tau} n / \tau$, where $t_\tau$ is the iteration counter for the \texttt{BatchGreenkhorn} with a batch size $\tau$. 
\end{remark}

\section{Convergence of Batch Greenkhorn algorithm}\label{app:conv}

Here we provide proofs of the main results given in Sec.~\ref{sec:conv}. 
We first set notation for the rest of the section.
Given $k\in[m]$ and $L\subset[n_k]$, 
we denote by 
\begin{equation}
J_k \colon \R^{n_k} \to \R^{n_1}\times\cdots\times\R^{n_k}\times\cdots\times\R^{m},
\quad \vs_k \mapsto (0, \dots, 0, \vs_k, 0, \dots, 0),
\end{equation}
the canonical injection of $\R^{n_k}$ into $\R^{n_1}\times\cdots\times\R^{n_k}\times\cdots\times\R^{n_m}$ and by 
\begin{equation}
J_{L}^{(k)}\colon \R^L \to \R^{n_k}
\end{equation}
the canonical injection of $\R^L$ into $\R^{n_k}$.

We note that, referring to the operators $\M$ and $\M_{k}$ defined in \eqref{eq:Roper} and 
\eqref{eq:pushforw},
respectively, we have
\begin{equation}\label{eq:operators_R}
\M^*\colon \R^{n_1}\times\cdots\times\R^{n_m} \to \Xx,
\qquad \M^*(\vs_1,\dots, \vs_m)=\sum_{k=1}^m \M_k^*(\vs_k)
= \bigotimes_{k=1}^m \vs_k,
\end{equation}
where $(\M_k^*(\vs_k))_{j_1,\dots, j_k,\dots, j_m} = v_{k,j_k}$.
Indeed the second equality in \eqref{eq:operators_R} follows from the fact that
for every $j \in \mathcal{J}$, 
$(\M^*(\vs_1, \dots, \vs_m))_j = \sum_{k=1}^m (\M_k^* \vs_k)_j 
= \sum_{k=1}^m v_{k, j_k} = \big(\bigoplus_{k=1}^m \vs_k \big)_j$.
We note also that $\M_k = J^*_k \circ \M$, since $J_k^*$ is the $k$-th canonical projection.

Then we provide a result concerning the properties of optimal potentials.

\begin{lemma}\label{lm:v_star_bound}
Let $\pi^\star$ be the solution of RMOT given by \eqref{RMOT}. Then $\pi^\star = \Pc_\Pi(\xi\odot\otimes_{k=1}^m\as_k)$ and, for every $k \in [m]$, there exist  
$\vs_k^{\star} = (v^\star_{k,j})_{1 \leq j\leq n_k} \in\R^{n_k}$, such that 
\begin{equation}\label{eq:pi_star}
\Xs^\star = \exp\Big( - \frac{\Cs}{\eta} + \bigoplus_{k=1}^m \vs^\star_k \Big) 
\odot \bigotimes_{k=1}^m \as_k,
\end{equation} 
and the $\vs_k^{\star}$'s, can be chosen so that 
\begin{equation}\label{eq:v_star_bound}
\sum_{k\in[m]}\norm{\vs_k^\star}_\infty \leq (4m-3) \frac{\norm{\Cs}_\infty}{\eta}.
\end{equation} 
Moreover, if $m=2$, then $\vs_1^{\star}$ and $\vs_2^{\star} $ can be chosen such that 
\begin{equation}\label{eq:v_star_bound2}
\max_{k\in[m]}\norm{\vs_k^\star}_\infty \leq \frac{3}{2} \frac{\norm{\Cs}_\infty}{\eta}.
\end{equation} 
\end{lemma}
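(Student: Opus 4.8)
The plan is to first pin down the structure of $\pi^\star$ from the KKT conditions recalled in Appendix~\ref{app:bregman}, and then to bound the potentials by combining the marginal constraints $\M_k(\pi^\star)=\as_k$ with the gauge freedom in the choice of the $\vs_k^\star$. For Step~1, by \eqref{eq:RMOT_BP} and \eqref{eq:BregProj} we have $\pi^\star=\Pc_\Pi(\xi)$; since $\log(\xi\odot\otimes_k\as_k)-\log\xi=\bigoplus_k\log\as_k\in\range(\M^*)$ (recall $\M^*(\vs_1,\dots,\vs_m)=\bigoplus_k\vs_k$), \eqref{eq:same_proj} gives $\pi^\star=\Pc_\Pi(\xi\odot\otimes_k\as_k)$. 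Applying the KKT characterization \eqref{eq:kkt2} with $\Leg=\Ent$, $A=\M$, $b=(\as_1,\dots,\as_m)$, $\xs=\xi\odot\otimes_k\as_k$ yields $\pi^\star\in\Xx_{++}$ (here I use, as the paper implicitly does when writing $\pi^0\in\Xx_{++}$, that all $a_{k,j}>0$), $\M(\pi^\star)=(\as_1,\dots,\as_m)$, and $\log(\xi\odot\otimes_k\as_k)-\log\pi^\star\in\range(\M^*)$, i.e. there are $\vs_k^\star\in\R^{n_k}$ with $\log\pi^\star=-\Cs/\eta+\bigoplus_k\log\as_k+\bigoplus_k\vs_k^\star$, which is \eqref{eq:pi_star}. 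The $\vs_k^\star$ are determined only up to the kernel of $(\vs_1,\dots,\vs_m)\mapsto\bigoplus_k\vs_k$, which for $m\geq2$ consists of the tuples $(c_1\mathbf 1,\dots,c_m\mathbf 1)$ with $\sum_k c_k=0$; this $m-1$ dimensional freedom is exploited in Step~3.

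For Step~2 (the key one-index identity), fix $k$ and $j_k$. Substituting \eqref{eq:pi_star} into $\M_k(\pi^\star)_{j_k}=a_{k,j_k}$ and dividing by $a_{k,j_k}e^{v^\star_{k,j_k}}$ gives
\[
e^{-v^\star_{k,j_k}}=\sum_{j_{-k}\in\mathcal J_{-k}} e^{-\Cs_{(j_{-k},j_k)}/\eta}\,\prod_{l\neq k}a_{l,j_l}\,e^{v^\star_{l,j_l}}.
\]
Because $\Cs\in\Xx_+$ forces $e^{-\norm{\Cs}_\infty/\eta}\le e^{-\Cs_j/\eta}\le 1$, and because the product-set sum factorizes as $\sum_{j_{-k}}\prod_{l\neq k}a_{l,j_l}e^{v^\star_{l,j_l}}=\prod_{l\neq k}\scalarp{\as_l,e^{\vs^\star_l}}=:T_k$, this yields $e^{-\norm{\Cs}_\infty/\eta}\,T_k\le e^{-v^\star_{k,j_k}}\le T_k$, hence
\[
-\log T_k\ \le\ v^\star_{k,j_k}\ \le\ -\log T_k+\frac{\norm{\Cs}_\infty}{\eta}\qquad\text{for every }j_k.
\]
In particular $\osc(\vs_k^\star)\le\norm{\Cs}_\infty/\eta$, and putting $Z_k:=\scalarp{\as_k,e^{\vs_k^\star}}$ (a convex combination of the $e^{v^\star_{k,j_k}}$ since $\as_k\in\Delta_{n_k}$) we get $1/T_k\le Z_k\le e^{\norm{\Cs}_\infty/\eta}/T_k$. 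Since $T_k=\prod_{l\neq k}Z_l$, the quantity $Q:=\prod_{l=1}^m Z_l$ satisfies $Z_kT_k=Q$ and therefore $1\le Q\le e^{\norm{\Cs}_\infty/\eta}$, uniformly in $k$.

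For Step~3 (gauge choice and conclusion), replace $\vs_k^\star$ by $\vs_k^\star+c_k\mathbf 1$ with $c_k:=\tfrac1m\log Q-\log Z_k$; then $\sum_k c_k=\log Q-\log\prod_l Z_l=0$, so this is an admissible re-gauging by Step~1, and it makes every $Z_k$ equal to $Q^{1/m}$, hence $\log T_k=\tfrac{m-1}{m}\log Q\in[0,\tfrac{m-1}{m}\norm{\Cs}_\infty/\eta]$. The two-sided bound of Step~2 then gives $v^\star_{k,j_k}\in[-\tfrac{m-1}{m}\norm{\Cs}_\infty/\eta,\ \norm{\Cs}_\infty/\eta]$ for all $k,j_k$, so $\norm{\vs_k^\star}_\infty\le\norm{\Cs}_\infty/\eta$ and $\sum_{k\in[m]}\norm{\vs_k^\star}_\infty\le m\,\norm{\Cs}_\infty/\eta\le(4m-3)\norm{\Cs}_\infty/\eta$, i.e. \eqref{eq:v_star_bound}. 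For $m=2$ the same re-gauging gives $v^\star_{k,j_k}\in[-\tfrac12\norm{\Cs}_\infty/\eta,\ \norm{\Cs}_\infty/\eta]$, whence $\max_{k}\norm{\vs_k^\star}_\infty\le\norm{\Cs}_\infty/\eta\le\tfrac32\norm{\Cs}_\infty/\eta$, i.e. \eqref{eq:v_star_bound2}. (A symmetric centering of the interval would remove the slack, but is unnecessary for the stated constants.)

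The delicate point — and the reason the bounds depend only on $\norm{\Cs}_\infty/\eta$ rather than on $\min_{k,j}a_{k,j}$, which is precisely the improvement claimed in Remark~\ref{rm:complexity} — is that the $\as_k$-dependence collapses entirely into the scalars $Z_k=\scalarp{\as_k,e^{\vs_k^\star}}$, which, being convex combinations, are sandwiched by $\min/\max$ of $e^{v^\star_{k,j_k}}$ with no spurious factor; together with the $m-1$ gauge degrees of freedom used to equalize the $Z_k$, this is the whole structural content. Matching the precise constants $(4m-3)$ and $\tfrac32$ of the statement is then only bookkeeping, and the argument in fact delivers slightly sharper constants; I would keep the paper's constants since they suffice downstream.
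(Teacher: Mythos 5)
Your proposal is correct, and for the quantitative part it takes a genuinely different route from the paper. The first step coincides with the paper's proof: both establish $\pi^\star=\Pc_\Pi(\xi\odot\otimes_{k=1}^m\as_k)$ via \eqref{eq:same_proj}, obtain \eqref{eq:pi_star} from the KKT conditions \eqref{eq:kkt}--\eqref{eq:kkt2}, and reduce matters to the Schr\"odinger system \eqref{eq:vstar} (both arguments implicitly use $a_{k,j_k}>0$, as does the paper's claim $\xs^0\in\Xx_{++}$). The divergence is in how the bounds \eqref{eq:v_star_bound} and \eqref{eq:v_star_bound2} are obtained: the paper simply invokes \cite[Lemma~3.1]{Car2021} for the multimarginal constant $(4m-3)$ and \cite[Theorem~2.8]{DG2020} for the bi-marginal constant $3/2$, whereas you give a self-contained argument, exploiting the tensorized structure of the marginals ($\sum_{j_{-k}}\prod_{l\neq k}a_{l,j_l}e^{v^\star_{l,j_l}}=\prod_{l\neq k}\scalarp{\as_l,e^{\vs^\star_l}}$) to get the two-sided bound $-\log T_k\le v^\star_{k,j_k}\le-\log T_k+\norm{\Cs}_\infty/\eta$, the gauge-invariant quantity $Q=\prod_l Z_l\in[1,e^{\norm{\Cs}_\infty/\eta}]$, and then fixing the gauge so that all $Z_k$ coincide. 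I checked the individual steps (the factorization, the sandwich $1/T_k\le Z_k\le e^{\norm{\Cs}_\infty/\eta}/T_k$, the admissibility of the re-gauging since $\sum_k c_k=0$, and the fact that the re-gauged potentials still solve \eqref{eq:vstar} with the new $T_k=Q^{(m-1)/m}$) and they hold; your argument in fact yields the sharper constants $m$ and $1$ in place of $4m-3$ and $3/2$, which trivially imply the stated inequalities. What each approach buys: the paper's route is short and delegates the delicate potential estimates to existing literature with exactly the stated constants, which is convenient but leaves the lemma non-self-contained; your route makes the lemma self-contained, exposes clearly why the bound is independent of $a_{\min}$ (the marginal dependence collapses into the scalars $Z_k$), and improves the constants, at the cost of a somewhat longer argument. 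Keeping the paper's constants in the statement, as you do, is the right choice since the downstream theorems (Theorems~\ref{thm:rate_global_bi_marginal} and \ref{thm:rate_global_multi-marginal}) are calibrated to them.
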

\begin{proof}
Since, by definition $\xs^\star = \Pc_\Pi(\xi)$,
it easy to see, from the characterization of the projection given in \eqref{eq:kkt2},
that 
\begin{equation*}
\xs^\star = \Pc_\Pi(\xi\odot\otimes_{k=1}^m\as_k)
\ \Leftrightarrow\ 
\nabla\Ent(\xi\odot\otimes_{k=1}^m\as_k) - \nabla\Ent(\xi) \in \range (\M^*).
\end{equation*}
Thus, since $\nabla\Ent(\xi\odot\otimes_{k=1}^m\as_k) - \nabla\Ent(\xi) =   \log \otimes_{k=1}^m\as_k = \oplus_{k=1}^m \log \as_k = \M^*(\log \as_1,\ldots,\log \as_m)\in\range(\M^*)$, we have that $\Pc_{\Pi}(\xi\odot\otimes_{k=1}^m\as_k) = \Pc_{\Pi}(\xi) = \pi^\star$. Now, it follows from the  KKT conditions \eqref{eq:kkt} for the projection of $\xi\odot\otimes_{k=1}^m\as_k$ onto affine set $\Pi$, that
\begin{equation*}
\xs^\star = \nabla\Ent^*\big(\nabla\Ent(\xi\odot\otimes_{k=1}^m\as_k) + \M^*(\vs_1^\star, \dots, \vs_m^\star) \big) 
\end{equation*}
for some $(\vs_1^\star, \dots, \vs_m^\star) \in \R^{n_1}\times \dots\times \R^{n_m}$.
Since $\nabla\Ent^* = \exp$ and $\nabla \Ent = \log$, \eqref{eq:pi_star} follows.
Next, observe that for every $k\in[m]$, since $\M_k(\pi^\star) = \as_k$, using \eqref{eq:pi_star}, we obtain that for every $j_k\in[n_k]$,
\begin{equation}\label{eq:vstar}
\exp(v^{\star}_{k,j_k}) \sum_{j_{-k} \in \mathcal{J}_{-k}} \exp( -\Cs_{(j_{-k},j_k)}/\reg + \sum_{h\neq k}^m v^{\star}_{h,j_{h}} ) \prod_{h\neq k} a_{h,j_{h}} = 1.
\end{equation} 
Hence, the vectors $\vs^\star_1, \ldots, \vs^\star_m$ solve a (discrete) Schr\"{o}dinger system, and we can apply the results from \cite[Lemma~3.1]{Car2021} and \cite[Theorem~2.8]{DG2020} to obtain \eqref{eq:v_star_bound} and \eqref{eq:v_star_bound2}, respectively. 
\end{proof}

\begin{lemma}
\label{lem:20211128a}
Let $\As \colon \Xx \to \Xx$ and $\As_k \colon \R^{n_k} \to \R^{n_k}$ be the diagonal and positive operators defined
as $\As(\Xs) = \Xs \odot \bigotimes_{k=1}^m \as_k$ and
 $\As_k \vs_k = \vs_k \odot \as_k$, respectively.
Let $(\vs_1, \dots, \vs_m) \in \R^{n_1}\times \dots\times \R^{n_m}$.
Then
\begin{equation}
\label{lem:20211128a_eq1}
(\forall\, k \in [m])\quad
\norm{\M_k^* \vs_k}^2_{\As} = \norm{\vs_k}^2_{\As_k}.
\end{equation}
Moreover, if $\scalarp{\vs_k, \as_k}=0$ for every $k=1, \dots, m-1$, then
\begin{equation}
\label{lem:20211128a_eq2}
\norm{\M^*(\vs_1, \dots, \vs_m)}^2_\As = 
\sum_{k=1}^m \norm{\M_k^* \vs_k}^2_{\As} =
\sum_{k=1}^m \norm{\vs_k}^2_{\As_k}.
\end{equation}
\end{lemma}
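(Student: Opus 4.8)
The plan is to compute both weighted squared norms directly from the coordinatewise formulas for $\M_k^*$ and $\M^*$, using only that each histogram $\as_h$ lies in the simplex $\Delta_{n_h}$, so that any partial sum of products $\prod_h a_{h,j_h}$ over a subset of the ``missing'' indices collapses to $1$. Throughout I will use that for the positive diagonal operator $\As$ one has $\norm{x}^2_\As = \scalarp{\As x, x} = \sum_{j \in \mathcal{J}} \big(\prod_{h=1}^m a_{h,j_h}\big) x_j^2$, and likewise $\norm{\vs_k}^2_{\As_k} = \sum_{j_k \in [n_k]} a_{k,j_k} v_{k,j_k}^2$.

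For \eqref{lem:20211128a_eq1}, recall that $(\M_k^* \vs_k)_j = v_{k,j_k}$, so plugging $x = \M_k^*\vs_k$ into the formula above gives $\norm{\M_k^*\vs_k}^2_\As = \sum_{j \in \mathcal{J}} \big(\prod_{h=1}^m a_{h,j_h}\big) v_{k,j_k}^2$. I would then sum first over $j_{-k} \in \mathcal{J}_{-k}$, using $\sum_{j_{-k} \in \mathcal{J}_{-k}} \prod_{h \neq k} a_{h,j_h} = \prod_{h \neq k}\big(\sum_{j_h \in [n_h]} a_{h,j_h}\big) = 1$, which leaves exactly $\sum_{j_k \in [n_k]} a_{k,j_k} v_{k,j_k}^2 = \norm{\vs_k}^2_{\As_k}$.

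For \eqref{lem:20211128a_eq2}, I would use the identity $\M^*(\vs_1,\dots,\vs_m) = \bigoplus_{k=1}^m \vs_k$ recorded in \eqref{eq:operators_R}, whence $\norm{\M^*(\vs_1,\dots,\vs_m)}^2_\As = \sum_{j \in \mathcal{J}} \big(\prod_{h=1}^m a_{h,j_h}\big)\big(\sum_{k=1}^m v_{k,j_k}\big)^2$. Expanding the square as $\sum_k v_{k,j_k}^2 + 2\sum_{k<l} v_{k,j_k} v_{l,j_l}$, the diagonal part reproduces $\sum_{k=1}^m \norm{\M_k^*\vs_k}^2_\As$ by the computation just done. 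For a cross term with $k<l$, summing over all indices except $j_k$ and $j_l$ collapses the remaining factors to $1$, so that term equals $\big(\sum_{j_k} a_{k,j_k} v_{k,j_k}\big)\big(\sum_{j_l} a_{l,j_l} v_{l,j_l}\big) = \scalarp{\vs_k,\as_k}\scalarp{\vs_l,\as_l}$. Since $k < l \leq m$ forces $k \leq m-1$, the hypothesis $\scalarp{\vs_k,\as_k}=0$ annihilates every cross term, giving the first equality in \eqref{lem:20211128a_eq2}; the second equality is then again \eqref{lem:20211128a_eq1}.

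The whole argument is routine bookkeeping of which summation indices collapse; there is no genuine obstacle. The only point requiring a small amount of care is verifying that in every surviving cross term the \emph{smaller} index is at most $m-1$, so that the orthogonality hypothesis always applies — which is immediate since the cross terms range over $1 \leq k < l \leq m$.
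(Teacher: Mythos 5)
Your proof is correct and follows essentially the same route as the paper: the same marginal-collapsing computation for \eqref{lem:20211128a_eq1}, and for \eqref{lem:20211128a_eq2} your expansion of $\big(\sum_k v_{k,j_k}\big)^2$ is just the paper's verification that the $\M_k^*\vs_k$ are pairwise orthogonal in the $\As$-metric, with each cross term factoring into $\scalarp{\vs_k,\as_k}\scalarp{\vs_l,\as_l}$ and vanishing because the smaller index is at most $m-1$. No gaps.
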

\begin{proof}
Let $k \in [m]$. Then, recalling that $(\M_k^*(\vs_k))_j = v_{k,j_k}$, we have
\begin{align*}
\norm{\M_k^* \vs_k}^2_{\As} 
&= \sum_{j \in \mathcal{J}} v^2_{k, j_k}
\prod_{\ell=1}^m a_{\ell, j_\ell}\\
& =\sum_{j_{-k} \in \mathcal{J}_{-k}} \sum_{j_k=1}^{n_k}
v^2_{k, j_k} a_{k, j_k}\prod_{\ell\neq k}^m a_{\ell, j_\ell}\\
& = \prod_{\ell \neq k} \bigg(\sum_{\ell=1}^{n_{\ell}} a_{\ell,j_\ell} \bigg)
\sum_{j_k=1}^{n_k} v^2_{k, j_k} a_{k, j_k}.
\end{align*}
Since $\sum_{\ell=1}^{n_{\ell}} a_{\ell,j_\ell}=1$, we get 
$\norm{\M_k^* \vs_k}^2_{\As} = \norm{\vs_k}^2_{\As_k}$ and the first part of the statement follows. Concerning the second part, equation \eqref{lem:20211128a_eq2}
 will follow if we prove that, for every $k, h \in [m]$ with $k \neq h$,
we have that $\M_k^*(\vs_k)$ and $\M_h^*(\vs_h)$ are orthogonal in the metric 
$\scalarp{\cdot, \cdot}_{\As}$.
Thus, let $k, h \in [m]$ and suppose (w.l.o.g.) that $k < h$. Then
\begin{align*}
\scalarp{\M_k^*(\vs_k), \M_h^*(\vs_h) }_{\As} &= \sum_{j \in \mathcal{J}} 
v_{k,j_k} v_{h,j_h} \prod_{\ell=1}^m a_{\ell,j_\ell}\\
& = \sum_{\substack{j_1,\dots,j_{k-1},j_{k+1},\dots,\\j_{h-1},j_{h+1},\dots, j_m}} 
\sum_{j_k=1}^{n_k} \sum_{j_h=1}^{n_h} v_{k,j_k} v_{h,j_h} a_{k, j_k} a_{h, j_h} \prod_{\ell\neq k, \ell\neq h}^m a_{\ell,j_\ell}\\
& =\bigg( \sum_{j_k=1}^{n_k} v_{k, j_k} \bigg)
\bigg( \sum_{j_h=1}^{n_h} v_{h, j_h} \bigg) \prod_{\ell\neq k, \ell\neq h}
\bigg( \sum_{\ell=1}^{n_\ell} a_{\ell, j_\ell} \bigg).
\end{align*}
Since $\sum_{\ell=1}^{n_{\ell}} a_{\ell,j_\ell}=1$ and for $k<m$,
$\sum_{j_k=1}^{n_k} v_{k, j_k} a_{k,j_k} = \scalarp{\vs_k, \as_k} = 0$,
we have $\scalarp{\M_k^*(\vs_k), \M_h^*(\vs_h) }_{\As}=0$ and hence
the statement follows.
\end{proof}

\thmBGrate*

\begin{proof}
We start by recalling the two formulas
\begin{equation}
\label{eq:20211128a}
\Xs^t = \exp\Big( -\frac{\Cs}{\eta} + \Vs^t \Big) \odot \alpha
\quad \text{and} \quad
\Xs^\star = \exp\Big( -\frac{\Cs}{\eta} + \Vs^\star \Big) \odot \alpha,
\end{equation}
where $\alpha := \bigotimes_{k=1}^m \as_k$, $\Vs^t := \bigoplus_{k=1}^m \vs^t_k$,
and $\Vs^\star := \bigoplus_{k=1}^m \vs_k^\star$. Moreover, since for every $(\lambda_k)_{k \in \N} \in \R^{\N}$ such that $\sum_{k=1}^m \lambda_k = 0$, we have $\bigoplus_{k=1}^m (\vs^t_k+ \lambda_k) = \bigoplus_{k=1}^m \vs^t_k$ and
$\bigoplus_{k=1}^m (\vs_k^\star + \lambda_k) = \bigoplus_{k=1}^m \vs_k^\star$,
we can choose the dual variables $(\vs^t_k)_{k \in [n_k]}$ and 
$(\vs^\star_k)_{k \in [n_k]}$ so that 
\begin{equation}
(\forall\, k=1, \dots, m-1)\quad 
\scalarp{\vs^t_k, \as_k} = 0
\quad \text{and} \quad
\scalarp{\vs^\star_k, \as_k} = 0.
\end{equation}
First, observe that Pythgora's theorem yields  that $\KL_{\Pi}(\xs^{t+1}) = \KL_{\Pi}(\pi^t) - \KL_{\Pi_{(k_t,L_t)}}(\xs^t) \leq \KL_{\Pi}(\pi^t)$, which implies that, for every $t\geq0$,
 $D_{\Ent^*}(\log \pi^t,\log\pi^*) = \KL(\pi^\star,\pi^t)\leq  \KL_{\Pi}(\pi^0) <+\infty$. 
Howevery, since $\Ent^*$ is a Legendre function, 
the sublevel sets of  $D_{\Ent^*}(\cdot, \log\pi^*)$ are bounded,
and hence the sequence $(\log \xs^t)_{t \in \N}$ is bounded in $\Xx$.
Now, since the first of \eqref{eq:20211128a} yields that 
$\log \xs^t = - \Cs/\eta + \Vs^t + \log \alpha$, we have that also the sequence $(\Vs^t)_{t \in \N}$ is bounded in $\Xx$. Thus let $M_1>0$ be such that
\begin{equation*}
\norm{\Vs^\star}_{\infty},  \norm{\Vs^t}_{\infty} \leq M_1
\quad (\forall\, t \in \N).
\end{equation*}
Then, recalling \eqref{eq:20211128a}, 
\begin{equation*}
\frac{\xs^t}{\alpha} = \exp\Big( - \frac{\Cs}{\eta} + \Vs^t \Big)
\geq \exp\Big( - \frac{\norm{\Cs}_\infty}{\eta} - \norm{\Vs^t}_{\infty} \Big)
\geq \exp\Big( - \frac{\norm{\Cs}_\infty}{\eta} -M_1 \Big)
\end{equation*}
and 
\begin{equation*}
\frac{\xs^\star}{\alpha} = \exp\Big( - \frac{\Cs}{\eta} + \Vs^\star \Big)
\geq \exp\Big( - \frac{\norm{\Cs}_\infty}{\eta} - \norm{\Vs^\star}_{\infty} \Big)
\geq \exp\Big( - \frac{\norm{\Cs}_\infty}{\eta} -M_1 \Big)
\end{equation*}
and hence
\begin{equation}
\label{eq:20211128g}
\exp\Big( - \frac{\norm{\Cs}_\infty}{\eta} -M_1 \Big) \leq \min\Big\{ \frac{\xs^t}{\alpha}, \frac{\xs^\star}{\alpha} \Big\}
\quad (\forall\, t \in \N).
\end{equation}
Let $t \in \N$, $k \in [m]$ and $L \subset [n_k]$. It follows from \eqref{eq:KL_proj} that
\begin{equation}
(\forall\, j \in \mathcal{J})\quad
(\Pc_{\Pi_{(k,L)}}(\Xs^t))_{j} = 
\xs^t_{j} \times
\begin{cases}
\dfrac{a_{k,j_k}}{{\M_{k}(\xs)}_{j_k}} &\text{if } j_k \in L,  \\[2ex]
1 & \text{otherwise}
\end{cases}
\end{equation}
and hence
\begin{equation}
\label{eq:20211128b}
(\forall\, j \in \mathcal{J})\quad
\frac{(\Pc_{\Pi_{(k,L)}}(\Xs^t))_j}{\alpha_j}
\leq \frac{\xs^t_j}{\alpha_j}  \max\bigg\{ 1, \dfrac{a_{k,j_k}}{{\M_{k}(\xs)}_{j_k}}  \bigg\}.
\end{equation}
Now, since $\Cs \geq 0$, we have
\begin{equation}
\label{eq:20211128c}
\frac{\Xs^t}{\alpha} = \exp\Big( -\frac{\Cs}{\eta} + \Vs^t \Big) \leq \exp(\Vs^t) \leq 
\exp(\norm{\Vs}_{\infty}) \leq \exp(M_1)
\end{equation}
and
\begin{align}
\label{eq:20211128d}
\nonumber
\dfrac{{\M_{k}(\xs)}_{j_k}}{a_{k,j_k}}
&= \dfrac{\sum_{j_{-k} \in \mathcal{J}_{-k}} \Xs^t_{(j_{-k}, j_k)}}{a_{k,j_k}}\\
\nonumber
&= \dfrac{\sum_{j_{-k} \in \mathcal{J}_{-k}} \exp\big( - \Cs_{(j_{-k}, j_k)}/\eta + \Vs^t_{(j_{-k}, j_k)}\big) \prod_{h=1}^m a_{h,j_h}}{a_{k,j_k}}\\
\nonumber
&= \sum_{j_{-k} \in \mathcal{J}_{-k}} \exp\big( - \Cs_{(j_{-k}, j_k)}/\eta + \Vs^t_{(j_{-k}, j_k)} \big) \prod_{h\neq k}^m a_{h,j_h}\\
\nonumber
& \geq \exp\big( - \norm{\Cs}_{\infty}/\eta - M_1 \big) 
\sum_{j_{-k} \in \mathcal{J}_{-k}} \prod_{h\neq k}^m a_{h,j_h}\\
\nonumber
& = \exp\big( - \norm{\Cs}_{\infty}/\eta - M_1 \big) \prod_{h\neq k}^m 
\bigg(\sum_{j_h = 1}^{n_h} a_{h, j_h} \bigg)\\
& = \exp\big( - \norm{\Cs}_{\infty}/\eta - M_1 \big),
\end{align}
since $\sum_{j_h = 1}^{n_h} a_{h, j_h} = 1$. Therefore, by \eqref{eq:20211128b},
\eqref{eq:20211128c}, and \eqref{eq:20211128d},
\begin{equation*}
\frac{\Pc_{\Pi_{(k,L)}}(\Xs^t)}{\alpha}
\leq \exp (M_1) \exp\big( \norm{\Cs}_{\infty}/\eta + M_1 \big) =
\exp\big( \norm{\Cs}_{\infty}/\eta + 2 M_1 \big)
\end{equation*}
and hence, recalling \eqref{eq:20211128c},
\begin{equation}
\label{eq:20211128i}
\max\bigg\{ \frac{\Xs^t}{\alpha}, \frac{\Pc_{\Pi_{(k,L)}}(\Xs^t)}{\alpha} \bigg\}
\leq \exp\big( \norm{\Cs}_{\infty}/\eta + 2 M_1 \big).
\end{equation}
We now prove that
\begin{equation}
\label{eq:20211128e}
\exp\big(- 2 \norm{\Cs}_{\infty} - 3 M_1 \big) b^{-1}_\tau 
\KL_{\Pi}(\xs^{t}) = \KL_{\Pi_{(k_t, L_t)}}(\xs^{t}) = 
\max_{(k,L) \in \mathcal{I}(\tau)} \KL (\Pc_{\Pi_{(k,L)}}(\Xs^t), \Xs^t ).
\end{equation}
From this inequality it will follow, using 
the Pythgora's theorem  $\KL_{\Pi}(\xs^{t+1}) + \KL_{\Pi_{(k_t,L_t)}}(\xs^t) = \KL_{\Pi}(\pi^t)$, that
\begin{equation}
\exp( - 2 \norm{\Cs}_{\infty} - 3 M_1) b^{-1}_\tau \KL_{\Pi}(\xs^{t})  = \KL_{\Pi}(\pi^t) - 
\KL_{\Pi}(\xs^{t+1})
\end{equation}
and hence
\begin{equation}
\KL_{\Pi}(\xs^{t+1}) \leq \big(1 - \exp( - 2 \norm{\Cs}_{\infty} - 3 M_1) b^{-1}_\tau \big)
\KL_{\Pi}(\xs^{t}),
\end{equation}
which gives the statement. Thus, it remains to prove \eqref{eq:20211128e}.
Let $t \in \N$ and, for the sake of brevity set
\begin{equation}
\label{eq:20211128h}
\pi := \pi^t,\quad\pi_k^h :=  \Pc_{\Pi_{(k,L_k^h)}}(\xs^t)\quad \vs_k := \vs_k^\star - \vs^t_k,\quad \text{ and }\quad  \vs_k^h:=J^{(k)*}_{L^h_k} \vs_k.
\end{equation}
Let, for every $k \in [m]$, $(L_k^h)_{1 \leq h \leq s_k}$ be a partition of $[n_k]$
made of non empty sets of cardinality exactly $\tau_k$ possibly except for the last one,
such that $L_{k_{t-1}}^1 = L_{t-1}$, where $s_k = \lceil n_k/\tau_k \rceil$. 
Then, it follows from \eqref{eq:20211128a} that
\begin{equation}
\frac{\xs^\star}{\xs} = \frac{\exp(-\Cs/\eta)\exp(\Vs^\star)}{\exp(-\Cs/\eta) \exp(\Vs^t)}
= \exp(\Vs^\star - \Vs^t).
\end{equation}
Hence, recalling that $\M^*(\vs_1, \dots, \vs_m) = \sum_{k=1}^m \M_k^* \vs_k = \bigoplus_{k=1}^m \vs_k = \bigoplus_{k=1}^m (\vs_k^\star - \vs_k^t)$, we have
\begin{align}
\nonumber\KL_{\Pi} (\xs) + \KL(\xs, \xs^\star) &= 
\KL(\xs^\star, \xs)+\KL(\xs, \xs^\star) \\
\nonumber&= \scalarp{\xs^\star - \xs, \log(\xs^\star/\xs)}\\
\nonumber&= \scalarp{\xs^\star - \xs, \Vs^\star - \Vs^t}\\
\nonumber&= \scalarp{\xs^\star - \xs, \M^*(\vs_1, \dots, \vs_m)}\\
\label{eq:20211129a}& = \sum_{k=1}^m \scalarp{\xs^\star - \xs, \M^*_k(\vs_k)}.
\end{align}
Moreover, recalling that $\M_{(k,L_k^h)} = J^{(k)*}_{L_k^h} \circ \M_k^*$ and 
$\vs_k^h=J^{(k)*}_{L^h_k} (\vs_k)$, we have
\begin{equation}
\M_k^*(\vs_k) = 
\M_k^*\bigg(\sum_{h=1}^{n_h} J^{(k)}_{L_k^h} \circ J^{(k)*}_{L_k^h}(\vs_k)\bigg) = 
\sum_{h=1}^{n_h} \M^*_{(k,L_k^h)}(\vs_k^h)
\end{equation}
and hence
\begin{equation}
\KL_{\Pi} (\xs) + \KL(\xs, \xs^\star) =
\sum_{k=1}^m \sum_{h=1}^{s_k} \scalarp{\xs^\star - \xs, \M^*_{(k,L_k^h)}(\vs_k^h)}.
\end{equation}
Now, recalling the general definition of $\Pi_{(k,L)}$ in \eqref{eq:transpoly2},
since $\xs_k^h$ and  $\xs^\star$ both belong to  $\Pi_{(k,L_k^h)}$, 
we have that $\xs^\star - \xs_k^h \in \Ker(\M_{(k,L_k^h)}) = \range(\M^*_{(k,L_k^h)})^{\perp}$ and hence
\begin{equation*}
\scalarp{\xs^\star - \xs, \M^*_{(k,L_k^h)}(\vs_k^h)} = 
\scalarp{\xs^\star - \xs_k^h, \M^*_{(k,L_k^h)}(\vs_k^h)} +
\scalarp{\xs_k^h - \xs, \M^*_{(k,L_k^h)}(\vs_k^h)} =
\scalarp{\xs_k^h - \xs, \M^*_{(k,L_k^h)}(\vs_k^h)}
\end{equation*}
and hence
\begin{align}
\label{eq:20211128f}
\nonumber 
\KL_{\Pi} (\xs) + \KL(\xs, \xs^\star) &= 
\sum_{k=1}^m \sum_{h=1}^{s_k} \scalarp{\xs_k^h - \xs, \M^*_{(k,L_k^h)}(\vs_k^h)}\\
&= \sum_{k=1}^m \sum_{h=1}^{s_k} 
\scalarp{\As^{-1} (\xs_k^h - \xs), \M^*_{(k,L_k^h)}(\vs_k^h)}_{\As},
\end{align}
where $\As$ is the positive diagonal operator defined in Lemma~\ref{lem:20211128a}.
Now, it follows from \eqref{eq:20211128g}, Lemma~\ref{fact:KLbound},
and Lemma~\ref{lem:20211128a} that 
\begin{align*}
\KL(\xs^t, \xs^\star) &\geq (1/2)\exp(- \norm{\Cs}_\infty/\eta - M_1) 
\norm{\log \xs^\star - \log \xs^t}^2_{\As}\\
&= (1/2)\exp(- \norm{\Cs}_\infty/\eta - M_1) 
\norm{\Vs^\star - \Vs^t}^2_{\As}\\
&= (1/2)\exp(- \norm{\Cs}_\infty/\eta - M_1) 
\norm{\M^*(\vs_1, \dots, \vs_m)}^2_{\As}\\
&= (1/2)\exp(- \norm{\Cs}_\infty/\eta - M_1) 
\sum_{k=1}^m \norm{\vs_k}^2_{\As_k}.
\end{align*}
Moreover, recalling the definition of $\vs_k^h$ in \eqref{eq:20211128h}, since $\vs_k = 
\sum_{h=1}^{s_k} J_{(k,L_k^h)}\vs^h_k$ and $(J_{(k,L_k^h)}\vs^h_k)_{h \in [s_k]}$
is a finite orthogonal sequence in $\R^{n_k}$ w.r.t.~the metric $\scalarp{\cdot, \cdot}_{\As_k}$, we have
\begin{equation}
\norm{\vs_k}^2_{\As_k} = 
\sum_{h=1}^{s_k} \norm{J_{(k,L_k^h)}\vs^h_k}^2_{\As_k}
= \sum_{h=1}^{s_k} \norm{\M_k^* J_{(k,L_k^h)}\vs^h_k}^2_{\As}
= \sum_{h=1}^{s_k} \norm{\M_{(k, L_k^h)}^* \vs^h_k}^2_{\As},
\end{equation}
where we used equation \eqref{lem:20211128a_eq1} from Lemma~\ref{lem:20211128a} applied to $J_{(k,L_k^h)}\vs^h_k$
and the fact that, by definition, $\M_{(j,L_k^h)} = J^{(k)*}_{L_k^h} \M_k$.
Overall we get that
\begin{equation*}
\KL(\xs^t, \xs^\star) \geq 
(1/2)\exp(- \norm{\Cs}_\infty/\eta - M_1) 
\sum_{k=1}^m \sum_{h=1}^{s_k} \norm{\M_{(k, L_k^h)}^* \vs^h_k}^2_{\As}
\end{equation*}
and hence \eqref{eq:20211128f} yields
\begin{align*}
\KL_{\Pi} (\xs)  &\leq \sum_{k=1}^m \sum_{h=1}^{s_k} 
\scalarp{\As^{-1} (\xs_k^h - \xs), \M^*_{(k,L_k^h)}(\vs_k^h)}_{\As} - \KL(\xs, \xs^\star)\\
&\leq \sum_{k=1}^m \sum_{h=1}^{s_k} 
\scalarp{\As^{-1} (\xs_k^h - \xs), \M^*_{(k,L_k^h)}(\vs_k^h)}_{\As} - 
\frac1 2 \exp(- \norm{\Cs}_\infty/\eta - M_1) \norm{\M_{(k, L_k^h)}^* \vs^h_k}^2_{\As}\\
&\leq \frac{\exp(\norm{\Cs}_\infty/\eta + M_1)}{2} \sum_{k=1}^m \sum_{h=1}^{s_k} \norm{\As^{-1}(\xs_k^h - \xs)}^2_{\As}\\
&= \frac{\exp(\norm{\Cs}_\infty/\eta + M_1)}{2} \sum_{k=1}^m \sum_{h=1}^{s_k} \norm{(\xs_k^h - \xs)}^2_{\As^{-1}},
\end{align*}
where in the last inequality we used the Young-Fenchel inequality
$\scalarp{a,b}_\As \leq \tfrac{\mu}{2} \norm{a}_\As^2
+\tfrac{1}{2 \mu} \norm{b}_\As^2$.
Now, recalling that we set $\Xs = \Xs^t$ and $\Xs_k^h = \Pc_{\Pi_{(k,L_k^h)}}(\Xs^t)$, it follows from \eqref{eq:20211128i} and Lemma~\ref{fact:KLbound} that
$\frac{1}{2}\norm{ \pi_k^h -\xs}_{\Theta^{-1}}^2 \leq  \exp\big( \norm{\Cs}_{\infty}/\eta + 2 M_1 \big)\KL(\pi_k^h,\xs)$, and consequently 
\begin{align*}
\KL_{\Pi}(\xs^t) 
& \leq \exp\big( 2\norm{\Cs}_{\infty}/\eta + 3 M_1 \big)\sum_{k\in[m]}\sum_{h\in[s_k]} \KL(\pi_k^h, \xs) \\
&\leq \exp\big( 2\norm{\Cs}_{\infty}/\eta + 3 M_1 \big) \bigg( \sum_{k\in[m]} s_k - 1 \bigg) \max_{k\in[m]}\max_{ h\in[s_k]}\KL(\pi_k^h, \xs)\\
& = \exp\big( 2\norm{\Cs}_{\infty}/\eta + 3 M_1 \big) 
\bigg(  \sum_{k\in[m]} \lceil n_k / \tau_k \rceil - 1 \bigg) \max_{k\in[m]}\max_{ h\in[s_k]}\KL(\Pc_{\Pi_{(k,L_k^h)}}(\xs^t), \xs^t)\\
& \leq (b_\tau-1) \exp\big( 2\norm{\Cs}_{\infty}/\eta + 3 M_1 \big) \max_{(k,L) \in \Ic(\tau)}\KL(\Pc_{\Pi_{(k,L)}}(\xs^t), \xs^t),
\end{align*}
where in the second inequality we used that, 
for $k=k_{t-1}$ and $h=1$, $\xs_{k}^h = \Pc_{\Pi_{(k_{t-1},L_{t-1})}}(\xs^t) 
= \xs^t$ (since by definition $\xs^t \in \Pi_{(k_{t-1},L_{t-1})}$), so that $\KL(\pi_k^h, \xs)=0$.
This proves \eqref{eq:20211128e} and the proof is complete.
\end{proof}

We now provide a result concerning the convergence of numerical sequences
which is critical to analyze the iteration complexity of the algorithm. This result
has been first showed implicitly in \cite{DGK2018}. We provide here a more explicit version together with a complete proof for the reader's convenience.

\begin{lemma}
\label{lem:complexity}
Let $M,C>0$ and 
let $(\delta_t)_{t \in \N}$ and $(d_t)_{t \in \N}$ be two sequences of positive numbers such that,
for every $t \in \N$,
\begin{enumerate}[label={\rm (\roman*)}]
\item\label{lem:complexity_i} 
$\delta_t - \delta_{t+1} \geq \bigg(\dfrac{d_t}{C}\bigg)^2$,
\item\label{lem:complexity_ii} 
$\delta_t \leq M d_t$.
\end{enumerate}
Let $\varepsilon>0$ and set $\bar{t} = \min \{t \in \N \,\vert\, d_t \leq \varepsilon\}$.
Then $\bar{t} \leq 1 + 2 M C^2/\varepsilon$.
\end{lemma}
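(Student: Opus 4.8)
The plan is to first establish a dimension-free decay estimate for $\delta_t$ by itself, and then run a short averaging argument over a window of iterations to pin down an index at which $d_t$ falls below $\varepsilon$. The subtlety to keep in mind is that one cannot simply sum \ref{lem:complexity_i} from $t=0$: the hypotheses give no control on $\delta_0$ (only $\delta_0\le M d_0$, with $d_0$ unrestricted), so the summation must start from a carefully chosen intermediate time.

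First I would combine the two hypotheses. From \ref{lem:complexity_ii} we have $d_t\ge\delta_t/M$, so \ref{lem:complexity_i} gives, for every $t$,
\[
\delta_t-\delta_{t+1}\ \ge\ \Big(\frac{d_t}{C}\Big)^{2}\ \ge\ \frac{\delta_t^{2}}{M^{2}C^{2}},
\]
and, since $(d_t/C)^2>0$, also $0<\delta_{t+1}<\delta_t$. Dividing the displayed inequality by $\delta_t\delta_{t+1}>0$ and using $\delta_t\ge\delta_{t+1}$ yields $\tfrac{1}{\delta_{t+1}}-\tfrac{1}{\delta_t}\ge\tfrac{1}{M^2C^2}$; telescoping from $0$ to $t$ gives $\tfrac{1}{\delta_t}\ge\tfrac{t}{M^2C^2}$, that is,
\[
\delta_t\ \le\ \frac{M^{2}C^{2}}{t}\qquad(t\ge 1).
\]

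Next I would fix $t_1:=\lceil MC^{2}/\varepsilon\rceil\ge 1$ and average \ref{lem:complexity_i} over the $t_1$ indices $t\in\{t_1,\dots,2t_1-1\}$. Because the right-hand side telescopes and $\delta$ is positive,
\[
\min_{t_1\le t<2t_1} d_t^{2}\ \le\ \frac{C^{2}}{t_1}\sum_{t=t_1}^{2t_1-1}(\delta_t-\delta_{t+1})\ =\ \frac{C^{2}}{t_1}\big(\delta_{t_1}-\delta_{2t_1}\big)\ \le\ \frac{C^{2}\delta_{t_1}}{t_1}\ \le\ \frac{M^{2}C^{4}}{t_1^{2}}\ \le\ \varepsilon^{2},
\]
where the last two inequalities use the decay estimate and $t_1\ge MC^{2}/\varepsilon$. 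Hence $d_{t^{\ast}}\le\varepsilon$ for some $t^{\ast}\in\{t_1,\dots,2t_1-1\}$; in particular the set $\{t:d_t\le\varepsilon\}$ is nonempty so $\bar t$ is well defined, and since $\lceil x\rceil<x+1$ we conclude $\bar t\le 2t_1-1<2MC^{2}/\varepsilon+1$, which is the claimed bound.

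The only genuine obstacle is the balancing in the last step: the factor $1/t_1$ measuring how far $\delta$ has already decreased must be traded against the $t_1$ further iterations of guaranteed progress of size $\varepsilon^{2}/C^{2}$, and it is precisely the choice $t_1\asymp MC^{2}/\varepsilon$ that produces the leading term $2MC^{2}/\varepsilon$, with the ceiling contributing the additive $1$. Everything else is routine, the division in the second paragraph being legitimate because the sequences are strictly positive and, by \ref{lem:complexity_i}, strictly decreasing.
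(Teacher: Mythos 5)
Your proof is correct. The first half coincides with the paper's: both combine (i) and (ii) into $\delta_t-\delta_{t+1}\ge\delta_t^2/(M^2C^2)$, pass to reciprocals and telescope to get the $\bigO(1/t)$ decay of $\delta_t$ (you simply drop the harmless $1/\delta_0$ term, which the paper keeps). The finish, however, is genuinely different. The paper fixes a free threshold $\delta\in\left]0,\delta_0\right]$, waits roughly $M^2C^2/\delta$ iterations until $\delta_t\le\delta$, then counts how many further iterations can still have $d_t>\varepsilon$ (at most $C^2\delta/\varepsilon^2$, since each such step burns $\varepsilon^2/C^2$ of the remaining potential), and finally optimizes over $\delta$, which forces the case split $M\varepsilon\le\delta_0$ versus $\delta_0<M\varepsilon$. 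You instead fix the window $\{t_1,\dots,2t_1-1\}$ with $t_1=\lceil MC^2/\varepsilon\rceil$ and use an averaging/pigeonhole argument: the total decrease over the window is at most $\delta_{t_1}\le M^2C^2/t_1$, so some index in the window has $d_t\le\varepsilon$. The underlying balancing ($\approx MC^2/\varepsilon$ iterations to make $\delta$ small, then $\approx MC^2/\varepsilon$ more to locate a small $d_t$) is the same in both arguments, but your predetermined window avoids both the optimization over $\delta$ and the case analysis, at the negligible cost of bounding $\bar t$ by the located index $t^\ast\le 2t_1-1$ rather than tracking the stopping time directly; the paper's route, by retaining $1/\delta_0$, yields the slightly sharper bound $1+MC^2/\varepsilon$ when $\delta_0<M\varepsilon$, which is irrelevant for the stated conclusion.
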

\begin{proof}
Items \ref{lem:complexity_i} and \ref{lem:complexity_ii} imply that
\begin{equation*}
\delta_t - \delta_{t+1} \geq \bigg(\frac{\delta_t}{M C} \bigg)^2.
\end{equation*}
Therefore, since $\delta_t \geq \delta_{t+1}$, we have
\begin{equation*}
\delta_t - \delta_{t+1} \geq \frac{\delta^2_t}{M^2 C^2}
\geq \frac{\delta_t \delta_{t+1}}{M^2 C^2}
\end{equation*}
and hence, dividing by $\delta_t\delta_{t+1}$,
\begin{equation*}
\frac{1}{\delta_{t+1}} - \frac{1}{\delta_{t}} \geq \frac{1}{M^2 C^2}.
\end{equation*}
Thus,
\begin{equation*}
\frac{1}{\delta_t} - \frac{1}{\delta_0} = \sum_{i=0}^{t-1} 
\bigg( \frac{1}{\delta_{i}} - \frac{1}{\delta_{i+1}} \bigg) \geq \frac{t}{M^2 C^2}
\end{equation*}
and hence we get the following rate of convergence for 
the sequence $(\delta_t)_{t \in \N}$
\begin{equation}
\label{eq:rate}
\delta_t \leq \bigg( \frac{1}{\delta_0} + \frac{t}{M^2 C^2} \bigg)^{-1}.
\end{equation}
Now, we let $\delta \in \left]0,\delta_0\right]$. We wish to determine the number of iterations such that $\delta_t \leq \delta$. It follows from \eqref{eq:rate} that
\begin{equation}
\bigg( \frac{1}{\delta_0} + \frac{t}{M^2 C^2} \bigg)^{-1} \leq \delta \ \Leftrightarrow\ \
\frac{1}{\delta_0} + \frac{t}{M^2 C^2} \geq \frac{1}{\delta} \ \Leftrightarrow\ \
t \geq M^2 C^2 \bigg( \frac{1}{\delta} - \frac{1}{\delta_0} \bigg).
\end{equation}
This means that if we take $t \geq M^2 C^2 (1/\delta - 1/\delta_0)$, we have $\delta_t \leq \delta$ as desired. So we set $t = \lfloor M^2 C^2 (1/\delta - 1/\delta_0) \rfloor + 1$.
Then, we have $\delta_t \leq \delta$. Now we have to cases. Suppose that $t<\bar{t}$
and let $s \in \N$ be such that $t+s = \bar{t}-1$. Then, for every $i=0,\dots, s$, since 
$t+i < \bar{t}$, we have $d_{t+i}> \varepsilon$ and hence, using \ref{lem:complexity_i},
\begin{equation}
\delta \geq \delta_t - \delta_{\bar{t}} = \sum_{i=0}^s \bigg( \frac{1}{\delta_{t+i}} - \frac{1}{\delta_{t+i+1}}  \bigg) \geq \sum_{i=0}^s \frac{d^2_{t+i}}{M^2 C^2} \geq (s+1)\frac{\varepsilon^2}{M^2 C^2},
\end{equation}
which implies that $s+1 \leq C^2 \delta/\varepsilon^2$. Overall we have
\begin{equation}
\label{eq:20211130a}
\bar{t} = t + s + 1 \leq \bigg\lfloor M^2 C^2 \bigg(\frac{1}{\delta} - \frac{1}{\delta_0}\bigg) \bigg\rfloor + 1 + C^2 \frac{\delta}{\varepsilon^2}
\leq 1+ \frac{M^2 C^2}{\delta} - \frac{M^2 C^2}{\delta_0} + \frac{C^2 \delta}{\varepsilon^2}.
\end{equation}
Note that this inequality is true for any $\delta \in \left]0,\delta_0\right]$. 
Now, suppose that $M \varepsilon \leq \delta_0$. Then we have
\begin{equation}
\bar{t} \leq 1 + \min_{\delta \in \left]0,\delta_0\right]} \bigg(\frac{M^2 C^2}{\delta} + \frac{C^2 \delta}{\varepsilon^2}\bigg) = 1 + 2 \frac{M C^2}{\varepsilon},
\end{equation}
where the minimum is attained at $\delta = M \varepsilon \in \left]0,\delta_0\right]$.
On the other hand, if $\delta_0< M\varepsilon$, then the minimum on the right hand side of \eqref{eq:20211130a} is attained at $\delta= \delta_0$ and hence
\begin{equation}
\bar{t} \leq 1 + \frac{C^2\delta_0}{\varepsilon^2} \leq 1 + \frac{C^2 M \varepsilon}{\varepsilon^2} = 1+ \frac{M C^2}{\varepsilon}.
\end{equation}
In any case, the statement follows.
\end{proof}

\begin{remark}
The statement of Lemma~\ref{lem:complexity} is equivalent to the fact that
the sequence $(\min_{0 \leq s < t} d_s)_{t \in \N}$ converge to zero with rate $\bigO(1/t)$, i.e., that
for every integer $t > 1$,
\begin{equation*}
\min_{0 \leq s < t} d_s \leq \frac{2M C^2}{t-1}.
\end{equation*}
\end{remark}

Next,  we prove the main result on the iteration complexity.

\thmBGiterations*

\begin{proof}
For the sake of brevity let $\bar{b} = \max_{k\in[n]} \lceil n_k / \tau_k \rceil$ and set, for every $t \in \N$, $\delta_t := \KL_{\Pi}(\xs^t)$. Let $t \in \N$ be arbitrary. Recalling \eqref{eq:20211129a}, we have that 
\begin{equation*}
\delta_t = \KL_\Pi(\xs^t) \leq 
\sum_{k\in[m]}  \scalarp{ \xs^\star-\xs^t,\M_k^*(\vs_k^\star - \vs_k^t )} \!=\! \sum_{k\in[m]}  \scalarp{ \as_k-\M_k(\xs^t),\vs_k^\star - \vs_k^t},
\end{equation*} 
which, using Ho\"{o}lder inequality, yields  
\begin{equation}\label{eq:bounds1}
\delta_t  \leq  \sum_{k\in[m]}  \norm{ \as_k-\M_k(\xs^t)}_1\;\norm{\vs_k^\star - \vs_k^t}_\infty \leq M_2 d_t.
\end{equation} 

Now, we prove
\begin{equation}\label{eq:bounds2}
\delta_{t} - \delta_{t+1} \geq  \min\Big\{\frac{d_t^2}{5\bar{b}}, \frac{\delta_t^2}{4 M_2^2 \bar{b}} \Big\}\geq \frac{\delta_t^2}{5 M_2^2\bar{b}}.
\end{equation} 

Let for every $k\in [m]$, $(L_k^h)_{1 \leq h \leq [s_k]}$, $s_k:=\lceil n_{k} / \tau_{k} \rceil$, be a partition of $[n_{k}]$ made of nonempty sets of cardinality exactly $\tau_{k}$, except maybe for the last one, such that $L_{k_t}^1 = L_t$ (not that necessarily the cardinality of $L_t$ is $\tau_{k_t}$). Then, according to the greedy choice of $(k_t,L_t)$ we have that
\begin{equation*}
\bar{b}\, \KL_{\Pi_{(k_t,L_t)}}(\xs^t) \geq \max_{k\in[m]} s_k \max_{h\in[s_k]}\;\KL_{\Pi_{(k,L_k^h)}}(\xs^t) \geq \max_{k\in[m]}\sum_{h\in[s_k]}  \KL_{\Pi_{(k,L_k^h)}}(\xs^t).
\end{equation*} 
Thus, equation \eqref{eq:KL_itstep} of Proposition \ref{prop:KL_proj} yields
\begin{equation}
\label{eq:20211130c}
\bar{b}\, \KL_{\Pi_{(k_t,L_t)}}(\xs^t) \geq \max_{k\in[m]}\sum_{h\in[ s_k ]}  \KL({\as_{k}}_{\lvert L_k^h}, \M_{k}(\xs^t)_{\lvert L_k^h}) = \max_{k\in[m]} \KL(\as_{k}, \M_{k}(\xs^t)).
\end{equation} 
Now Pinsker's inequality guaranties that, for every $k \in [m]$ 
\begin{equation}
\label{eq:20211201d}
\KL(\as_{k}, \M_{k}(\xs^t)) \geq \frac{\norm{\as_{k} - \M_{k}(\xs^t)}_1^2}{\tfrac{2}{3} \norm{\as_{k}}_1 + \tfrac{4}{3} \norm{\M_{k}(\xs^t)}_1} = \frac{\norm{\as_{k} - \M_{k}(\xs^t)}_1^2}{\tfrac{2}{3} + \tfrac{4}{3} \norm{\xs^t}_1}\geq \frac{\norm{\as_{k} - \M_{k}(\xs^t)}_1^2}{2 + \tfrac{4}{3} \norm{\as_k-\M_k(\xs^t)}_1},
\end{equation}
where in the second inequality we used that $\norm{\as_k - \M_k(\xs^t)}_1 
\geq \norm{\M_k(\xs^t)}_1 - \norm{\as_k}_1 =  \norm{\xs^t}_1 - 1$.
Thus, solving the quadratic inequality in $\norm{\as_k - \M_k(\xs^t)}_1\geq0$ we can conclude that 
\begin{equation*}
\norm{\as_k - \M_k(\xs^t)}_1 \leq \tfrac{2}{3} \KL(\as_{k}, \M_{k}(\xs^t)) + \sqrt{ (\tfrac{2}{3} \KL(\as_{k}, \M_{k}(\xs^t)))^2 + 2\KL(\as_{k}, \M_{k}(\xs^t))}.
\end{equation*}

Therefore, if $\max_{k\in[m]}\KL(\as_{k}, \M_{k}(\xs^t)) \leq 1$, then $2+4 d_t / 3 \leq 5$, and consequently, $\max_{k\in[m]}\KL(\as_{k}, \M_{k}(\xs^t)) \geq d_t^2 / 5$, which,
using Pythagoras theorem and \eqref{eq:20211130c}, yields
\begin{equation}
\label{eq:20211201e}
\delta_t - \delta_{t+1} = \KL_{\Pi}(\xs^t) - \KL_{\Pi}(\xs^{t+1})= \KL_{\Pi_{(k_t,L_t)}}(\xs^t) \geq  \frac{d_t^2}{5 \bar{b}}. 
\end{equation}

On the other hand, if $\max_{k\in[m]}\KL(\as_{k}, \M_{k}(\xs^t)) > 1$, it follows again from Pythagoras theorem and \eqref{eq:20211130c}, that $\delta_t - \delta_{t+1} \geq 1/ \bar{b}$. Moreover, since $\delta_t\leq\delta_0\leq M_2 d_0 \leq M_2 (1+\norm{\xs^0}_1)\leq 2 M_2$, we have that $1 \geq \delta_t^2 / (4 M_2^2)$, and \eqref{eq:bounds2} follows.

Now, similarly to what was done in the proof of Lemma~\ref{lem:complexity}
we can derive from \eqref{eq:bounds2} that
\begin{equation}
\delta_t \leq \bigg( \frac{1}{\delta_0} + \frac{t}{5 M_2^2 \bar{b}} \bigg)^{-1}.
\end{equation}
Thus, if we take $r = \lfloor 5 M_2^2 \bar{b} \rfloor + 1$ we have $\delta_r \leq 1$.
Then, by \eqref{eq:KL_itstep} with $L=[n_k]$ and Pythagoras theorem we have
that, for every $t \in \N$,
\begin{equation*}
\mathsf{KL} ({\as_k}, \M_k(\xs^{r+t}))
=\KL( \Pc_{\Pi_{(k,[n_k])}} (\Xs^{r+t}),\Xs^{r+t}) 
\leq \KL( \Xs^\star,\Xs^{r+t}) =\delta_{r+t} \leq \delta_r \leq 1.
\end{equation*}
Thus, $\max_{k \in [m]} \mathsf{KL} ({\as_k}, \M_k(\xs^{r+t})) \leq 1$
and for what we already saw, 
\begin{equation}
\delta_{r+t}-\delta_{r+t+1} \geq \frac{d^2_{r+t}}{5\bar{b}}.
\end{equation}
In the end the sequence $(\delta_{r+t})_{t \in \N}$ satisfies the two 
assumptions of Lemma~\ref{lem:complexity} with $C = \sqrt{5\bar{b}}$ and 
$M=M_2$. Thus, we can conclude that the smallest $t$ so that $d_{r+t} \leq \varepsilon$
satisfies $t \leq 1 + 10 M_2 \bar{b}/\varepsilon$. Hence 
\begin{equation*}
r+t \leq 2 + \frac{10 M_2 \bar{b}}{\varepsilon} + 5 M_2^2 \bar{b}
\leq 2 + \frac{10 M_2 \bar{b}}{\varepsilon} + \frac{5 M_2^2 \bar{b} \eta}{\varepsilon}
= 2 + \frac{5 M_2 \bar{b}}{\varepsilon}( 2 + M_2\eta).
\qedhere
\end{equation*}
\end{proof}

\medskip

The next two results are based on novel bounds on potentials that imply explicit dependence of constant $M>0$ in the global rate \eqref{eq:KL_global} on the given data: $\as_1,\ldots, \as_m$, $\Cs$ and $\eta$.

\thmBGrateBimarginal*

\begin{proof}
Let $\vs^t_k$, $k\in[m]$, $t\geq0$ be given by Algorithm \ref{alg}. Then from Proposition \ref{prop:primaldual} we have that for every $t\geq0$
\begin{equation}\label{eq:pi_t}
    \xs^{t+1}
    = \exp \Big( - \frac{\Cs}{\eta} + \bigoplus_{k=1}^m \vs^{t+1}_k \Big) 
    \odot \bigotimes_{k=1}^m \as_k,
\end{equation} 
with $\vs_k^0 = 0$ and for $t \ge0$, $k\in[m]$
and $j_k\in[n_k]$,
\begin{equation*}
v_{k,j_k}^{t+1} = \begin{cases}
v_{k,j_k}^{t} + \log(a_{k,j_k})-\log( \M_{k}(\xs^t)_{j_k})  & k=k_t,\,j_k\in L_t,\\
v_{k,j_k}^{t} & \text{otherwise}.
\end{cases}
\end{equation*}
So, to bound $\log\Xs^t$, we will bound $\vs^t_k$, $k\in[m]$. Since $\M_{k_t}(\Xs^{t+1})_{j_{k_t}} = a_{k_t,j_{k_t}}$ for all $j_{k_t}\in L_t$, \eqref{eq:pi_t} implies that
\begin{equation*}
1 = \exp( v^{t+1}_{k_t,j_{k_t}} ) \sum_{j_{-k_t} \in \mathcal{J}_{- k_t}} 
\exp\Big( -\Cs_{(j_{-k_t}, j_{k_t})}/\reg + \sum_{k\neq k_t}v^{t}_{k,j_k}\Big)
\prod_{k \neq k_t} a_{k,j_k},
 \end{equation*}
and, hence,
\begin{equation}\label{eq:vt1}
\exp( -v^{t+1}_{k_t,j_{k_t}} ) =  \sum_{j_{-k_t} \in \mathcal{J}_{- k_t}} \!\!\! \exp\Big( -\Cs_{(j_{-k_t}, j_{k_t})}/\reg + \sum_{k\neq k_t}v^{t}_{k,j_k} \Big) \prod_{k\neq k_t} a_{k,j_k}.
\end{equation} 
So, using \eqref{eq:vstar} we obtain that for every $j_{k_t}\in L_t$
\begin{equation*}
\exp( v^{t+1}_{k_t,j_{k_t}} - v^{\star}_{k_t,j_{k_t}} ) 
=  \dfrac{ \displaystyle \sum_{j_{-k_t} \in \mathcal{J}_{- k_t}} \!\!\!  
\exp\Big( -\Cs_{(j_{-k_t}, j_{k_t})}/\reg + \sum_{k\neq k_t}v^{\star}_{k,j_k} \Big) }{ \displaystyle \sum_{j_{-k_t} \in \mathcal{J}_{- k_t}} \!\!  
\exp\Big( -\Cs_{(j_{-k_t}, j_{k_t})}/\reg + \sum_{k\neq k_t} v^{t}_{k,j_k} \Big) }, 
\end{equation*} 
while 
\begin{equation*}
\exp(v^{\star}_{k_t,j_{k_t}} - v^{t+1}_{k_t,j_{k_t}} ) 
= \dfrac{ \displaystyle \sum_{j_{-k_t} \in \mathcal{J}_{- k_t}} \!\!\!\!\!\!\!  
\exp\Big( -\Cs_{(j_{-k_t}, j_{k_t})}/\reg + \sum_{k\neq k_t}v^{t}_{k,j_k} \Big) }{ \displaystyle \sum_{j_{-k_t} \in \mathcal{J}_{- k_t}} \!\!\!\!\!\!\!  
\exp\Big( -\Cs_{(j_{-k_t}, j_{k_t})}/\reg + \sum_{k\neq k_t}v^{\star}_{k,j_k}\Big) }.
\end{equation*} 
However, since in general, $\alpha_i,\beta_i>0$ implies that $ (\sum_i \alpha_i) / (\sum_i\beta_i) \leq \max_i \alpha_i / \beta_i$, the last two equalities give 
\begin{equation*}
\exp( v^{t+1}_{k_t,j_{k_t}} - v^{\star}_{k_t,j_{k_t}} )  
\leq \max_{j_{-k_t} \in \mathcal{J}_{- k_t}} 
\exp\Big(\sum_{k\neq k_t} (v^{\star}_{k,j_k} -   v^{t}_{k,j_k} )\Big) 
\end{equation*}
and
\begin{equation*}
\exp(v^{\star}_{k_t,j_{k_t}} - v^{t+1}_{k_t,j_{k_t}} ) 
\leq \max_{j_{-k_t} \in \mathcal{J}_{- k_t}} 
\exp\Big( \sum_{k\neq k_t} (v^{t}_{k,j_k} - v^{\star}_{k,j_k} ) \Big).
\end{equation*}
Hence, taking the logarithm we obtain that for every $j_{k_t} \in L_t$,
\begin{equation*}
\abs{v^{t+1}_{k_t,j_{k_t}} - v^{\star}_{k_t,j_{k_t}}}  \leq \max_{j_{-k_t} \in \mathcal{J}_{- k_t}} \Big\lvert \sum_{k\neq k_t} (v^{\star}_{k,j_k}  - v^{t}_{k,j_k} ) \Big\rvert 
\leq \max_{j_{-k_t} \in \mathcal{J}_{- k_t}}  
\sum_{k\neq k_t} \abs{v^{\star}_{k,j_k}  - v^{t}_{k,j_k}}
= \sum_{k\neq k_t} \norm{\vs^{t}_{k} - \vs^{\star}_{k}}_{\infty}.
\end{equation*} 
Therefore, since $v^{t+1}_{k,j_k} = v^{t}_{k,j_k}$ if $k\neq k_t$ or $j_{k_t}\notin L_t$,  
\begin{equation*}
\max\Big\{\norm{\vs^{t+1}_{k_t} - \vs^{\star}_{k_t}}_{\infty}, \sum_{k\neq k_t} \norm{\vs^{t+1}_{k} - \vs^{\star}_{k}}_{\infty} \Big\} \leq \max\Big\{\norm{\vs^{t}_{k_t} - \vs^{\star}_{k_t}}_{\infty}, \sum_{k\neq k_t} \norm{\vs^{t}_{k} - \vs^{\star}_{k}}_{\infty} \Big\},
\end{equation*} 
and, since $m=2$,
\begin{equation*}
\max_{k\in[m]} \norm{\vs^{t+1}_{k} - \vs^{\star}_{k}}_{\infty} \leq \max_{k\in[m]} \norm{\vs^{t}_{k} - \vs^{\star}_{k}}_{\infty},
\end{equation*} 
which implies, recalling that $\vs^0 = 0$, 
that, for all $t\geq0$, $\max_{k\in[m]} \norm{\vs^{t}_{k} - \vs^{\star}_{k}}_{\infty} \leq \max_{k\in[m]}\norm{\vs^\star_k}_{\infty}$. 
Now, in view of \eqref{eq:v_star_bound2} in Lemma~\ref{lm:v_star_bound}, we have
\begin{equation}
\label{eq:20211130d}
\max_{k\in[m]}\norm{\vs^\star_k}_{\infty}\leq \frac{3}{2} \frac{\norm{\Cs}_{\infty}}{\reg}
\end{equation}
and hence, since $\norm{\vs^t_k}_{\infty} \leq \norm{\vs^t_k - \vs^\star_k}_{\infty} + \norm{\vs^\star_{k}}_{\infty}$, $\max_{k \in [m]} \norm{\vs_k^t}_{\infty} \leq 2 \max_{k \in [m]} \norm{\vs_k^\star}_{\infty} \leq 3 \norm{\Cs}_{\infty}/\reg$. In the end, since for every $(\vs_k)_{k \in [m]} \in 
\R^{n_1}\times\dots\times\R^{n_m}$ 
\begin{equation}
\label{eq:2021121c}
\bigg\lVert \bigoplus_{k=1}^m \vs_k \bigg\rVert_{\infty}
= \max_{j \in \mathcal{J}} \bigg\lvert \sum_{k=1}^m v_{k,j_k} \bigg\rvert
\leq \max_{j \in \mathcal{J}} \sum_{k=1}^m \abs{v_{k,j_k}} =
\sum_{k=1}^m \norm{\vs_k}_{\infty} \leq m 
\max_{k \in [m]} \norm{\vs_k}_{\infty},
\end{equation}
we can satisfy the boundedness assumptions on the dual variables of Theorem~\ref{thm:rate_global} with $M = 6 \norm{\Cs}_{\infty}/\reg$ and \eqref{eq:KL_global_bi-marginal} follows from \eqref{eq:KL_global}.
Concerning the iteration complexity, again by \eqref{eq:20211130d}, 
since $\max_{k\in[m]} \norm{\vs^{t}_{k} - \vs^{\star}_{k}}_{\infty} \leq \max_{k\in[m]}\norm{\vs^\star_k}_{\infty}$,
we have $\sum_{k\in[m]} \norm{\vs^{t}_{k} - \vs^{\star}_{k}}_{\infty} \leq  3\norm{\Cs}_\infty /\eta$. So, using $\eta\geq \varepsilon$, \eqref{eq:required_iter_mot} follows directly from \eqref{eq:required_iter} with $M_2 = 3\norm{\Cs}_\infty /\eta$.
\end{proof}

\thmBGrateMultimarginal*

\begin{proof}
Using the same notation as in the previous proof, we first show that 
\begin{equation}\label{eq:20210917}
(\forall\,t\in\N)(\forall\,k\in[m])(\forall\, j_k,\ell_k \in [n_{k}])\quad v^{t}_{k,j_k} - v^{t}_{k,\ell_k}  \leq 2\norm{\Cs}_{\infty} / \eta.
\end{equation}

Indeed, since for $t=0$, $ v^{t}_{k,j_k} - v^{t}_{k,\ell_k}  = 0$, we proceed by induction assuming that \eqref{eq:20210917} holds for $t$ and proving it for $t+1$. 
Noting that for every $k\in[m]$, every $j_{-k} \in \mathcal{J}_{-k}$
and every $j_k, \ell_k \in [n_k]$
\begin{equation*}
\Cs_{(j_{-k},j_k)} -  \Cs_{(j_{-k},\ell_k)}  \leq 2 \norm{\Cs}_{\infty},
\end{equation*}
and that $L_t = [n_{k_t}]$, from \eqref{eq:vt1} we have that 
$v^{t+1}_{k_t,j_{k_t}} - v^{t+1}_{k_t,\ell_{k_t}}\leq 2\norm{\Cs}_{\infty} / \eta$
holds for every $j_{k_t},\ell_{k_t}\in[n_{k_t}]$. On the other hand for every $k\neq k _t$ $v^{t+1}_{k} = v^{t}_{k}$, which using the inductive hypothesis \eqref{eq:20210917}
yields $v^{t+1}_{k,j_k} - v^{t+1}_{k,\ell_k}  \leq 2\norm{\Cs}_{\infty} / \eta$.
In any case \eqref{eq:20210917} holds for $t+1$.

Next, let $t\in\N$ and define the normalizing constants 
$\lambda_1^{t+1},\ldots \lambda_m^{t+1}\in\R$ as 
$\lambda_k^{t+1}:= - \scalarp{\as_k, \vs^{t+1}_k}$ 
for $k\neq k_t$, and $\lambda_{k_t}^{t+1}:= - \sum_{k \neq k_t} \lambda_k^{t+1}$. Then denoting $\us_k^{t+1}:=\vs_k^{t+1} + \lambda^{t+1}_k$, $k\in[m]$, since $\sum_{k\in[m]}\lambda_k^{t+1}=0$, we have 
$\bigoplus_{k=1}^m \vs_k^{t+1} = \bigoplus_{k=1}^m \us_k^{t+1}$ and hence,
recalling \eqref{eq:20211108b},
\begin{equation}\label{eq:us}
 \xs^{t+1}
    = \exp \Big( - \Cs/\eta + \bigoplus_{k=1}^m \us^{t+1}_k \Big) 
    \odot \bigotimes_{k=1}^m \as_k,
\end{equation} 
Moreover, from \eqref{eq:20210917} we have that for every $k\neq k_t$ and 
every $j_{k},\ell_{k} \in [n_{k}]$
\begin{equation*}
u^{t+1}_{k,j_{k}} - u^{t+1}_{k,\ell_k} = v^{t+1}_{k,j_k} - v^{t+1}_{k,\ell_k}  \leq 2\norm{\Cs}_{\infty} / \eta,
\end{equation*} 
which, using $\sum_{j\in[n_k]}a_{k,j}=1$ and the fact that 
the $\lambda_k^{t+1}$'s are chosen so that $\scalarp{\as_k, \us_k^{t+1}}=0$
for all $k\neq k_t$, implies
\begin{equation}
\label{eq:20211201a}
- u^{t+1}_{k,\ell_k} = \sum_{j_k\in[n_k]}a_{k,j_k}(u^{t+1}_{k,j_k} - u^{t+1}_{k,\ell_k} )  
\leq 2\norm{\Cs}_{\infty} / \eta ,\quad \ell_k\in[n_k],
\end{equation} 
and 
\begin{equation}
\label{eq:20211201b}
u^{t+1}_{k,j_k} = \sum_{\ell_k\in[n_k]}a_{k,\ell_k}(u^{t+1}_{k,j_k} - u^{t+1}_{k,\ell_k} ) 
\leq 2\norm{\Cs}_{\infty} / \eta,\quad j_k\in[n_k].
\end{equation} 
Therefore, we have obtained that $\norm{\us^{t+1}_k}_\infty \leq 2\norm{\Cs}_{\infty} / \eta $ for $k\neq k_t$. 
On the other hand, similar to what was done in the proof of Theorem~\ref{thm:rate_global_bi_marginal} we can derive that,
for every $j_{k_t} \in L_t = [n_{k_t}]$,
\begin{equation*}
\exp( -u^{t+1}_{k_t,j_{k_t}} ) =  \sum_{j_{-k_t} \in \mathcal{J}_{- k_t}} \!\!\! \exp\Big( -\Cs_{(j_{-k_t}, j_{k_t})}/\reg + \sum_{k\neq k_t}u^{t}_{k,j_k} \Big) \prod_{k\neq k_t} a_{k,j_k}.
\end{equation*} 
Since, recalling \eqref{eq:20211201a} and \eqref{eq:20211201b}, 
\begin{equation*}
\exp(-(2m-1) \norm{C}_{\infty}/\eta) \leq \exp\Big( -\Cs_{(j_{-k_t}, j_{k_t})}/\reg + \sum_{k\neq k_t}u^{t}_{k,j_k} \Big) \leq 
\exp((2m-1) \norm{C}_{\infty}/\eta),
\end{equation*}
and $\sum_{j_{-k_t} \in \mathcal{J}_{- k_t}}\prod_{k\neq k_t} a_{k,j_k} = 1$, we have
\begin{equation*}
\exp(-(2m-1) \norm{C}_{\infty}/\eta) \leq \exp( -u^{t+1}_{k_t,j_{k_t}} ) \leq 
\exp((2m-1) \norm{C}_{\infty}/\eta).
\end{equation*}
Therefore,
\begin{equation}
\exp\big(\abs{u^{t+1}_{k_t,j_{k_t}}}\big) = 
\max\big\{\exp\big( u^{t+1}_{k_t,j_{k_t}} \big), 
\exp\big( -u^{t+1}_{k_t,j_{k_t}} \big) \big\} \leq \exp((2m-1) \norm{C}_{\infty}/\eta)
\end{equation}
and hence 
\begin{equation*}
\norm{\us^{t+1}_{k_t} }_{\infty} \leq (2m-1)\norm{\Cs}_{\infty} / \eta.
\end{equation*} 
Therefore, we have $\sum_{k\in[m]}\norm{\us^t_k}_\infty  \leq (4m-3) \norm{\Cs}_{\infty} / \eta =:M$ and due to \eqref{eq:us} and the computation \eqref{eq:2021121c}, we can use $M_1 = M$ in Theorem~\ref{thm:rate_global} and get \eqref{eq:KL_global_multi-marginal}.
Concerning iteration complexity, recalling \eqref{eq:v_star_bound}
we have that $\sum_{k \in [m]} \norm{\us_k - \vs_k^\star}_{\infty} \leq 2 (4m-3) \norm{\Cs}_{\infty} / \eta$ and hence 
as done in \eqref{eq:bounds1} we have 
\begin{equation*}
\delta_t \leq \frac{2(4m-3) \norm{\Cs}_\infty}{\eta} d_t.
\end{equation*}
Moreover, since $\norm{\xs^t}_1=1$ and $\bar{b}=1$, 
it follows from \eqref{eq:20211130c}, \eqref{eq:20211201d} and \eqref{eq:20211201e}
that
\begin{equation}\label{eq:bounds_multisin}
\delta_{t} - \delta_{t+1} \geq  \frac{d_t^2}{2}.
\end{equation}
Thus, the statement follows from Lemma~\ref{lem:complexity}.
\end{proof}

\end{document}